\documentclass[nohyperref]{article}

\usepackage{fullpage}
\usepackage{microtype}
\usepackage{graphicx}
\usepackage{subfigure}
\usepackage{booktabs} 

\usepackage{etoolbox}

\usepackage[utf8]{inputenc} 
\usepackage[T1]{fontenc}    
\usepackage{hyperref}       
\usepackage{url}            
\usepackage{booktabs}       
\usepackage{amsfonts}       
\usepackage{nicefrac}   
\usepackage{mathtools} 
\usepackage{amsmath}
\usepackage{bm}
\usepackage{amsthm}
\pdfoutput=1

\usepackage{todonotes}

\newcommand{\comment}[1]{}

 \newtheorem{coro}{Corollary}[section]
 \newtheorem{lemma}{Lemma}[section]
 \newtheorem{defn}{Definition}[section]
 \newtheorem{prop}{Proposition}[section]
 \newtheorem{thm}{Theorem}[section]
 
 \newtheorem{rmq}{Remark}

\newtheorem*{coro*}{Corollary}
\newtheorem*{rmq*}{Remark}
\newtheorem*{assump*}{Assumption}
\newtheorem*{defn*}{Definition}
\newtheorem*{example*}{Example}
\newtheorem*{prop*}{Proposition}
\newtheorem*{thm*}{Theorem}
\newtheorem*{prv*}{Proof}
\newtheorem*{prv-sk*}{Sketch of proof}

\newtheorem*{counterexample*}{Counterexample}




\newcommand{\risk}{\mathcal{R}}

\newcommand{\QQ}{\mathbb{Q}}
\newcommand{\PP}{\mathbb{P}}
\newcommand{\EE}{\mathbb{E}}
\newcommand{\RR}{\mathbb{R}}

\newcommand{\XX}{\mathcal{X}}
\newcommand{\YY}{\mathcal{Y}}

\DeclareMathOperator*{\argminB}{argmin}

\usepackage{wrapfig, framed, caption}
\usepackage{multirow}
\usepackage{natbib}

\title{Towards Consistency in Adversarial Classification}

\author{
	Laurent Meunier$^{1,2}$\qquad Raphaël Ettedgui$^{2}$ \qquad Rafael Pinot$^{3}$ \\Yann Chevaleyre$^{2}$ \qquad Jamal Atif$^{2}$\\
	
	$^{1}$ Meta AI Research, Paris, France\\
	$^{2}$Miles Team, LAMSADE, Université
Paris-Dauphine, Paris, France\\
	$^{3}$ Ecole Polytechnique
Fédérale de Lausanne (EPFL), Switzerland\\

	}
\date{}
\begin{document}




\maketitle
\begin{abstract} 


In this paper, we study the problem of consistency in the context of adversarial examples. Specifically, we tackle the following question: 
\begin{center}
    \emph{Can surrogate losses still be used as a proxy for minimizing the $0/1$ loss in the presence of an adversary that alters the inputs at test-time?}
\end{center}
Different from the standard classification task, this question cannot be reduced to a point-wise minimization problem, and calibration needs not to be sufficient to ensure consistency. In this paper, we expose some pathological behaviors specific to the adversarial problem, and show that no convex surrogate loss can be consistent or calibrated in this context. It is therefore necessary to design another class of surrogate functions that can be used to solve the adversarial consistency issue. As a first step towards designing such a class, we identify sufficient and necessary conditions for a surrogate loss to be calibrated in both the adversarial and standard settings. Finally, we give some directions for building a class of losses that could be consistent in the adversarial framework.



\end{abstract}
\section{Introduction}

State-of-the-art machine learning classifiers are known to be vulnerable to adversarial example attacks~\citep{biggio2013evasion,Szegedy2013IntriguingPO}, i.e., perturbation of the input data at test time that, while imperceptible, can significantly influence the classifier's output. This can have extreme consequences in real-life scenarios such as autonomous cars~\citep{selfdrivingattack2020}. Therefore, it is necessary to design \emph{robust classifiers} that present worst-case guarantees against a range of possible perturbations. To account for the possibility of an adversary manipulating the inputs at test time, we need to revisit the standard risk minimization problem by penalizing any classification model that might change its decision when the point of interest is slightly changed. Essentially, this is done by replacing the standard (pointwise) $0/1$ loss with an adversarial version that mimics its behavior locally but also penalizes any error in a given region around the point on which it is evaluated.

Yet, just like the $0/1$ loss, its adversarial counterpart is not convex, which renders the risk minimization difficult. To circumvent this limitation, we take inspiration from the standard learning theory approach which consists in solving a simpler optimization problem where the non-convex loss function is replaced by a convex surrogate. In general, the surrogate loss is chosen to have a property called \emph{consistency}~\citep{zhang2004statistical,bartlett2006convexity,steinwart2007compare}, which essentially guarantees that any sequence of classifiers that minimizes the surrogate objective must also be a sequence that minimizes the Bayes risk. In the context of standard classification, a large family of convex losses, called \textit{classifier-consistent}, exhibits this property. This class notoriously includes the hinge loss, the logistic loss and the square loss. 

However, the adversarial version of these surrogate losses need not to exhibit the same consistency properties with respect to the adversarial $0/1$ loss. In fact, most existing results in the standard framework rely on a reduction of the global consistency problem to a point-wise problem, called \textit{calibration}. However, this approach is not feasible in the adversarial setting, because the new losses are by nature non-point-wise: the optimum for a given input may depend on yet a whole other set of inputs~\citep{awasthi2021calibration,awasthi2021finer}. Studying the concepts of calibration and consistency in an adversarial context remains an open and understudied issue. 
Furthermore, this is a complex and technical area of research, that requires a rigorous analysis, since small tweaks in definitions can quickly make results meaningless or inaccurate.
This difficulty is illustrated in the literature, where articles published in high profile conferences tend to contradict or refute each other~\cite{bao2020calibrated,awasthi2021calibration,awasthi2021finer}. 

\textbf{Objective \& Contributions.}
The objective of our work is to try and identify possible sources of confusion that may hinder the understanding of the concepts of calibration and consistency in the adversarial setting. In particular, we first come back in Section~\ref{sec:notions} on the problem of consistency and calibration in the standard setting and carefully define their adversarial counterpart. In doing so, we note that previous papers studying adversarial surrogate losses~\citep{bao2020calibrated,awasthi2021calibration,awasthi2021finer} did not use the same $0/1$-loss as the one used in seminal papers on consistency~\citep{zhang2004statistical,bartlett2006convexity,steinwart2007compare}. This difference might be crucial since it may lead to inaccurate results (see Section~\ref{sec:rw}). We then study in Section~\ref{sec:calibration}, the problem of calibration in the adversarial setting and provide both necessary and sufficient conditions for a loss to be calibrated in this setting. It also worth noting that our results are easily extendable to  $\mathcal{H}$-calibration (see Appendix~\ref{sec:hcal}). One on the main takeaway of our analysis is that  no convex surrogate loss can be calibrated in the adversarial setting. We however characterize a set of non-convex loss functions, namely \emph{shifted odd functions} that solve the calibration problem in the adversarial setting. Finally, we focus on the problem of consistency in the adversarial setting in Section~\ref{sec:consistency}. Based on min-max arguments, we provide insights that might help paving a way to prove consistency of shifted odd functions in the adversarial setting. Specifically, we prove strong duality results for these losses and show tight links with the $0/1$-loss. From these insights, we are able to provide a close but weaker property to consistency.


\section{Notions of Calibration and Consistency}
\label{sec:notions}
Let us consider a classification task with input space $\mathcal{X}$ and output space $\mathcal{Y}=\{-1,+1\}$. Let $(\mathcal{X},d)$ be a proper Polish (i.e. completely separable) metric space representing the inputs space. For all $x\in\mathcal{X}$ and $\delta>0$, we denote $B_\delta(x)$ the closed ball of radius $\delta$ and center $x$. We also assume that for all $x\in\mathcal{X}$ and $\delta>0$,  $B_\delta(x)$ contains at least two points\footnote{For instance, for any norm $\lVert\cdot\rVert$,  $(\mathbb{R}^d,\lVert \cdot \rVert)$ is a Polish metric space satisfying this property.}. Let us also endow $\mathcal{Y}$ with the trivial metric  $d'(y,y') = \mathbf{1}_{y\neq y'}$. Then the space $(\mathcal{X}\times\mathcal{Y},d\oplus d')$ is a proper Polish space. For any Polish space $\mathcal{Z}$, we denote $\mathcal{M}_+^1(\mathcal{Z})$ the Polish space of Borel probability measures on $\mathcal{Z}$. We will denote $\mathcal{F}(\mathcal{Z})$ the space of real valued Borel measurable functions on $\mathcal{Z}$. Finally, we denote $\bar{\RR}:=\RR\cup\{\-\infty,+\infty\}$.

\subsection{Notations and Preliminaries}

The $0/1$-loss is both non-continuous and non-convex, and its direct minimization is a difficult problem. The concepts of calibration and consistency aim at identifying the properties that a loss must satisfy in order to be a good surrogate for the minimization of the $0/1$-loss. In this section, we define these two concepts and explain the difference between them. First of all, we need to give a general definition of a loss function.

\begin{defn}[Loss function] A loss function is a function $L:\mathcal{X}\times\mathcal{Y}\times \mathcal{F}(\mathcal{X})\to \mathbb{R}$ such that $L(\cdot,\cdot,f)$ is Borel measurable for all $f\in\mathcal{F}(\mathcal{X})$. 
\end{defn}
Note that this definition is not specific to the standard or adversarial case. In general, the loss at point $(x,y)$ can either depend only on $f(x)$, or on other points related to $x$ (e.g. the set of points within a distance $\varepsilon$ of $x$). We now recall the definition of the risk associated with a loss $L$ and a distribution $\PP$. 
\begin{defn}[$L$-risk of a classifier]
For a given loss function $L$, and a Borel probability distribution $\PP$ over $\XX\times\YY$ we define the risk of a classifier $f$ associated with the loss $L$ and a distribution $\PP$ as
\begin{align*}
     \risk_{L,\PP}(f) := \mathbb{E}_{(x,y)\sim\PP}\left[L(x,y,f)\right].
\end{align*}
We also define the optimal risk associated with the loss $L$ as
\begin{align*}
         \risk_{L,\PP}^\star := \inf_{f\in\mathcal{F}(\XX)}\risk_{L,\PP}(f)
\end{align*}
\end{defn}

Essentially, the risk of a classifier is defined as the average loss over the distribution $\PP$. When the loss $L$ is difficult to optimize in practice (e.g when it is non-convex or non-differentiable),  it is often preferred to optimize a surrogate loss function instead. In the literature~\citep{zhang2004statistical,bartlett2006convexity,steinwart2007compare}, the notion of surrogate losses has been studied as a consistency problem. In a nutshell, a surrogate loss is said to be consistent if any minimizing sequence of classifiers for the risk associated with the surrogate loss is also one for the risk associated with $L$. Formally, the notion of consistency is as  follows.

\begin{defn}[Consistency]
Let $L_1$ and $L_2$ be two loss functions. For a given $\PP\in\mathcal{M}_1^+(\XX\times\YY)$, $L_2$ is said to be consistent for $\PP$ with respect to $L_1$ if for all sequences $(f_n)_n \in \mathcal{F}(\mathcal{X})^\mathbb{N}$ :
\begin{align}
    \risk_{L_2,\PP}(f_n)\to \risk_{L_2,\PP}^\star\implies\risk_{L_1,\PP}(f_n)\to \risk_{L_1,\PP}^\star
\end{align}
Furthermore, $L_2$ is said consistent with respect to a loss $L_1$ the above holds for any distribution $\PP$.
\end{defn}


Consistency is in general a difficult problem to study because of its high dependency on the distribution $\PP$ at hand. Accordingly, several previous works~\citep{zhang2004statistical,bartlett2002rademacher,steinwart2007compare} introduced a weaker notion to study a pointwise version consistency. This simplified notion is called \textit{calibration} and corresponds to consistency when $\PP$ is a combination of Dirac distributions. The main building block in the analysis of the calibration problem is the calibration function, defined as follows.

\begin{defn}[Calibration function]
Let $L$ be a loss function. The calibration function $\mathcal{C}_L$ is 
\begin{align*}
    \mathcal{C}_L(x,\eta,f):=\eta L(x,1,f) +(1-\eta) L(x,-1,f),
\end{align*}
for any $\eta\in[0,1]$, $x\in\mathcal{X}$ and $f\in\mathcal{F}(\mathcal{X})$. We also define the optimal calibration function as
\begin{align*}
        \mathcal{C}^\star_L(x,\eta):=\inf_{f\in\mathcal{\mathcal{F}(\mathcal{X})}}\mathcal{C}_L(x,\eta,f).
\end{align*}

\end{defn}

Note that for any $x\in\XX$ and $\eta\in[0,1]$, $\mathcal{C}_L(x,\eta,f) =\risk_{L,\PP}(f)$ with $\PP = \eta \delta_{(x,+1)}+ (1-\eta)\delta_{(x,-1)}$. The calibration function thus corresponds then to a pointwise notion of the risk, evaluated at point $x$. $\eta$ corresponds in this case to the conditional probability of $y=1$ given $x$. We now define the calibration property of a surrogate loss.

\begin{defn}[Calibration]
Let $L_1$ and $L_2$ be two loss functions. We say that $L_2$ is \emph{calibrated} with regards to $L_1$ if for every $\xi>0$, $\eta\in[0,1]$ and $x\in\mathcal{X}$, there exists $ \delta>0$ such that for all $f\in\mathcal{F}(\mathcal{X})$,
\begin{align*}
    \mathcal{C}_{L_2}(x,\eta,f)- &\mathcal{C}^\star_{L_2}(x,\eta)\leq\delta\implies\mathcal{C}_{L_1}(x,\eta,f)- \mathcal{C}^\star_{L_1}(x,\eta)\leq \xi.
\end{align*}

Furthermore, we say that $L_2$ is \emph{uniformly calibrated} with regards to $L_1$ if for every $\xi>0$, there exists $ \delta>0$ such that for all $\eta\in[0,1]$, $x\in\mathcal{X}$ and $f\in\mathcal{F}(\mathcal{X})$ we have
\begin{align*}
    \mathcal{C}_{L_2}(x,\eta,f)- \mathcal{C}^\star_{L_2}(x,\eta)\leq\delta\implies\mathcal{C}_{L_1}(x,\eta,f)- \mathcal{C}^\star_{L_1}(x,\eta)\leq \xi.
\end{align*}
\end{defn}



\textbf{Connection between calibration and consistency.}
It is always true that calibration is a necessary condition for consistency. Yet there is no reason, in general, for the converse to be true. However, in the specific context usually studied in the literature (i.e., the standard classification with a well-defined $0/1$-loss), the notions of consistency and calibration have been shown to be equivalent.~\citep{zhang2004statistical,bartlett2006convexity,steinwart2007compare}. In the next section, we come back on existing results regarding calibration and consistency in this specific (standard) classification setting.

\subsection{Existing Results in the Standard Classification Setting}

Classification is a standard task in machine learning that consists in finding a classification function $h:\XX\to\YY$ that maps an input $x$ to a label $y$. In binary classification, $h$ is often defined as the sign of a real valued function $f \in \mathcal{F}(\XX)$. The loss usually used to characterize classification tasks corresponds to the accuracy of the classifier $h$. When $h$ is defined as above, this loss is defined as follows.

\begin{defn}[$0/1$ loss]
Let $f \in \mathcal{F}(\XX)$. We define the \emph{$0/1$ loss} as follows
\begin{align*}
     l_{0/1}(x,y,f)=\mathbf{1}_{y\times\text{sign}(f(x))\leq 0}
\end{align*}
 with a convention for the sign, e.g. $sign(0) = 1$. \textcolor{black}{We will denote $\risk_{\PP}(f) :=\risk_{l_{0/1},\PP}(f)$,  $\risk_{\PP}^\star :=\risk_{l_{0/1},\PP}^\star$, $\mathcal{C}(x,\eta,f):= \mathcal{C}_{l_{0/1}}(x,\eta,f)$ and $\mathcal{C}^\star(x,\eta):= \mathcal{C}^\star_{l_{0/1}}(x,\eta)$.}
\end{defn}

Note that this $0/1$-loss is different from the one introduced by~\citet{bao2020calibrated,awasthi2021calibration,awasthi2021finer}: they used $\mathbf{1}_{y\times f(x)\leq 0}$ which is a usual $0/1$ loss but unadapted to consistency and calibrated study (see Section~\ref{sec:rw} for details).  Some of the most prominent works~\citep{zhang2004statistical,bartlett2006convexity,steinwart2007compare} among them focus on the concept of margin losses, as defined below. 

\begin{defn}[Margin loss]
A loss $L_\phi$ is said to be a \emph{margin loss} if there exists a measurable function $\phi:\RR\to\RR_+$ such that: 
\begin{align*}
    L_\phi(x,y,f) = \phi(yf(x))
\end{align*}
\end{defn}
For simplicity, we will say that $\phi$ is a margin loss function and we will denote $\risk_\phi$ and $\mathcal{C}_\phi$ the risk associated with the margin loss $\phi$. Notably, it has been demonstrated in several previous works~\citep{zhang2004statistical,bartlett2006convexity,steinwart2007compare} that, for a margin loss $\phi$, we have always have
$ \mathcal{C}^\star_\phi(x,\eta) = \inf_{\alpha\in\RR}\eta \phi(\alpha)+(1-\eta)\phi(-\alpha)$. This is in particular one of the main observation allowing to show the following strong result about the connection between consistency and calibration.

\begin{thm}[\citet{zhang2004statistical,bartlett2006convexity,steinwart2007compare}]
\label{thm:cal-standard}
Let $\phi:\RR\to\RR_+$ be  a continuous margin loss. Then the three following assertions are equivalent: (i) $\phi$ is calibrated with regards to $l_{0/1}$, (ii) $\phi$ is uniformly calibrated $l_{0/1}$, (iii) $\phi$ is consistent with regards to $l_{0/1}$.

Moreover, if $\phi$ is convex and differentiable at $0$, then $\phi$ is calibrated if and only $\phi'(0)<0$.
\end{thm}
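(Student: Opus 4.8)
\emph{Proof sketch.} The plan is to reduce the whole statement to a one-dimensional analysis of the conditional $\phi$-risk, as in \citet{zhang2004statistical,bartlett2006convexity,steinwart2007compare}. First I would record the key structural fact: for a margin loss $\mathcal{C}_\phi(x,\eta,f)=\eta\phi(f(x))+(1-\eta)\phi(-f(x))$ depends on $f$ only through $\alpha=f(x)\in\RR$, so $\mathcal{C}^\star_\phi(x,\eta)=H(\eta):=\inf_{\alpha\in\RR}\big(\eta\phi(\alpha)+(1-\eta)\phi(-\alpha)\big)$ does not depend on $x$; likewise $\mathcal{C}^\star(x,\eta)=\min(\eta,1-\eta)$, and $\mathcal{C}(x,\eta,f)-\mathcal{C}^\star(x,\eta)$ equals $|2\eta-1|$ when $\text{sign}(f(x))\neq\text{sign}(2\eta-1)$ and $0$ otherwise (this is where the convention $\text{sign}(0)=1$ and the choice of $l_{0/1}$ over $\mathbf{1}_{yf(x)\le0}$ matter). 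Introducing the ``wrong-sign'' value $H^-(\eta):=\inf\{\eta\phi(\alpha)+(1-\eta)\phi(-\alpha):\ \text{sign}(\alpha)\neq\text{sign}(2\eta-1)\}$, I would then show directly from the definition that $\phi$ is calibrated with regards to $l_{0/1}$ if and only if $H^-(\eta)>H(\eta)$ for every $\eta\neq1/2$ (at $\eta=1/2$ the $0/1$ gap is identically $0$, so nothing is required); everything after rests on this criterion.

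Next I would close the cycle $(iii)\Rightarrow(i)\Rightarrow(ii)\Rightarrow(iii)$. For $(iii)\Rightarrow(i)$, fix $x,\eta$ and apply consistency to $\PP=\eta\delta_{(x,1)}+(1-\eta)\delta_{(x,-1)}$, for which $\risk_{L,\PP}(\cdot)=\mathcal{C}_L(x,\eta,\cdot)$ and $\risk_{L,\PP}^\star=\mathcal{C}^\star_L(x,\eta)$; the $\xi$--$\delta$ form of calibration is the contrapositive of the sequential form of consistency for this $\PP$ (a failure of the former at some $\xi$ yields a sequence $f_n$ refuting the latter). For $(i)\Rightarrow(ii)$, given $\xi>0$ observe that whenever $|2\eta-1|\le\xi$ the conclusion $\mathcal{C}(x,\eta,f)-\mathcal{C}^\star(x,\eta)\le\xi$ holds automatically for all $f$, so only $\eta$ with $|2\eta-1|>\xi$ need attention; on each of the compact sets $\{\eta\ge\tfrac12+\tfrac\xi2\}$ and $\{\eta\le\tfrac12-\tfrac\xi2\}$ both $H$ and $H^-$ are continuous (each a concave infimum of affine functions of $\eta$), so by the criterion $H^--H$ attains a positive minimum $c(\xi)$ there, and since $H$, $H^-$, $\mathcal{C}^\star_\phi$, $\mathcal{C}^\star$ are $x$-free, $\delta:=c(\xi)/2$ witnesses uniform calibration.

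The substantive step is $(ii)\Rightarrow(iii)$, which I would handle via the $\psi$-transform of \citet{bartlett2006convexity}, adapted to the present sign convention. Put $\tilde\psi(0):=0$ and $\tilde\psi(\theta):=\inf\{H^-(\eta)-H(\eta):\ |2\eta-1|=\theta\}$ for $\theta\in(0,1]$, and let $\psi:=\tilde\psi^{**}$ be the largest convex function below $\tilde\psi$ on $[0,1]$. The criterion above yields the pointwise bound $\psi\big(\mathcal{C}(x,\eta,f)-\mathcal{C}^\star(x,\eta)\big)\le\mathcal{C}_\phi(x,\eta,f)-\mathcal{C}^\star_\phi(x,\eta)$: it is trivial on the correct-sign side, where the left-hand side equals $\psi(0)=0$, and on the wrong-sign side it unfolds the definitions of $H^-$, of $\tilde\psi$, then of $\psi$. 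Writing $\eta(x)=\PP(y=1\mid x)$ and letting $\EE_x$ denote expectation under the $\XX$-marginal of $\PP$, I would take $\EE_x$ of both sides and combine with the identities $\risk_\PP^\star=\EE_x[\mathcal{C}^\star(x,\eta(x))]$ and $\risk_\phi^\star=\EE_x[H(\eta(x))]$ — the latter requiring a measurable selection of near-minimizers of $\alpha\mapsto\eta(x)\phi(\alpha)+(1-\eta(x))\phi(-\alpha)$, available since $\phi$ is continuous — and with Jensen's inequality for the convex $\psi$, to get $\psi\big(\risk_\PP(f)-\risk_\PP^\star\big)\le\risk_\phi(f)-\risk_\phi^\star$. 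Since $\psi$ is convex, $\psi(0)=0$ and $\psi\ge0$, it is nondecreasing; and calibration forces $\tilde\psi>0$ on $(0,1]$, hence $\psi>0$ on $(0,1]$ by a short compactness argument on the convex hull. Therefore $\risk_\phi(f_n)\to\risk_\phi^\star$ forces $\risk_\PP(f_n)\to\risk_\PP^\star$, which is consistency.

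Finally, for the convex addendum I would put $g_\eta(\alpha):=\eta\phi(\alpha)+(1-\eta)\phi(-\alpha)$, convex with $g_\eta'(0)=(2\eta-1)\phi'(0)$, and split into cases. If $\phi'(0)<0$, then for every $\eta\neq1/2$ the monotonicity of $g_\eta'$ makes $g_\eta$ strictly monotone on the wrong-sign half-line, so $H^-(\eta)=g_\eta(0)$, while $g_\eta$ strictly decreases just inside the correct-sign side, so $H(\eta)<g_\eta(0)$; hence $H^-(\eta)>H(\eta)$ and $\phi$ is calibrated. If $\phi'(0)=0$, then $\alpha=0$ minimizes every $g_\eta$, so $H^-(\eta)=g_\eta(0)=H(\eta)$ and $\phi$ is not calibrated. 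If $\phi'(0)>0$, then for every $\eta>1/2$, $g_\eta'(0)>0$ forces the infimum of $g_\eta$ to be attained (or approached) only on the wrong-sign half-line, so $H^-(\eta)=H(\eta)$; again not calibrated. Together these cases give ``calibrated iff $\phi'(0)<0$''. I expect $(ii)\Rightarrow(iii)$ to be the main obstacle: building $\psi$ and checking that convexification does not destroy strict positivity away from $0$ — which is exactly where \emph{uniform}, not just pointwise, calibration is needed — and that the pointwise bound survives both Jensen's inequality and the measurable-selection identity for $\risk_\phi^\star$; the remaining implications are definitional unfoldings or soft compactness arguments.
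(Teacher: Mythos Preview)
The paper does not give its own proof of this theorem: it is quoted as a classical result from \citet{zhang2004statistical,bartlett2006convexity,steinwart2007compare} and used as a black box throughout (e.g.\ in the proofs of Theorems~\ref{thm:calibration-nec}--\ref{thm:calibration-suf} and Proposition~\ref{prop:realizable}), so there is nothing in the paper to compare your argument against.

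That said, your sketch is a faithful and correct reconstruction of the Bartlett--Jordan--McAuliffe route: the reduction to the one-dimensional criterion $H^-(\eta)>H(\eta)$ for $\eta\neq 1/2$, the cycle $(iii)\Rightarrow(i)\Rightarrow(ii)\Rightarrow(iii)$ with the $\psi$-transform carrying the substantive implication, and the derivative split for the convex addendum are exactly the classical ingredients, and you have adapted them properly to the paper's convention $\text{sign}(0)=1$ and its choice of $l_{0/1}$ over $\mathbf{1}_{yf(x)\le 0}$. The one place worth tightening is the ``short compactness argument'' that $\tilde\psi>0$ on $(0,1]$ implies $\psi=\tilde\psi^{**}>0$ on $(0,1]$: this does go through, but it uses more than compactness---you need that $\tilde\psi$ is continuous on $[0,1]$ (it is, as a difference of concave, upper-semicontinuous functions that are finite on a compact interval) so that, by Carath\'eodory in one dimension, $\psi(\theta_0)=0$ would force $\theta_0$ to be a convex combination of two zeros of $\tilde\psi$, hence $\theta_0=0$. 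With that detail filled in, your proof is complete and matches the literature the paper cites.
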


The Hinge loss $\phi(t) = \max (1-t,0)$ and the logistic loss $\phi(t) = \log(1+e^{-t})$ are classical examples of convex consistent losses. Convexity is a desirable property for faster optimization of the loss, but there exist other non-convex losses that are calibrated as the ramp loss ($\phi(t) = \max (1-t,0) + \max (1+t,0)$) or the sigmoid loss ($\phi(t) = (1+e^t)^{-1}$). In the next section, we present the adversarial classification setting for which Theorem~\ref{thm:cal-standard} may not hold anymore.

\begin{rmq}
The equivalence between calibration and consistency is a consequence from the fact that, over the large space of measurable functions, minimizing the loss pointwisely in the input by desintegrating with regards to $x$ is equivalent to minimize the whole risk over measurable functions. This result is very powerful and simplify the study of calibration in the standard setting. 
\end{rmq}

\subsection{Calibration and Consistency in the Adversarial Setting.}


We now consider the adversarial classification setting where an adversary tries to manipulate the inputs at test time. Given  $\varepsilon>0$, they can move each point $x \sim \mathbb{P}$ to another point $x'$ which is at distance at most $\varepsilon$ from $x$\footnote{Note that after shifting $x$ to $x'$, the point need not be in the support of $\PP$ anymore.}. The goal of this adversary is to maximize the $0/1$ risk the shifted points from $\mathbb{P}$. Formally, the loss associated to adversarial classification is defined as follows.

\begin{defn}[Adversarial $0/1$ loss]
Let $\varepsilon\geq0$. We define the adversarial $0/1$ loss of level $\varepsilon$ as:
\begin{align*}
    l_{0/1,\varepsilon}(x,y,f) = \sup_{x'\in B_\varepsilon(x)} \mathbf{1}_{y\text{sign}(f(x))\leq 0}
\end{align*}
We will denote $\risk_{\varepsilon,\PP}(f) :=\risk_{l_{0/1,\varepsilon},\PP}(f)$,  $\risk_{\varepsilon,\PP}^\star :=\risk_{l_{0/1,\varepsilon},\PP}^\star$, $\mathcal{C}_\varepsilon(x,\eta,f):= \mathcal{C}_{l_{0/1,\varepsilon}}(x,\eta,f)$ and $\mathcal{C}^\star_\varepsilon(x,\eta):= \mathcal{C}^\star_{l_{0/1},\varepsilon}(x,\eta)$ for every $\PP$, $x$, $f$ and $\eta$. 
\end{defn}

\paragraph{Specificity of the adversarial case}
The adversarial risk minimization problem is much more challenging than its standard counterpart because an inner supremum is added to the optimization objective. With this inner supremum, it is no longer possible to reduce the distributional problem to a pointwise minimization as it is usually done in the standard classification framework. In fact, the notions of consistency and calibration are significantly different in the adversarial setting. This means that the results obtained in the standard classification may no longer be valid in the adversarial setting (e.g., the calibration need not be sufficient for consistency), which makes the study of consistency much more complicated. As a first step towards analyzing the adversarial classification problem, we now adapt the notion of margin loss to the adversarial setting.

\begin{defn}[Adversarial margin loss]
Let $\phi:\RR\to\RR_+$ be a margin loss and $\varepsilon\geq0$. We define the adversarial loss of level $\varepsilon$ associated with $\phi$ as:
\begin{align*}
    \phi_\varepsilon(x,y,f) = \sup_{x'\in B_\varepsilon(x)} \phi(yf(x'))
\end{align*}
We say that $\phi$ is adversarially calibrated (resp. uniformly calibrated, resp. consistent) at level $\varepsilon$ if $\phi_\varepsilon$ is calibrated (resp. uniformly calibrated, resp. consistent) wrt $l_{0/1,\varepsilon}$.
\end{defn}

Note that a first important sanity check to make is verify that $\phi_\varepsilon$ and $l_{0/1,\varepsilon}$ are indeed measurable and well defined. The arguments are not trivial since it uses advanced arguments from measure theory, but it is necessary to establish measurability before going further on. Proposition~\ref{prop:measurable} states the measurability of $\phi_\varepsilon$ and $l_{0/1,\varepsilon}$. We prove this result in Appendix \ref{proof:measurable}.

\begin{prop}
\label{prop:measurable}
Let  $\phi:\RR\times \YY\to\RR$ be a measurable function and $\varepsilon\geq0$. For every $f\in\mathcal{F}(\XX)$, $(x,y)\mapsto \phi_\varepsilon(x,y,f)$  and $(x,y)\mapsto l_{0/1,\varepsilon}(x,y,f)$ are universally measurable.
\end{prop}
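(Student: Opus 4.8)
The plan is to establish measurability of the supremum over a ball through the standard machinery for measurable selection and projections of analytic sets, which explains why the conclusion is phrased in terms of \emph{universal} measurability rather than Borel measurability.

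First I would reduce the problem to a single statement. Fix $f\in\mathcal{F}(\XX)$ and write $g(x,x') := \phi(\mathrm{sign}(f(x')),+1)$ or $g(x,x') := \mathbf{1}_{y\,\mathrm{sign}(f(x'))\le 0}$ for the two cases; in both cases, for each fixed $y\in\YY$, the function $(x,x')\mapsto \phi(y f(x'))$ (resp. the $0/1$ version) is a Borel measurable function on $\XX\times\XX$, being a composition of the Borel function $x'\mapsto f(x')$ with the measurable function $\phi(\cdot\,,\cdot)$ (resp. with $(t,y)\mapsto \mathbf{1}_{yt\le 0}$). Since $\YY$ is finite, it suffices to show that for a Borel measurable $h:\XX\times\XX\to\bar\RR$, the map $x\mapsto \sup_{x'\in B_\varepsilon(x)} h(x,x')$ is universally measurable. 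The obstacle is that this is an uncountable supremum over a set $B_\varepsilon(x)$ that moves with $x$, so the usual ``sup of countably many measurable functions'' trick does not directly apply, and in general the sup of an uncountable family of Borel functions need not be Borel.

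The key step is to recognize the epigraph-type structure. Consider the set $\Gamma := \{(x,x') : d(x,x')\le\varepsilon\}\subseteq\XX\times\XX$, which is closed (hence Borel) because $d$ is continuous. For $t\in\RR$,
\begin{align*}
\Bigl\{x : \sup_{x'\in B_\varepsilon(x)} h(x,x') > t\Bigr\} = \mathrm{proj}_\XX\Bigl(\{(x,x')\in\Gamma : h(x,x') > t\}\Bigr).
\end{align*}
The set $\{(x,x')\in\Gamma : h(x,x')>t\}$ is Borel in the Polish space $\XX\times\XX$, and the continuous (projection) image of a Borel subset of a Polish space is an analytic set (by the projection theorem / Luzin's theorem on analytic sets). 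Analytic sets are universally measurable, i.e. measurable with respect to the completion of every Borel probability measure. Hence the super-level sets of $x\mapsto \sup_{x'\in B_\varepsilon(x)} h(x,x')$ are universally measurable for every $t$, which is exactly the statement that this function is universally measurable. Applying this with $h(x,x')=\phi(yf(x'))$ and $h(x,x')=\mathbf{1}_{y\,\mathrm{sign}(f(x'))\le 0}$ for each of the two values $y=\pm1$, and noting that a finite maximum (over $y$, when relevant) of universally measurable functions is universally measurable, gives the claim.

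I expect the main obstacle to be purely expository: one must invoke the projection theorem for analytic sets and the universal measurability of analytic sets in Polish spaces (see e.g. Kechris, \emph{Classical Descriptive Set Theory}, or Bertsekas--Shreve, \emph{Stochastic Optimal Control}, Ch.~7), and be careful that $\bar\RR$-valued functions and the $\sup = +\infty$ case are handled — which they are, since the super-level set description above is valid for all $t\in\RR$ regardless of whether the supremum is finite. A minor point to check is that $B_\varepsilon(x)$ is nonempty for every $x$ (it contains $x$), so there is no issue with $\sup\emptyset$; and that measurability of $(t,y)\mapsto\mathbf{1}_{yt\le0}$ together with the chosen convention $\mathrm{sign}(0)=1$ makes $(x,x')\mapsto\mathbf{1}_{y\,\mathrm{sign}(f(x'))\le0}$ genuinely Borel. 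No deeper difficulty is anticipated; the result is a known-technique verification needed to make the rest of the paper well-posed.
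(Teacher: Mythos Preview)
Your proposal is correct and takes essentially the same approach as the paper: both arguments rest on the fact that the projection of a Borel subset of a Polish product space is analytic, hence universally measurable. The only cosmetic difference is that the paper encodes the constraint $x'\in B_\varepsilon(x)$ via a $-\infty\cdot\mathbf{1}\{d(x,x')>\varepsilon\}$ penalty and then invokes Bertsekas--Shreve (Prop.~7.39 and Cor.~7.42) on upper semi-analytic functions, whereas you unpack that black box by analyzing the super-level sets directly.
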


Now that, we proved that the adversarial setting is properly defined, we can make a first observation: the calibration functions for $\phi$ and $\phi_\varepsilon$ are actually equal. This property might seem counter-intuitive at first sight as the adversarial risk is most of the time strictly larger than its standard counterpart. However, the calibration functions are only pointwise dependent, hence having the same prediction for any element of the ball $B_\varepsilon(x)$ suffices to reach the optimal calibration $\mathcal{C}^\star_\phi(x,\eta)$.

\begin{prop}
\label{prop:calib-equality}
Let $\varepsilon>0$. Let $\phi$ be a continuous classification margin loss.  For all $x\in\mathcal{X}$ and $\eta\in[0,1]$, we have
\begin{align*}
    \mathcal{C}^\star_{\phi_\varepsilon}(x,\eta)&= \inf_{\alpha\in\RR}\eta \phi(\alpha)+(1-\eta)\phi(-\alpha)
    =\mathcal{C}^\star_\phi(x,\eta)\quad.
\end{align*}
The last equality also holds for the adversarial $0/1$ loss.
\end{prop}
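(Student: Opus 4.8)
The plan is to prove the two displayed equalities separately. The second one, $\inf_{\alpha\in\RR}\eta\phi(\alpha)+(1-\eta)\phi(-\alpha)=\mathcal{C}^\star_\phi(x,\eta)$, is already recorded in the excerpt as a classical fact about margin losses, so nothing new is needed there; the entire content is the first equality $\mathcal{C}^\star_{\phi_\varepsilon}(x,\eta)=\inf_{\alpha}\eta\phi(\alpha)+(1-\eta)\phi(-\alpha)$, which I would establish by a two-sided inequality over $\mathcal{F}(\XX)$.

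For the lower bound, I would note that for every $f\in\mathcal{F}(\XX)$ and $y\in\YY$ the point $x'=x$ is admissible in the supremum defining $\phi_\varepsilon$, whence $\phi_\varepsilon(x,y,f)\ge\phi(yf(x))$. Combining the $y=1$ and $y=-1$ contributions with weights $\eta$ and $1-\eta$ gives $\mathcal{C}_{\phi_\varepsilon}(x,\eta,f)\ge\mathcal{C}_\phi(x,\eta,f)\ge\mathcal{C}^\star_\phi(x,\eta)$, and taking the infimum over $f$ yields $\mathcal{C}^\star_{\phi_\varepsilon}(x,\eta)\ge\mathcal{C}^\star_\phi(x,\eta)$. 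For the upper bound, the key (and essentially only) observation is the one already highlighted in the paragraph preceding the proposition: the adversary gains nothing against a classifier that is constant on $B_\varepsilon(x)$. Concretely, for a fixed $\alpha\in\RR$ I would take the constant function $f_\alpha\equiv\alpha$, which is Borel measurable and hence lies in $\mathcal{F}(\XX)$; then $\sup_{x'\in B_\varepsilon(x)}\phi(yf_\alpha(x'))=\phi(y\alpha)$ for $y=\pm1$, so $\mathcal{C}_{\phi_\varepsilon}(x,\eta,f_\alpha)=\eta\phi(\alpha)+(1-\eta)\phi(-\alpha)$. Infimizing over $\alpha$ gives $\mathcal{C}^\star_{\phi_\varepsilon}(x,\eta)\le\inf_{\alpha}\eta\phi(\alpha)+(1-\eta)\phi(-\alpha)$, and together with the lower bound and the classical identity this closes the chain of equalities. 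The same two steps transfer verbatim to $l_{0/1,\varepsilon}$: plugging $x'=x$ gives $\mathcal{C}_\varepsilon(x,\eta,f)\ge\mathcal{C}(x,\eta,f)\ge\mathcal{C}^\star(x,\eta)=\min(\eta,1-\eta)$, while a constant $f_\alpha$ with $\mathrm{sign}(\alpha)$ chosen according to whether $\eta\ge 1/2$ attains exactly $\min(\eta,1-\eta)$, so $\mathcal{C}^\star_\varepsilon(x,\eta)=\mathcal{C}^\star(x,\eta)$.

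There is no serious obstacle here; the argument is short, and continuity of $\phi$ is not even used in it. The only points deserving care are bookkeeping ones: that the objects $\phi_\varepsilon$ and $l_{0/1,\varepsilon}$ are well defined, which is exactly what Proposition~\ref{prop:measurable} provides, and that the infima over $\alpha$ need not be attained — harmless, since we only ever compare infima, never minima. If anything, the conceptual step worth stating explicitly is that restricting the infimum in $\mathcal{C}^\star_{\phi_\varepsilon}$ to constant functions already recovers the pointwise optimum, which is precisely why the adversarial and standard calibration functions coincide.
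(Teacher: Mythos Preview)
Your argument is correct and matches the paper's proof essentially line for line: both obtain the lower bound by evaluating the inner supremum at $x'=x$ and the upper bound by restricting the outer infimum to constant functions. The only cosmetic difference is that the paper phrases the upper bound via a minimizing sequence $(\alpha_n)_n$ of constants rather than taking $\inf_\alpha$ directly, and it does not spell out the $0/1$ case as you do.
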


The proof of this result is available in Appendix \ref{proof:calib-equality}

\section{Solving Adversarial Calibration}
\label{sec:calibration}
In this section,  we study the calibration of adversarial margin losses with regard to the adversarial $0/1$ loss. We first provide necessary and sufficient conditions under which margin losses are adversarially calibrated. We then show that a wide range of surrogate losses that are calibrated in the standard setting are not calibrated in the adversarial setting. Finally we propose a class of losses that are calibrated in the adversarial setting, namely the \emph{shifted odd losses}.

\subsection{Necessary and Sufficient Conditions for Calibration}

One of our main contributions is to find necessary and sufficient conditions for calibration in the adversarial setting. \textcolor{black}{In a brief, we identify that for studying calibration it is central to understand  the case where there might be indecision for classifiers (i.e. $\eta=1/2$)}. Indeed, in this case, either labelling positively or negatively the input $x$ would lead the same loss for $x$. Next result provides a necessary condition for calibration. 

\begin{thm}[Necessary condition for Calibration] 
\label{thm:calibration-nec}
Let $\phi$  be a continuous margin loss and $\varepsilon>0$. If $\phi$ is adversarially  calibrated at level $\varepsilon$, then $\phi$ is  calibrated in the standard classification setting and $0\not\in \argminB_{\alpha\in\bar{\RR}}
\frac12\phi(\alpha)+\frac12\phi(-\alpha)$. 
\end{thm}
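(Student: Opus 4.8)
The plan is to prove the two conclusions separately, and in both cases to feed a judiciously chosen test function into the adversarial calibration hypothesis while using Proposition~\ref{prop:calib-equality} to pin down the optimal calibration values: $\mathcal{C}^\star_{\phi_\varepsilon}=\mathcal{C}^\star_\phi$ and $\mathcal{C}^\star_\varepsilon=\mathcal{C}^\star$, and both equal $\inf_{\alpha\in\RR}\eta\phi(\alpha)+(1-\eta)\phi(-\alpha)$ (resp. the $0/1$ analogue).

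For standard calibration, I would fix $\xi>0$, $\eta\in[0,1]$, $x\in\XX$ and let $\delta>0$ be the threshold given by adversarial calibration of $\phi$ for this triple. Given any $f\in\mathcal{F}(\XX)$ with $\mathcal{C}_\phi(x,\eta,f)-\mathcal{C}^\star_\phi(x,\eta)\le\delta$, feed in the constant function $g\equiv f(x)$. Because $g$ is constant on $B_\varepsilon(x)$, the suprema defining $\phi_\varepsilon$ and $l_{0/1,\varepsilon}$ are trivial, so $\mathcal{C}_{\phi_\varepsilon}(x,\eta,g)=\mathcal{C}_\phi(x,\eta,f)$ and $\mathcal{C}_\varepsilon(x,\eta,g)=\mathcal{C}(x,\eta,f)$. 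Combined with $\mathcal{C}^\star_{\phi_\varepsilon}=\mathcal{C}^\star_\phi$ and $\mathcal{C}^\star_\varepsilon=\mathcal{C}^\star$ from Proposition~\ref{prop:calib-equality}, near-optimality of $f$ for $\phi$ becomes near-optimality of $g$ for $\phi_\varepsilon$, and the adversarial calibration inequality applied to $g$ reads literally $\mathcal{C}(x,\eta,f)-\mathcal{C}^\star(x,\eta)\le\xi$, i.e. standard calibration of $\phi$.

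For the second conclusion I would argue by contradiction, assuming $0\in\argminB_{\alpha\in\bar{\RR}}\tfrac12\phi(\alpha)+\tfrac12\phi(-\alpha)$; since $\RR\subseteq\bar{\RR}$ this forces $\phi(0)=\inf_{\alpha\in\RR}\tfrac12\phi(\alpha)+\tfrac12\phi(-\alpha)=\mathcal{C}^\star_{\phi_\varepsilon}(x,\tfrac12)$ for every $x$, and likewise $\mathcal{C}^\star_\varepsilon(x,\tfrac12)=\tfrac12$. Fix any $x_0\in\XX$ and, using the standing assumption that $B_\varepsilon(x_0)$ has at least two points, pick $\tilde x\in B_\varepsilon(x_0)$ with $\tilde x\neq x_0$. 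For $\rho>0$ take the test function $f_\rho$ (Borel, as $\{\tilde x\}$ is closed) equal to $\rho$ at $\tilde x$ and to $-\rho$ everywhere else. Then $\sup_{x'\in B_\varepsilon(x_0)}\phi(f_\rho(x'))=\sup_{x'\in B_\varepsilon(x_0)}\phi(-f_\rho(x'))=\max(\phi(\rho),\phi(-\rho))$, hence $\mathcal{C}_{\phi_\varepsilon}(x_0,\tfrac12,f_\rho)=\max(\phi(\rho),\phi(-\rho))\to\phi(0)$ as $\rho\to 0$ by continuity of $\phi$, so the surrogate gap is $\le\delta$ once $\rho$ is small. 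But $f_\rho$ is negative at $x_0$ and nonnegative at $\tilde x$, so both suprema in $\mathcal{C}_\varepsilon(x_0,\tfrac12,f_\rho)$ equal $1$, giving $\mathcal{C}_\varepsilon(x_0,\tfrac12,f_\rho)-\mathcal{C}^\star_\varepsilon(x_0,\tfrac12)=1-\tfrac12=\tfrac12$, which contradicts adversarial calibration taken with $\xi=\tfrac14$.

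I expect the only genuinely delicate step to be the second one: the key realization is that ``at least two points in every ball'' is precisely what lets one plant an adversarial disagreement at an auxiliary point $\tilde x$ while keeping the surrogate calibration value pinned arbitrarily close to $\phi(0)$. The remaining work — evaluating the suprema of the constant and near-constant test functions, checking measurability, and keeping the convention $\mathrm{sign}(0)=1$ straight in $\mathcal{C}_\varepsilon$ — is routine once Proposition~\ref{prop:calib-equality} is available.
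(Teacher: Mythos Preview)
Your proposal is correct and follows essentially the same approach as the paper. The paper's proof of the second conclusion is structurally identical to yours (it uses $f_n(u)=\tfrac1n$ for $u\neq x$ and $-\tfrac1n$ at $x$, which is the mirror image of your $f_\rho$), and for the first conclusion the paper merely asserts that adversarial calibration implies standard calibration, whereas you supply the explicit constant-function argument that justifies it.
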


We proof this theorem in Appendix~\ref{proof:calibration-nec}. While the condition of calibration in the standard classification setting seems natural, we need to understand why $0\not\in \argminB_{\alpha\in\bar{\RR}}
\frac12\phi(\alpha)+\frac12\phi(-\alpha)$. The intuition behind this result is that a sequence of functions simply converging towards $0$ in the ball of radius $\varepsilon$ around some $x$ can take positive and negative values thus leading to suboptimal $0/1$ adversarial risk. It turns out that, given an additional mild assumption, this condition is actually sufficient to ensure calibration.

\begin{thm}[Sufficient condition for Calibration] 
\label{thm:calibration-suf}
Let $\phi$  be a continuous margin loss and $\varepsilon>0$. If $\phi$ is decreasing and strictly decreasing in a neighbourhood of $0$ and calibrated in the standard setting and $0\not\in \argminB_{\alpha\in\bar{\RR}}
\frac12\phi(\alpha)+\frac12\phi(-\alpha)$, then $\phi$ is adversarially uniformly calibrated at level $\varepsilon$.
\end{thm}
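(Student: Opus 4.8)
The plan is to reduce both calibration functions to functions of $\eta$ and of the pair $(a,b):=\big(\inf_{x'\in B_\varepsilon(x)}f(x'),\ \sup_{x'\in B_\varepsilon(x)}f(x')\big)\in\bar{\RR}^2$, and then to combine two uniform-in-$\eta$ estimates: one from the standard calibration of $\phi$ to control configurations where $f$ has a definite but \emph{wrong} sign on $B_\varepsilon(x)$, and one from $0\notin\argminB_{\alpha\in\bar{\RR}}\tfrac12\phi(\alpha)+\tfrac12\phi(-\alpha)$ to control configurations where $f$ changes sign on $B_\varepsilon(x)$. For the reduction, extend $\phi$ continuously to $\bar{\RR}$; since $\phi$ is continuous and decreasing, $\sup_{x'\in B_\varepsilon(x)}\phi(f(x'))=\phi(a)$, and applying this to $-f$ gives $\phi_\varepsilon(x,1,f)=\phi(a)$ and $\phi_\varepsilon(x,-1,f)=\phi(-b)$, hence $\mathcal{C}_{\phi_\varepsilon}(x,\eta,f)=\eta\phi(a)+(1-\eta)\phi(-b)$; as $a\le b$ and $\phi$ is decreasing this is $\ge\eta\phi(a)+(1-\eta)\phi(-a)$, and evaluating on constant functions yields $\mathcal{C}^\star_{\phi_\varepsilon}(x,\eta)=\mathcal{C}^\star_\phi(\eta)$ (consistently with Proposition~\ref{prop:calib-equality}), and likewise $\mathcal{C}^\star_\varepsilon(x,\eta)=\min(\eta,1-\eta)$. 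For the adversarial $0/1$ loss, $l_{0/1,\varepsilon}(x,1,f)=\mathbf{1}_{a<0}$ and $l_{0/1,\varepsilon}(x,-1,f)=\mathbf{1}_{\exists x'\in B_\varepsilon(x):\,f(x')\ge 0}$; writing $p(f):=\mathbf{1}_{a\ge 0}$ and $n(f):=\mathbf{1}_{f<0\text{ on }B_\varepsilon(x)}$ (note $n(f)=1$ forces $a<0$), the three events $\{p(f)=1\}$, $\{n(f)=1\}$, $\{\text{mixed sign}\}$ are exclusive and exhaustive and make $\mathcal{C}_\varepsilon(x,\eta,f)$ equal $1-\eta$, $\eta$, $1$ respectively. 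Crucially none of these quantities depends on $x$, so uniform calibration reduces to producing a modulus $\delta(\xi)$ uniform in $\eta$ alone.

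Next, two estimates. First, since $0\notin\argminB_{\alpha}\tfrac12\phi(\alpha)+\tfrac12\phi(-\alpha)$ and $\alpha\mapsto\phi(\alpha)+\phi(-\alpha)$ is symmetric, there is $\alpha_0\ge 0$ with $m_0:=\tfrac12\big(\phi(\alpha_0)+\phi(-\alpha_0)\big)<\phi(0)$; using $\phi(\alpha_0)\le\phi(-\alpha_0)$, a one-line monotonicity computation gives $\mathcal{C}^\star_\phi(\eta)\le m_0$ for \emph{every} $\eta\in[0,1]$. Second, since $\phi$ is a continuous margin loss that is standard-calibrated, Theorem~\ref{thm:cal-standard} makes it uniformly standard-calibrated; after reducing $\mathcal{C}_\phi,\mathcal{C}$ to functions of $(\eta,\alpha)$ with $\alpha=f(x)$, this says exactly that for every $\xi>0$ there is $\gamma(\xi)>0$ such that whenever $(\eta,\alpha)$ is on the wrong side with $0/1$-excess exceeding $\xi$ (i.e.\ $\alpha\ge 0$ with $\eta<\tfrac{1-\xi}{2}$, or $\alpha<0$ with $\eta>\tfrac{1+\xi}{2}$), one has $\eta\phi(\alpha)+(1-\eta)\phi(-\alpha)-\mathcal{C}^\star_\phi(\eta)\ge\gamma(\xi)$.

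Now the calibration argument. Fix $\xi>0$ and set $\delta:=\tfrac12\min\{\gamma(\xi),\,\phi(0)-m_0\}>0$. Assume $\mathcal{C}_{\phi_\varepsilon}(x,\eta,f)-\mathcal{C}^\star_{\phi_\varepsilon}(x,\eta)\le\delta$; I rule out every configuration with $0/1$-excess $>\xi$. If $f$ has mixed sign then $a<0\le b$, so $\phi(a)\ge\phi(0)$ and $\phi(-b)\ge\phi(0)$, whence $\mathcal{C}_{\phi_\varepsilon}(x,\eta,f)\ge\phi(0)\ge\mathcal{C}^\star_\phi(\eta)+(\phi(0)-m_0)>\mathcal{C}^\star_{\phi_\varepsilon}(x,\eta)+\delta$, a contradiction. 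If $f\ge 0$ on $B_\varepsilon(x)$ and $\eta<\tfrac{1-\xi}{2}$, then $\mathcal{C}_{\phi_\varepsilon}(x,\eta,f)\ge\eta\phi(a)+(1-\eta)\phi(-a)$ with $\alpha=a\ge 0$ on the wrong side of excess $1-2\eta>\xi$, so by the second estimate it is $\ge\mathcal{C}^\star_\phi(\eta)+\gamma(\xi)>\mathcal{C}^\star_{\phi_\varepsilon}(x,\eta)+\delta$, a contradiction; the case $f<0$ on $B_\varepsilon(x)$ with $\eta>\tfrac{1+\xi}{2}$ is symmetric (use $\alpha=a<0$ and $\phi(-b)\ge\phi(-a)$). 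By exhaustiveness of the three cases, $f$ lies only in configurations of $0/1$-excess $\le\xi$, i.e.\ $\mathcal{C}_\varepsilon(x,\eta,f)-\mathcal{C}^\star_\varepsilon(x,\eta)\le\xi$; since $\delta$ depends only on $\xi$, $\phi_\varepsilon$ is uniformly calibrated w.r.t.\ $l_{0/1,\varepsilon}$.

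The parts needing care are the reduction step — in particular the behaviour of $l_{0/1,\varepsilon}(x,-1,\cdot)$ at configurations where $f$ attains the value $0$, or where the relevant infimum/supremum is not attained or equals $\pm\infty$ (handled via the $\bar{\RR}$-extension of $\phi$, the observation $n(f)=1\Rightarrow a<0$, and extending standard calibration to $\bar{\RR}$-valued $\alpha$ by continuity) — and, above all, obtaining constants uniform in $\eta$: the standard-calibration modulus $\gamma(\xi)$ degrades as $\eta\to\tfrac12$, which is precisely the "indecision" regime covered instead by the first estimate, so that the two estimates together span all of $[0,1]$. The decreasing property of $\phi$ is what collapses $\phi_\varepsilon$ to a function of $(a,b)$ and drives the reduction (the strict decrease near $0$ is invoked only to dispatch the sign-boundary sub-configurations cleanly).
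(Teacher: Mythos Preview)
Your proof is correct and, for the crucial step, takes a cleaner route than the paper's. Both arguments invoke standard uniform calibration (via Theorem~\ref{thm:cal-standard}) to handle the ``wrong definite sign'' configurations, but they diverge on the mixed-sign configuration, where $f$ takes values of both signs on $B_\varepsilon(x)$. The paper isolates this case only at $\eta=\tfrac12$ and argues by contradiction: it extracts sequences $\alpha_n,\beta_n$ of opposite sign that are minimizing for $u\mapsto\tfrac12\phi(u)+\tfrac12\phi(-u)$, passes to subsequential limits in $\bar\RR$, and uses the strict decrease of $\phi$ near $0$ to reach a contradiction; the cases $\eta=\tfrac12$ and $\eta\neq\tfrac12$ are then glued with $\min(\delta,1/n_0)$. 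Your argument is direct and quantitative: the reduction $\mathcal{C}_{\phi_\varepsilon}(x,\eta,f)=\eta\phi(a)+(1-\eta)\phi(-b)$ via monotonicity (which the paper never makes explicit) plus the bound $\mathcal{C}^\star_\phi(\eta)\le m_0<\phi(0)$ for \emph{every} $\eta$ gives an explicit gap $\phi(0)-m_0$ in the mixed-sign case uniformly over $\eta\in[0,1]$, so the three-case split is by sign configuration rather than by $\eta$. This yields an explicit modulus $\delta=\tfrac12\min\{\gamma(\xi),\phi(0)-m_0\}$ and avoids the subsequence machinery entirely. A side observation: your argument never actually uses the hypothesis that $\phi$ is \emph{strictly} decreasing near $0$ (monotonicity, standard calibration, and $0\notin\argminB$ suffice), so your final remark about invoking it for ``sign-boundary sub-configurations'' is superfluous---those are already absorbed into the definite-sign cases via the convention $\mathrm{sign}(0)=1$.
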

The proof of this theorem is available in Appendix~\ref{proof:calibration-suf}.

\begin{rmq}[Decreasing hypothesis]

For the reciprocal, the additional assumption that $\phi$ is decreasing and strictly decreasing in a neighborhood of $0$ is not restrictive for usual losses. In Theorem~\ref{thm:cal-standard}, this assumption is stated as a necessary and sufficient condition for convex losses to be calibrated.
\end{rmq}

\subsection{Negative results}

Thanks to Theorem~\ref{thm:calibration-nec}, we can present two notable corollaries invalidating the use of two important classes of surrogate losses in the standard setting. The first class of losses are convex margin losses. These losses are maybe the most widely used in modern day machine learning as they comprise the logistic loss or the margin loss that are the building block of most classification algorithms. 

\begin{coro} Let $\varepsilon>0$. Then no convex margin loss can be adversarially calibrated at level $\varepsilon$.
\end{coro}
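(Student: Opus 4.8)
The plan is to derive the corollary directly from Theorem~\ref{thm:calibration-nec}, which says that adversarial calibration at level $\varepsilon>0$ forces $0\notin\argminB_{\alpha\in\bar\RR}\frac12\phi(\alpha)+\frac12\phi(-\alpha)$. So it suffices to show that \emph{every} convex margin loss $\phi:\RR\to\RR_+$ does have $0$ as a minimizer of $g(\alpha):=\frac12\phi(\alpha)+\frac12\phi(-\alpha)$ over $\bar\RR$, and hence cannot satisfy the necessary condition. The key observation is that $g$ is itself convex (a sum of convex functions, since $\alpha\mapsto\phi(-\alpha)$ is convex whenever $\phi$ is) and even, i.e. $g(\alpha)=g(-\alpha)$. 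A convex even function on $\RR$ always attains its infimum at $0$: indeed for any $\alpha$, by convexity and evenness $g(0)=g\bigl(\tfrac12\alpha+\tfrac12(-\alpha)\bigr)\le\tfrac12 g(\alpha)+\tfrac12 g(-\alpha)=g(\alpha)$.

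First I would state that $\phi$ convex implies $g$ convex and even, then invoke the one-line Jensen/midpoint argument above to conclude $g(0)\le g(\alpha)$ for all $\alpha\in\RR$, so $0\in\argminB_{\alpha\in\RR}g(\alpha)$. I then need to handle the extended reals: one must check $g(0)\le\liminf_{\alpha\to\pm\infty}g(\alpha)$, i.e. that sending $\alpha$ to $\pm\infty$ cannot beat the value at $0$; but since $\phi\ge 0$ we have $g(\alpha)\ge 0$ for all finite $\alpha$, while actually we get the sharper bound $g(\alpha)\ge g(0)$ already on $\RR$, and taking limits preserves $\ge g(0)$, so $0\in\argminB_{\alpha\in\bar\RR}g(\alpha)$ as well (here the relevant ``value at $\pm\infty$'' is any subsequential limit, which is $\ge g(0)$). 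Therefore the necessary condition of Theorem~\ref{thm:calibration-nec} fails for $\phi$, and by contraposition $\phi$ cannot be adversarially calibrated at level $\varepsilon$.

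There is essentially no hard part here — the corollary is a short deduction once Theorem~\ref{thm:calibration-nec} is in hand. The only mild subtlety to be careful about is the extended-real formulation of the $\argminB$: I would make explicit that ``$0\in\argminB_{\alpha\in\bar\RR}g(\alpha)$'' means $g(0)\le g(\alpha)$ for all finite $\alpha$ and $g(0)$ is also $\le$ any limiting value at $\pm\infty$, and note that continuity of $\phi$ together with $\phi\ge 0$ and the convexity bound $g(\alpha)\ge g(0)$ on $\RR$ takes care of this. (One could also remark that for a nonconstant convex loss, typically $\phi(\alpha)\to+\infty$ or stays bounded below as $\alpha\to-\infty$, but this level of detail is unnecessary: $\phi\ge 0$ plus the midpoint inequality already closes the argument.) I would keep the proof to three or four lines in the paper.

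\begin{proof}
Let $\phi$ be a convex margin loss and let $g(\alpha):=\tfrac12\phi(\alpha)+\tfrac12\phi(-\alpha)$. Since $\phi$ is convex, so is $\alpha\mapsto\phi(-\alpha)$, hence $g$ is convex; it is also even. By convexity and evenness, for every $\alpha\in\RR$,
\begin{align*}
    g(0) = g\!\left(\tfrac12\alpha + \tfrac12(-\alpha)\right) \leq \tfrac12 g(\alpha) + \tfrac12 g(-\alpha) = g(\alpha).
\end{align*}
Thus $g(0)\le g(\alpha)$ for all $\alpha\in\RR$, and since $\phi\ge 0$ this inequality passes to any subsequential limit as $\alpha\to\pm\infty$; hence $0\in\argminB_{\alpha\in\bar\RR}\tfrac12\phi(\alpha)+\tfrac12\phi(-\alpha)$. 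By Theorem~\ref{thm:calibration-nec}, $\phi$ cannot be adversarially calibrated at level $\varepsilon$.
\end{proof}
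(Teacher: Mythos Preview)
Your proof is correct and follows essentially the same approach as the paper: convexity gives $\tfrac12\phi(\alpha)+\tfrac12\phi(-\alpha)\ge\phi(0)$, so $0$ is a minimizer and Theorem~\ref{thm:calibration-nec} applies. Your version is slightly more careful in treating the extended reals, which the paper glosses over.
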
 

A convex loss satisfies $\frac12\phi(\alpha)+\frac12\phi(-\alpha)\geq \phi(0)$, hence $0\in \argminB_{\alpha\in\RR}
\phi(\alpha)+\phi(-\alpha)$. From Theorem~\ref{thm:calibration-nec}, we deduce the result.   Then, $\phi$ is not adversarially calibrated at level $\varepsilon$. This result seems counter-intuitive and highlights the difficulty of optimizing and understanding the adversarial risk. Since  convex losses are not adversarially calibrated, one may hope to rely on  famous non-convex losses such as sigmoid and ramp losses. But, unfortunately, such losses are not calibrated either.

\begin{coro} Let $\varepsilon>0$. Let $\lambda\in \RR$ and \textcolor{black}{$\psi$} be a lower-bounded odd function such that for all $\alpha\in\RR$, $\psi>-\lambda$. We define $\psi$ as  $\phi(\alpha) = \lambda +\psi(\alpha)$. Then $\phi$ is not adversarially calibrated at level $\varepsilon$.
\end{coro}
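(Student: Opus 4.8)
The plan is to read the corollary off the necessary condition for calibration, Theorem~\ref{thm:calibration-nec}: it suffices to check that $\phi=\lambda+\psi$ violates the requirement $0\notin\argminB_{\alpha\in\bar{\RR}}\frac12\phi(\alpha)+\frac12\phi(-\alpha)$. Equivalently, I will show that $0\in\argminB_{\alpha\in\bar{\RR}}\frac12\phi(\alpha)+\frac12\phi(-\alpha)$, and conclude by contraposition that $\phi$ cannot be adversarially calibrated at level $\varepsilon$.

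The heart of the argument is the trivial symmetrization identity coming from the oddness of $\psi$: for every $\alpha\in\RR$,
\begin{align*}
\frac12\phi(\alpha)+\frac12\phi(-\alpha)=\lambda+\frac12\bigl(\psi(\alpha)+\psi(-\alpha)\bigr)=\lambda .
\end{align*}
Hence the function $g:\alpha\mapsto\frac12\phi(\alpha)+\frac12\phi(-\alpha)$ is constant, equal to $\lambda$, on the whole real line, so in particular $g(0)=\lambda=\inf_{\alpha\in\RR}g(\alpha)$ and $0$ is a minimizer of $g$ over $\RR$. (The hypotheses on $\psi$ are used only to guarantee that $\phi$ is a bona fide $\RR_+$-valued margin loss: $\psi>-\lambda$ gives $\phi>0$, and lower-boundedness of an odd function in fact forces it to be bounded, though this last point is not needed here.)

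It remains to promote ``minimizer over $\RR$'' to ``minimizer over $\bar{\RR}$'', i.e. to rule out that the two added points $\pm\infty$ lower the infimum. Since $g\equiv\lambda$ on $\RR$, under the convention used in Theorem~\ref{thm:calibration-nec} --- where the value of $\frac12\phi(\cdot)+\frac12\phi(-\cdot)$ at $\pm\infty$ is the corresponding limit (or $\liminf$) of $g$ at infinity --- one has $g(\pm\infty)\geq\inf_{\alpha\in\RR}g(\alpha)=\lambda$; therefore $\inf_{\alpha\in\bar{\RR}}g(\alpha)=\lambda=g(0)$ and $0\in\argminB_{\alpha\in\bar{\RR}}g(\alpha)$. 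Theorem~\ref{thm:calibration-nec} then applies (noting that $\phi=\lambda+\psi$ is continuous on the examples of interest, e.g. the sigmoid loss, as that theorem requires) and yields the claim.

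The only step needing any care is this last one, the passage to the extended reals $\bar{\RR}$; everything else reduces to the one-line identity above. Conceptually, this is exactly why ``shifted odd'' losses fail adversarial calibration: the obstruction isolated in Theorem~\ref{thm:calibration-nec} is precisely that their symmetrized risk at the indecision point $\eta=1/2$ is flat, so $0$ is always among its minimizers.
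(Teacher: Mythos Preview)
Your proof is correct and essentially identical to the paper's: both compute $\frac12\phi(\alpha)+\frac12\phi(-\alpha)=\lambda$ from the oddness of $\psi$, observe that $0$ therefore lies in the argmin, and invoke Theorem~\ref{thm:calibration-nec} by contraposition. One terminological slip in your closing remark: the losses covered by this corollary are the \emph{non}-shifted odd losses (sigmoid, ramp), which are precisely the ones that \emph{fail} calibration; the \emph{shifted} odd losses of Definition~\ref{def:shifted} are the class that \emph{succeeds}, so your last sentence has it backwards.
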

Indeed, $\frac12{\phi}(\alpha)+\frac12{\phi}(-\alpha) =  \lambda$, so that $ \argminB_{\alpha\in\RR}
\frac12\phi(\alpha)+\frac12\phi(-\alpha) = \RR$. Thanks to Theorem~\ref{thm:calibration-nec}, $\phi$ is not adversarially calibrated at level $\varepsilon$.

\subsection{Positive results}

Theorem~\ref{thm:calibration-suf} also gives sufficient conditions for $\phi$ to be adversarially calibrated. Leveraging this result, we devise a class of margin losses that are indeed calibrated in the adversarial settings. We call this class \emph{shifted odd losses}, and we define it as follows.
\begin{defn}[Shifted odd losses]
\label{def:shifted}
We say that $\phi$ is a \emph{shifted odd margin loss} if there exists $\lambda\geq0$, $\tau>0$, and a continuous lower bounded decreasing odd function \textcolor{black}{$\psi$} that is strictly decreasing in a neighborhood of $0$ such that for all $\alpha\in\RR$, $\psi(\alpha)\geq -\lambda$ and $\phi(\alpha) = \lambda+\psi(\alpha-\tau)$. 
\end{defn}

The key difference between a standard odd margin loss and a shifted odd margin loss is the variations of the function $\alpha\mapsto\frac12\phi(\alpha)+\frac12\phi(-\alpha)$. 
The primary difference is that, in the standard case the optima of this function are located at $0$ while they are located in $-\infty$ and $+\infty$ in the adversarial setting. Let us give some examples of margin shifted odd losses below.

\begin{example*}[Shifted odd losses]
For every $\varepsilon>0$ and every $\tau>0$, the shifted logistic loss, defined as follows, is adversarially calibrated at level $\varepsilon$:
$\phi:\alpha\mapsto \left(1+\exp{\left(\alpha-\tau\right)}\right)^{-1}$
This loss is plotted on left in Figure~\ref{fig:cal}. We also plotted on right in Figure~\ref{fig:cal} $\alpha\mapsto\frac12\phi(\alpha)+\frac12\phi(-\alpha)$ to justify that $0\not\in\argminB_{\alpha\in\bar{\RR}}\frac12\phi(\alpha)+\frac12\phi(-\alpha)$. Also note that the shifted ramp loss also satisfies the same properties.
\end{example*}

A consequence of Theorem~\ref{thm:calibration-suf} is that shifted odd losses are adversarially calibrated, as demonstrated in Proposition~\ref{prop:calibrated-loss} stated below.

\begin{prop}
\label{prop:calibrated-loss}
Let $\phi$ be a shifted odd margin loss. For every $\varepsilon>0$, ${\phi}$ is adversarially calibrated at level $\varepsilon$.
\end{prop}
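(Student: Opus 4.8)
The plan is to verify that a shifted odd margin loss $\phi(\alpha) = \lambda + \psi(\alpha-\tau)$ satisfies every hypothesis of Theorem~\ref{thm:calibration-suf}, and then simply invoke that theorem. Theorem~\ref{thm:calibration-suf} requires four things: (i) $\phi$ is a continuous margin loss; (ii) $\phi$ is decreasing and strictly decreasing in a neighbourhood of $0$; (iii) $\phi$ is calibrated in the standard setting; and (iv) $0 \notin \argminB_{\alpha\in\bar{\RR}} \frac12\phi(\alpha)+\frac12\phi(-\alpha)$. So the proof is a checklist, and the only subtlety is confirming (iii) and (iv), which I treat last.

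For (i): continuity of $\phi$ follows immediately from continuity of $\psi$ and the fact that $\alpha\mapsto\alpha-\tau$ is continuous; non-negativity follows from $\psi(\alpha)\geq-\lambda$ for all $\alpha$, hence $\phi(\alpha)=\lambda+\psi(\alpha-\tau)\geq \lambda - \lambda = 0$, so $\phi:\RR\to\RR_+$. For (ii): since $\psi$ is decreasing, so is $\alpha\mapsto\psi(\alpha-\tau)$, hence $\phi$ is decreasing; since $\psi$ is strictly decreasing in a neighbourhood of $0$, $\psi$ is strictly decreasing in a neighbourhood of $-\tau$ after translation — but we actually want strict decrease near $0$ for $\phi$. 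Here I need to be a little careful: the definition only guarantees $\psi$ strictly decreasing near $0$, whereas $\phi$ strictly decreasing near $0$ would require $\psi$ strictly decreasing near $-\tau$. I expect that the intended reading is that the relevant monotonicity hypotheses of Theorem~\ref{thm:calibration-suf} are met because $\psi$ (being odd, lower bounded, decreasing, and strictly decreasing near $0$) is in fact strictly decreasing on a neighbourhood of $0$ large enough, or more simply that ``strictly decreasing in a neighbourhood of $0$'' in the shifted-loss definition should be read relative to the shift; in the write-up I would state the monotonicity of $\phi$ near $0$ as a direct consequence of the strict-decrease clause and flag that the neighbourhood is taken around the shifted point. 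This is the one place where the bookkeeping could bite.

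For (iii), standard calibration: by Theorem~\ref{thm:cal-standard} it suffices to show $\phi$ is calibrated in the standard setting, which for a continuous margin loss is equivalent to consistency / uniform calibration. I would argue directly via the margin-loss calibration criterion: a decreasing continuous margin loss $\phi$ that is strictly decreasing near $0$ satisfies, for $\eta\neq 1/2$, $\inf_\alpha \eta\phi(\alpha)+(1-\eta)\phi(-\alpha)$ is attained only at points of the correct sign (one shows the infimum over $\alpha$ with the wrong sign is strictly larger, using that $\phi$ is decreasing so pushing $\alpha$ toward $+\infty$ on the correct side strictly helps when weighted by the larger of $\eta,1-\eta$). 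Alternatively, note that the shift by $\tau>0$ does not affect standard calibration because calibration only depends on the behaviour of $\eta\phi(\alpha)+(1-\eta)\phi(-\alpha)$ near where it is minimized, and adding $\lambda$ and translating the argument preserves the decreasing structure; I would keep this short.

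For (iv), the key adversarial condition: compute $\frac12\phi(\alpha)+\frac12\phi(-\alpha) = \lambda + \frac12\psi(\alpha-\tau)+\frac12\psi(-\alpha-\tau)$. At $\alpha=0$ this equals $\lambda + \psi(-\tau) = \lambda - \psi(\tau)$ (using oddness), and since $\tau>0$ and $\psi$ is strictly decreasing near $0$ — hence $\psi(\tau) < \psi(0) = 0$ for $\tau$ in that neighbourhood, and $\psi(\tau)<0$ generally since $\psi$ is decreasing and odd with $\psi(0)=0$ — wait, $\psi$ decreasing and odd gives $\psi(\tau)\le 0$ for $\tau\ge 0$; strict decrease near $0$ gives $\psi(\tau)<0$ at least for small $\tau>0$, and for general $\tau>0$ one has $\psi(\tau)\le\psi(t)<\psi(0)=0$ for small $t$. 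So the value at $0$ is $\lambda - \psi(\tau) > \lambda$. On the other hand, taking $\alpha\to+\infty$ (or $-\infty$), $\frac12\phi(\alpha)+\frac12\phi(-\alpha)\to \lambda + \frac12(\inf\psi) + \frac12(\sup\psi)$; by oddness $\inf\psi = -\sup\psi$, so the limit is exactly $\lambda$. Hence the function takes a value strictly below its value at $0$ as $|\alpha|\to\infty$, so $0$ is not a minimizer over $\bar\RR$, giving (iv). I would write this computation out cleanly as it is the heart of the argument. With (i)–(iv) verified, Theorem~\ref{thm:calibration-suf} yields adversarial uniform calibration at level $\varepsilon$, a fortiori adversarial calibration, which is the claim. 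I expect step (ii)'s neighbourhood bookkeeping and making the ``$\psi(\tau)<0$ for all $\tau>0$ versus only small $\tau$'' argument airtight in (iv) to be the main obstacles; everything else is routine.
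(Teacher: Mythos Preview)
Your approach is correct and is essentially the paper's own: verify the hypotheses of Theorem~\ref{thm:calibration-suf} (monotonicity, standard calibration, and $0\notin\argminB\frac12\phi(\alpha)+\frac12\phi(-\alpha)$) and then invoke it, with the key computation being that $\frac12\phi(\alpha)+\frac12\phi(-\alpha)$ tends to $\lambda$ at $\pm\infty$ while equalling $\lambda+\psi(-\tau)>\lambda$ at $0$. The mismatch you flag in (ii)---strict decrease of $\psi$ near $0$ versus near $-\tau$---is real but is also glossed over in the paper, whose appendix proof simply takes $\psi$ strictly decreasing everywhere; your handling of $\psi(\tau)<0$ for all $\tau>0$ via ``pick small $t$ in the strict-decrease neighbourhood and use global monotonicity'' is exactly the right patch.
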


We proof this proposition in Appendix~\ref{proof:calibrated-loss}.

\begin{figure}
    \centering
    \includegraphics[width = 0.45\textwidth]{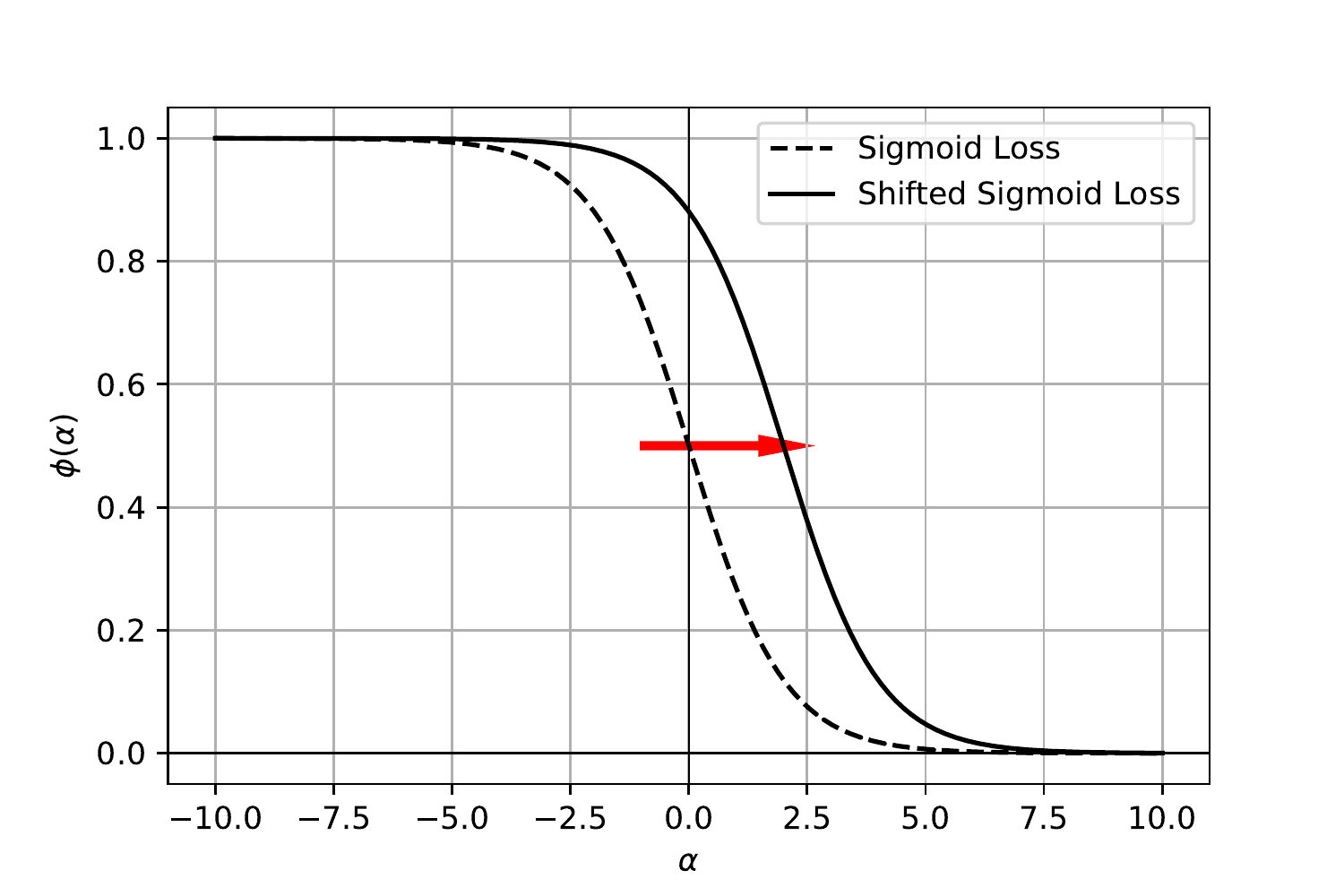}\includegraphics[width = 0.45\textwidth]{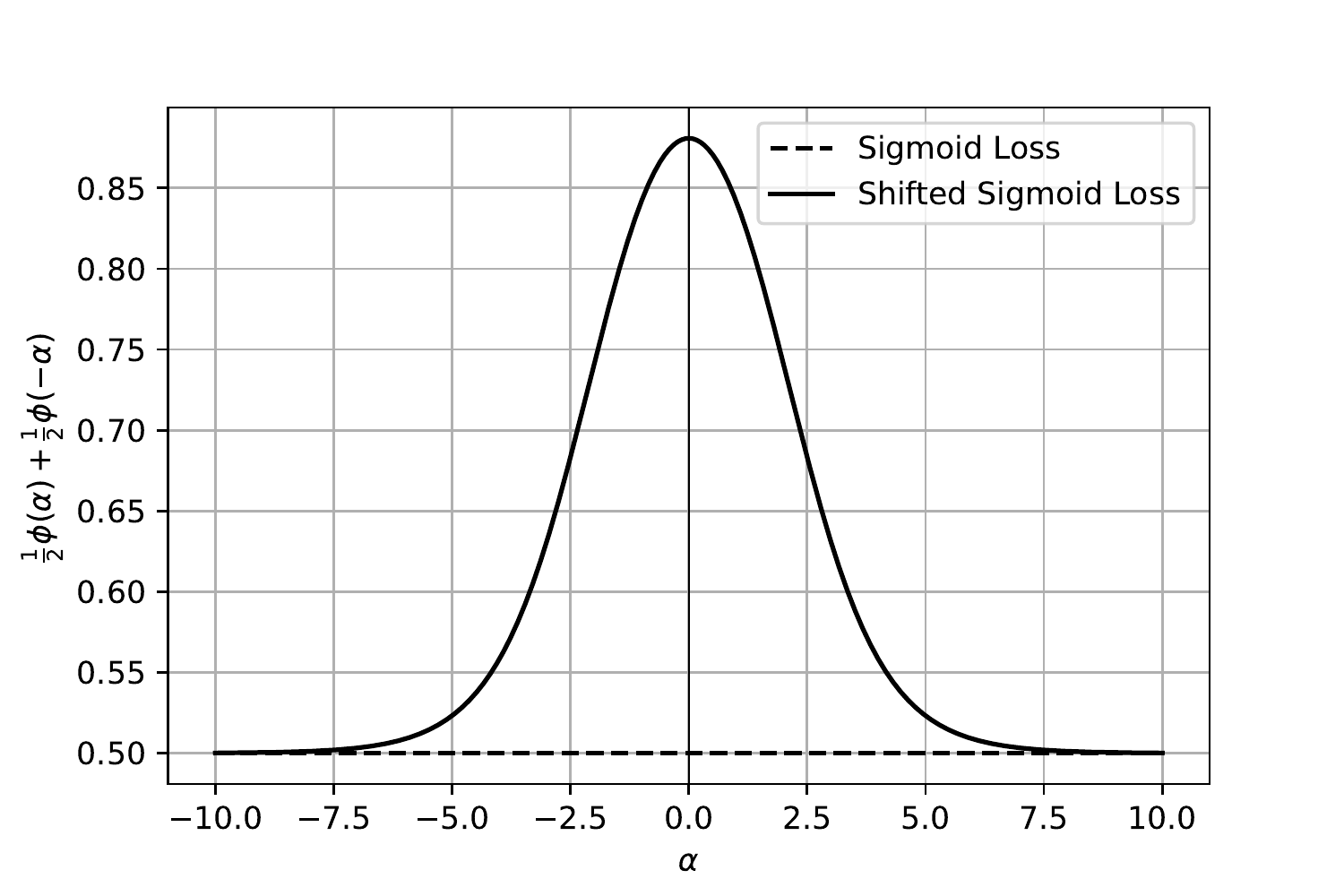}
    \caption{Illustration of the a calibrated loss in the adversarial setting. The sigmoid loss satisfy the hypothesis for $\psi$. Its shifted version is then calibrated for adversarial classification.}
    \label{fig:cal}
\end{figure}

\section{Towards Adversarial Consistency}
\label{sec:consistency}
We focus our study now on the problem of adversarial consistency. In a first part, taking inspiration from~\citet{long2013consistency,awasthi2021calibration}, we study the $\varepsilon$-realisable case, i.e. the case where the adversarial risk at level $\varepsilon$ equals zero. In a second part, we analyze the behavior of a candidate class of losses, namely the \emph{$0/1$-like margin losses}.

\subsection{The Realizable Case}
The realizable case is important since there are no possible adversaries for the Bayes optimal classifier. Formally, this means that the adversarial risk equals $0$, as stated in the following definition.

\begin{defn}[$\varepsilon$-realisability]
Let $\PP$ be a Borel probability distribution on $\XX\times\YY$ and $\varepsilon\geq0$. We say that $\PP$ is \emph{$\varepsilon$-realisable} if $\risk_{\varepsilon,\PP}^\star = 0$.
\end{defn}

In the case of realizable probability distribution, calibrated (and consequently consistent) margin losses in the standard classification setting are also calibrated and consistent in the adversarial case. 
\begin{prop}
\label{prop:realizable} 
Let $\varepsilon>0$. Let $\PP$ be an $\varepsilon$-realisable distribution and $\phi$ be a calibrated margin loss in the standard setting. Then $\phi$ is adversarially consistent at level $\varepsilon$. 
\end{prop}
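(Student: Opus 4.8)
The plan is to show that in the $\varepsilon$-realisable case, the adversarial surrogate risk controls the adversarial $0/1$ risk, which yields consistency directly. First I would fix an $\varepsilon$-realisable distribution $\PP$ and a margin loss $\phi$ that is calibrated in the standard setting. Since $\PP$ is $\varepsilon$-realisable, $\risk_{\varepsilon,\PP}^\star = 0$, and by Proposition~\ref{prop:calib-equality} the adversarial calibration function for $\phi$ equals the standard one, so $\risk_{\phi_\varepsilon,\PP}^\star = \inf_f \EE_{(x,y)\sim\PP}[\phi_\varepsilon(x,y,f)]$ should likewise be attained (in the limit) by a classifier that is ``correct with margin'' on every $\varepsilon$-ball carrying mass. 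Concretely, realisability means there exists $f^\star$ (the Bayes classifier, or a sequence approaching it) such that for $\PP$-a.e.\ $(x,y)$, $y\,\mathrm{sign}(f^\star(x'))>0$ for all $x'\in B_\varepsilon(x)$; one can then replace $f^\star$ by $M f^\star$ and let $M\to\infty$ (using $\phi$ decreasing, or at least $\phi(t)\to \inf\phi$ as $t\to+\infty$, which follows from standard calibration for the relevant class) to see $\risk_{\phi_\varepsilon,\PP}^\star = \inf \phi \cdot 1$, i.e.\ the infimum of the adversarial surrogate risk coincides with the infimum of the pointwise surrogate over correctly-classified balls.

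The second step is the key reduction: I claim there is a nondecreasing function $\Psi:[0,\infty)\to[0,\infty)$ with $\Psi(0)=0$, continuous at $0$, such that for all $f$,
\begin{align*}
    \Psi\!\left(\risk_{\varepsilon,\PP}(f) - \risk_{\varepsilon,\PP}^\star\right) \leq \risk_{\phi_\varepsilon,\PP}(f) - \risk_{\phi_\varepsilon,\PP}^\star.
\end{align*}
To get this, note that for each $(x,y)$, if $f$ misclassifies \emph{some} point of $B_\varepsilon(x)$ then $l_{0/1,\varepsilon}(x,y,f)=1$, and in that case there is $x'\in B_\varepsilon(x)$ with $yf(x')\le 0$, whence $\phi_\varepsilon(x,y,f) \ge \phi(yf(x')) \ge \phi(0)$ (using $\phi$ decreasing, or more carefully $\sup_{t\le 0}\phi(t)\ge\phi(0)$). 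Meanwhile on the set where $f$ is adversarially correct, $\phi_\varepsilon(x,y,f)\ge \inf\phi$. Subtracting the realisable optimum $\risk_{\phi_\varepsilon,\PP}^\star=\inf\phi$ pointwise and integrating, the excess surrogate risk is at least $(\phi(0)-\inf\phi)\cdot \PP(\text{adversarially misclassified set}) = (\phi(0)-\inf\phi)\cdot(\risk_{\varepsilon,\PP}(f)-\risk_{\varepsilon,\PP}^\star)$, so one may take $\Psi(t)=(\phi(0)-\inf\phi)\,t$. Standard calibration of $\phi$ guarantees $\phi(0)>\inf\phi$ (otherwise $\phi$ would not be calibrated: a constant $0$ classifier would be surrogate-optimal), so $\Psi$ is a genuine modulus.

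From the inequality, consistency is immediate: if $\risk_{\phi_\varepsilon,\PP}(f_n)\to\risk_{\phi_\varepsilon,\PP}^\star$ then $\Psi(\risk_{\varepsilon,\PP}(f_n)-\risk_{\varepsilon,\PP}^\star)\to 0$, and since $\Psi$ is linear with positive slope this forces $\risk_{\varepsilon,\PP}(f_n)\to\risk_{\varepsilon,\PP}^\star$, which is exactly adversarial consistency at level $\varepsilon$. The main obstacle I anticipate is making rigorous the claim that $\risk_{\phi_\varepsilon,\PP}^\star = \inf\phi$ in the realisable case — in particular handling the ``limit'' aspect when the Bayes-optimal separation is only approached by a sequence rather than attained, and ensuring the pointwise bound $\phi_\varepsilon(x,y,f)\ge\phi(0)$ on the misclassified set is correctly justified from the hypotheses on $\phi$ (continuity plus whatever monotonicity near relevant regions standard calibration provides); the integration and the final limiting argument are routine once this is pinned down.
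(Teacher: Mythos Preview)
Your approach is essentially the paper's: both first establish the lemma $\risk_{\phi_\varepsilon,\PP}^\star=\inf_\alpha\phi(\alpha)$ under realisability, then exploit the calibration-induced gap between $\phi$ on wrong-sign arguments and $\inf\phi$ to control the adversarial $0/1$ excess by the surrogate excess. Your direct pointwise bound yielding a linear modulus $\Psi$ is a slightly cleaner packaging of what the paper does via Markov's inequality, but the content is identical.

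Two details in your sketch would fail as written and need patching. First, the step ``replace $f^\star$ by $Mf^\star$ and let $M\to\infty$'' breaks for non-monotone calibrated losses: for $\phi(t)=(1-t)^2$, which is calibrated, one has $\phi(Mt)\to\infty$, not $\inf\phi$; so ``$\phi(t)\to\inf\phi$ as $t\to+\infty$'' does \emph{not} follow from standard calibration. The paper's fix is to set $g=a\cdot\mathrm{sign}(f)$ with $a>0$ chosen so that $\phi(a)$ is close to $\inf\phi$ (calibration at $\eta=1$ guarantees near-minimisers of $\phi$ lie in $(0,\infty)$), and to work with an approximate minimiser $f$ of the adversarial $0/1$ risk rather than an exact $f^\star$. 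Second, on the adversarially misclassified set you obtain only $\phi_\varepsilon(x,y,f)\ge\phi(yf(x'))$ for some $x'$ with $yf(x')\le 0$, and without monotonicity (which is not assumed) this need not be $\ge\phi(0)$. The correct constant is $c:=\inf_{t\le 0}\phi(t)$; the contrapositive of calibration at $\eta=0$ gives $c>\inf\phi$, so the modulus $\Psi(t)=(c-\inf\phi)\,t$ is genuine and the rest of your argument goes through unchanged.
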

We give a proof of this result in Appendix \ref{proof:realizable}. The intuition behind this result is that if a probability distribution is $\varepsilon$-realisable, the marginal distributions are sufficiently separated, so that there are no possible adversarial attacks, each point in the $\varepsilon$-neighbourhood of the support of the distribution can be classified independently of each other. 

\subsection{Towards the General Case}
\label{sec:consis-gen}
In this section, we seek to pave the way towards proving the consistency of shifted odd losses. We will observe that their behavior is actually very similar to that of the $0/1$ loss, which makes them good candidates to be consistent losses. To this end, we first add an extra hypothesis to the odd shifted losses in order to simplify our technical analysis.

\begin{defn}[$0/1$-like margin losses]
\label{def:limits}
$\phi$ is a \emph{$0/1$-like margin loss} if there exists $\lambda\geq0$, $\tau\geq0$, and a continuous lower bounded strictly decreasing odd function \textcolor{black}{$\psi$} in a neighbourhood of $0$ such that for all $\alpha\in\RR$, $\psi(\alpha)\geq -\lambda$ and $\phi(\alpha) = \lambda+\psi(\alpha-\tau)$ and
\begin{align*}
\lim_{t\to-\infty}\phi(t)=1\text{ and }\lim_{t\to+\infty}\phi(t)=0
\end{align*}
\end{defn}

Note here that the losses here are not necessarily shifted because $\tau$ might equal $0$, making this condition weaker. Consequently, we cannot hope that such losses are consistent neither calibrated, but they might help in finding the path towards consistency.  Note also that if $\phi$ is an odd or shifted odd loss, one can always find a rescaling of $\phi$ such that $\phi$ becomes a $0/1$-like margin loss. Note also that such a rescaling does neither change the notion of consistency and calibration for $\phi$ nor for its rescaled version.

Based on min-max arguments, we provide below some results better characterizing $0/1$-like margin loss functions in the adversarial setting. Let us first recall the notions of \emph{midpoint property} and \emph{adversarial distributions set} that will be useful from now on as well as an important existing result from~\citet{pydi2021many}. 

\begin{defn}
Let $(\mathcal{X},d)$ be a proper Polish metric space. We say that $\XX$ satisfy the \emph{midpoint property} if for all $x_1,x_2\in\XX$ there exist $x\in\XX$ such that $d(x,x_1) = d(x,x_2) =\frac{d(x_1,x_2)}{2}$.
\end{defn}
We recall also the set $\mathcal{A}_\varepsilon(\PP)$ of adversarial distributions introduced in \cite{meunier2021mixed}.
\begin{defn}
Let $\PP$ be a Borel probability distribution and $\varepsilon>0$.
We define the set of \emph{adversarial distributions} $\mathcal{A}_\varepsilon(\PP)$ as: 
\begin{align*}
\mathcal{A}_{\varepsilon}&(\PP) := \left\{\QQ\in\mathcal{M}^+_1(\mathcal{X}\times\mathcal{Y})\mid\exists \gamma\in\mathcal{M}^+_1\left((\mathcal{X}\times\mathcal{Y})^2\right),\right.\\
&\left.d(x,x')\leq\varepsilon,~y=y'~~ \gamma\text{-a.s.},~\Pi_{1\sharp}\gamma=\PP,~\Pi_{2\sharp}\gamma=\QQ\right\} 
\end{align*}
where $\Pi_{i}$ denotes the projection on the $i$-th component.

\end{defn}

\begin{thm}[\citet{pydi2021many}]
\label{thm:eq01loss}
Let $\XX$ be a Polish space satisfying the midpoint property. Then strong duality holds:
\begin{align*}
\risk^\star_{\varepsilon,\PP} = \inf_{f\in\mathcal{F}(\XX)}\sup_{\QQ\in\mathcal{A}_\varepsilon(\PP)} \risk_{\QQ}(f) =    \sup_{\QQ\in\mathcal{A}_\varepsilon(\PP)} \inf_{f\in\mathcal{F}(\XX)} \risk_{\QQ}(f)
\end{align*}
Moreover the supremum of the right-hand term is attained. 
\end{thm}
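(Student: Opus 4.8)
The statement is attributed to \citet{pydi2021many}, so the plan is essentially to recall the structure of their argument and indicate the main ingredients. First I would establish the weak duality inequality
\[
\sup_{\QQ\in\mathcal{A}_\varepsilon(\PP)} \inf_{f\in\mathcal{F}(\XX)} \risk_{\QQ}(f) \;\leq\; \inf_{f\in\mathcal{F}(\XX)}\sup_{\QQ\in\mathcal{A}_\varepsilon(\PP)} \risk_{\QQ}(f),
\]
which is immediate from the max-min inequality and holds without any assumption on $\XX$. Then I would identify the left-hand primal-type quantity with $\risk^\star_{\varepsilon,\PP}$: the inequality $\inf_f \sup_{\QQ} \risk_{\QQ}(f) \leq \risk^\star_{\varepsilon,\PP}$ follows because, for a fixed $f$, moving mass from $x$ to the worst $x'\in B_\varepsilon(x)$ realizes exactly $\risk_{\varepsilon,\PP}(f)$ as a value of $\sup_{\QQ}\risk_{\QQ}(f)$ (one has to check that the pointwise worst-case shift can be assembled into a genuine coupling $\gamma$, which is where universal measurability from Proposition~\ref{prop:measurable} and a measurable-selection argument enter); conversely $\sup_{\QQ}\risk_{\QQ}(f)\leq \risk_{\varepsilon,\PP}(f)$ because any $\QQ\in\mathcal{A}_\varepsilon(\PP)$ only relabels points within $\varepsilon$-balls. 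So the chain collapses to showing $\risk^\star_{\varepsilon,\PP} \leq \sup_{\QQ} \inf_f \risk_{\QQ}(f)$ together with attainment of the outer supremum.

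For the nontrivial direction I would exploit the midpoint property together with a compactness/lower-semicontinuity argument on the space of adversarial distributions. The key structural fact is that $\mathcal{A}_\varepsilon(\PP)$ is convex and, under mild conditions on $\PP$, weakly compact (it is the pushforward of the set of couplings supported on the closed $\varepsilon$-tube $\{d(x,x')\leq\varepsilon, y=y'\}$, which is tight since $\PP$ is tight). The inner functional $\QQ\mapsto \inf_f \risk_\QQ(f)$ can be written explicitly: for $\QQ$ with marginal $\QQ_\XX$ and conditional label probability $\eta_\QQ(x)$, one has $\inf_f \risk_\QQ(f) = \EE_{x\sim\QQ_\XX}[\min(\eta_\QQ(x),1-\eta_\QQ(x))]$, the Bayes risk of $\QQ$. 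The midpoint property is what guarantees that, starting from any $f$ and any configuration where two oppositely-labeled points lie within $2\varepsilon$, there is a point — their midpoint — at which no classifier can avoid an error on at least one of them; this is precisely the mechanism by which the adversary's optimal strategy "merges" conflicting mass, and it makes the Bayes risk of the worst $\QQ$ match $\risk^\star_{\varepsilon,\PP}$. Concretely I would either invoke a Sion-type minimax theorem (using convexity of $\mathcal{A}_\varepsilon(\PP)$, convexity of $\mathcal{F}(\XX)$ after passing to randomized/soft classifiers $\mathcal{X}\to[0,1]$, and upper/lower semicontinuity of $(\QQ,f)\mapsto\risk_\QQ(f)$ in the appropriate topologies), or reproduce the direct construction in \citet{pydi2021many}: take an optimal $f^\star$ for $\risk^\star_{\varepsilon,\PP}$ (or a minimizing sequence), and build $\QQ^\star$ by pushing each point onto the region where $f^\star$ errs, using the midpoint property to handle the boundary/tie cases and a measurable selection to make the assignment Borel.

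Attainment of the supremum then comes for free from the weak compactness of $\mathcal{A}_\varepsilon(\PP)$ and the upper semicontinuity of $\QQ\mapsto\inf_f\risk_\QQ(f)$ (the Bayes-risk functional is concave and weakly upper semicontinuous), so a maximizing sequence has a weakly convergent subsequence whose limit attains the supremum. \textbf{The main obstacle} I anticipate is not the soft minimax step but the measure-theoretic bookkeeping: verifying that the "move every point to its worst adversarial location" map is universally measurable and induces a legitimate element of $\mathcal{A}_\varepsilon(\PP)$ (i.e. producing the coupling $\gamma$ with the right marginals and support), and handling ties on the decision boundary where the midpoint property is invoked — these are exactly the points where naive arguments break and where one must lean on Proposition~\ref{prop:measurable} and a measurable-selection theorem. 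Since the result is quoted from \citet{pydi2021many}, in the paper itself it suffices to cite their Theorem for the full argument.
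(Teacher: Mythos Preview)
Your proposal is correct, and for the part the paper actually proves it takes the same route: the paper does not reprove strong duality (that is simply cited from \citet{pydi2021many}) and only supplies the attainment of the supremum, via exactly your argument --- weak compactness of $\mathcal{A}_\varepsilon(\PP)$ together with weak upper semicontinuity of $\QQ\mapsto \inf_f \risk_\QQ(f)$. The only extra detail the paper adds is a concrete justification of that upper semicontinuity: it rewrites $\inf_{f\in\mathcal{F}(\XX)}\risk_\QQ(f)$ as $(1-q)+\inf_{f\in\mathcal{C}(\XX),\,0\leq f\leq 1}\int f\,d(q\QQ_1+(q-1)\QQ_{-1})$, an infimum of weakly continuous linear functionals, so the infimum is upper semicontinuous.
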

Note that in the original version of the theorem, \citet{pydi2021many} did not prove that the supremum is attained. We add a proof of this in Appendix \ref{proof:eq01loss}.


\paragraph{Connections between $0/1$-like margin loss and $0/1$ loss: a min-max viewpoint.} Thanks the the above concepts, we can now present some results identifying the similarity and the differences between the  $0/1$ loss and $0/1$-like margin losses. We first show that for a given fixed probability distribution $\PP$, the adversarial optimal risk associated with a $0/1$-like margin loss and the $0/1$ loss are equal. The proof of this result is available in Appendix \ref{proof:equalityrisk}

\begin{thm}

\label{thm:equalityrisk}
Let $\XX$ be a Polish space satisfying the midpoint property. Let $\varepsilon\geq 0$, $\PP$ be a Borel probability distribution over $\XX\times\YY$, and ${\phi}$ be a $0/1$-like margin loss. Then, we have:
\begin{align*}
\risk^\star_{{\phi}_\varepsilon,\PP} = \risk^\star_{\varepsilon,\PP}
\end{align*}
\end{thm}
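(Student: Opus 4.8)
The plan is to establish the two inequalities $\risk^\star_{\phi_\varepsilon,\PP}\ge\risk^\star_{\varepsilon,\PP}$ and $\risk^\star_{\phi_\varepsilon,\PP}\le\risk^\star_{\varepsilon,\PP}$ separately, with the following shared pointwise fact as the first step. For any $0/1$-like margin loss $\phi$ I would show
\[
\inf_{\alpha\in\RR}\ \eta\,\phi(\alpha)+(1-\eta)\phi(-\alpha)=\min(\eta,1-\eta)\qquad(\eta\in[0,1]).
\]
Writing $\phi(\alpha)=\lambda+\psi(\alpha-\tau)$ with $\psi$ odd, bounded and decreasing, and $\psi(\pm\infty)=\mp\tfrac12$ forced by $\phi(-\infty)=1$, $\phi(+\infty)=0$ (so $\lambda=\tfrac12$ and $\phi$ takes values in $[0,1]$), one gets $1-\phi(\alpha)=\phi(2\tau-\alpha)\le\phi(-\alpha)$ by monotonicity, hence $\phi(\alpha)+\phi(-\alpha)\ge1$; thus for $\eta\le\tfrac12$, $\eta\phi(\alpha)+(1-\eta)\phi(-\alpha)=\eta\big(\phi(\alpha)+\phi(-\alpha)\big)+(1-2\eta)\phi(-\alpha)\ge\eta$, and the reverse inequality follows by letting $\alpha\to\pm\infty$. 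Combined with $\mathcal{C}^\star_\phi(x,\eta)=\inf_\alpha \eta\phi(\alpha)+(1-\eta)\phi(-\alpha)$, the analogous identity $\mathcal{C}^\star(x,\eta)=\min(\eta,1-\eta)$ for $l_{0/1}$, and the disintegration principle for the standard risk (the Remark following Theorem~\ref{thm:cal-standard}), this yields $\risk^\star_{\phi,\QQ}=\risk^\star_\QQ$ for every Borel $\QQ$ on $\XX\times\YY$.

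For the lower bound, fix $f\in\mathcal{F}(\XX)$ and any $\QQ\in\mathcal{A}_\varepsilon(\PP)$ with witnessing coupling $\gamma$. Since $d(x,x')\le\varepsilon$ and $y=y'$ hold $\gamma$-a.s., $\risk_{\phi,\QQ}(f)=\EE_\gamma\!\left[\phi(yf(x'))\right]\le\EE_\gamma\!\left[\sup_{x''\in B_\varepsilon(x)}\phi(yf(x''))\right]=\risk_{\phi_\varepsilon,\PP}(f)$. Taking the supremum over $\QQ\in\mathcal{A}_\varepsilon(\PP)$, then the infimum over $f$, and invoking $\risk^\star_{\phi,\QQ}=\risk^\star_\QQ$ from the first step gives $\risk^\star_{\phi_\varepsilon,\PP}\ge\sup_{\QQ\in\mathcal{A}_\varepsilon(\PP)}\risk^\star_\QQ$, which equals $\risk^\star_{\varepsilon,\PP}$ by the strong-duality Theorem~\ref{thm:eq01loss} of~\citet{pydi2021many}.

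For the upper bound, fix $\delta>0$ and choose $g\in\mathcal{F}(\XX)$ with $\risk_{\varepsilon,\PP}(g)\le\risk^\star_{\varepsilon,\PP}+\delta$; set $h=\text{sign}\circ g$, which is $\{-1,+1\}$-valued and measurable, and note $\risk_{\varepsilon,\PP}(h)=\risk_{\varepsilon,\PP}(g)$ since $l_{0/1,\varepsilon}(x,y,\cdot)$ depends only on the sign. Consider $f_M:=M\,h$ for $M>0$. Because $\phi$ is decreasing, $\sup_{x'\in B_\varepsilon(x)}\phi(yf_M(x'))$ equals $\phi(-M)$ on the event $A(x,y):=\{\exists\,x'\in B_\varepsilon(x):h(x')\ne y\}$ and $\phi(M)$ off it; $A$ is universally measurable by Proposition~\ref{prop:measurable} and $\EE_\PP[\mathbf 1_{A}]=\risk_{\varepsilon,\PP}(h)$, so $\risk_{\phi_\varepsilon,\PP}(f_M)=\phi(-M)\,\risk_{\varepsilon,\PP}(h)+\phi(M)\big(1-\risk_{\varepsilon,\PP}(h)\big)$. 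Letting $M\to+\infty$ and using $\phi(-M)\to1$, $\phi(M)\to0$ gives $\risk_{\phi_\varepsilon,\PP}(f_M)\to\risk_{\varepsilon,\PP}(h)\le\risk^\star_{\varepsilon,\PP}+\delta$, hence $\risk^\star_{\phi_\varepsilon,\PP}\le\risk^\star_{\varepsilon,\PP}+\delta$; taking $\delta\downarrow0$ closes this direction. The case $\varepsilon=0$ is the standard-setting statement $\risk^\star_{\phi,\PP}=\risk^\star_\PP$, which is exactly the first step.

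I expect the main obstacles to be two. First, the pointwise identity: correctly extracting from the shifted-odd structure that $\lambda=\tfrac12$ and the key relation $1-\phi(\alpha)=\phi(2\tau-\alpha)$, and handling $\tau=0$ (where $\phi(\alpha)+\phi(-\alpha)=1$ exactly) and $\tau>0$ uniformly. Second, the bookkeeping in the upper bound: justifying the exact formula for $\risk_{\phi_\varepsilon,\PP}(f_M)$ — which relies on the universal measurability of $A(x,y)$ from Proposition~\ref{prop:measurable} and on $\phi$ being monotone, so that the inner supremum over $B_\varepsilon(x)$ is realized at whichever of $\pm M$ is more adversarial — after which the limit in $M$ is immediate since $\phi$ is bounded. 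Everything else (the coupling inequality for the lower bound, the appeal to Theorem~\ref{thm:eq01loss}, and the reduction of the standard optimal risk to $\int\min(\eta,1-\eta)$) is routine.
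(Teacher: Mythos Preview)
Your proposal is correct and follows essentially the same approach as the paper: both prove the key lemma $\risk^\star_{\phi,\QQ}=\risk^\star_\QQ$ via the pointwise identity $\mathcal{C}^\star_\phi(x,\eta)=\min(\eta,1-\eta)$, obtain the lower bound by weak duality combined with Theorem~\ref{thm:eq01loss}, and obtain the upper bound by replacing a near-optimal $0/1$ classifier with a large-amplitude $\pm M$ version and using $\phi(\pm M)\to 0,1$. Your write-up is in fact more detailed than the paper's (you justify the pointwise identity and the coupling inequality explicitly, whereas the paper leaves both implicit); the only minor point is that your upper-bound computation invokes ``$\phi$ is decreasing,'' which is not literally part of Definition~\ref{def:limits}, but since $yf_M(x')\in\{\pm M\}$ you only need $\phi(-M)\ge\phi(M)$ for large $M$, which follows from the limit assumptions alone.
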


In particular, we note that this property holds true for the standard risk. From this result, we can derive two interesting corollaries about  $0/1$-like margin losses. First, strong duality holds for the risk associated with ${\phi}$.

\begin{coro}[Strong duality for $\phi$] 
\label{coro:nash}
Let us assume that $\XX$ is a Polish space satisfying the midpoint property. Let $\varepsilon\geq 0$, $\PP$ be a Borel probability distribution over $\XX\times\YY$, and ${\phi}$ be a $0/1$-like margin loss. Then, we have:
\begin{align*}
\inf_{f\in\mathcal{F}(\XX)}\sup_{\QQ\in\mathcal{A}_\varepsilon(\PP)} \risk_{\phi,\QQ}(f) =    \sup_{\QQ\in\mathcal{A}_\varepsilon(\PP)} \inf_{f\in\mathcal{F}(\XX)} \risk_{\phi,\QQ}(f)
\end{align*}
Moreover the supremum is attained.
\end{coro}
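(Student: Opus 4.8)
The plan is to derive Corollary~\ref{coro:nash} from the strong duality for the $0/1$ loss (Theorem~\ref{thm:eq01loss}) together with the equality of optimal risks (Theorem~\ref{thm:equalityrisk}), applied not just to $\PP$ but to every $\QQ\in\mathcal{A}_\varepsilon(\PP)$. First I would observe that for any $\QQ\in\mathcal{A}_\varepsilon(\PP)$ and any $\varepsilon'\geq 0$ we have the elementary chain of inequalities
\begin{align*}
\sup_{\QQ\in\mathcal{A}_\varepsilon(\PP)} \inf_{f\in\mathcal{F}(\XX)} \risk_{\phi,\QQ}(f) \;\leq\; \inf_{f\in\mathcal{F}(\XX)}\sup_{\QQ\in\mathcal{A}_\varepsilon(\PP)} \risk_{\phi,\QQ}(f),
\end{align*}
which is the trivial max-min $\leq$ min-max direction and needs no hypothesis. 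So the whole content is the reverse inequality.

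For the reverse inequality I would identify the right-hand side $\inf_{f}\sup_{\QQ\in\mathcal{A}_\varepsilon(\PP)} \risk_{\phi,\QQ}(f)$ with $\risk^\star_{\phi_\varepsilon,\PP}$. The key lemma here is that $\sup_{\QQ\in\mathcal{A}_\varepsilon(\PP)}\risk_{\phi,\QQ}(f) = \risk_{\phi_\varepsilon,\PP}(f)$ for every fixed $f$; this is the "dual representation of the adversarial risk as a supremum over adversarial distributions'' and is exactly the inner-supremum identity that underlies Theorem~\ref{thm:eq01loss} — it holds because moving mass from $x$ to $x'\in B_\varepsilon(x)$ within the coupling realizes, in the limit, the pointwise supremum $\sup_{x'\in B_\varepsilon(x)}\phi(yf(x'))$, and conversely any $\QQ\in\mathcal{A}_\varepsilon(\PP)$ gives a risk bounded by that pointwise supremum integrated against $\PP$. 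Taking the infimum over $f$ then gives $\inf_f\sup_{\QQ}\risk_{\phi,\QQ}(f) = \risk^\star_{\phi_\varepsilon,\PP}$, and by Theorem~\ref{thm:equalityrisk} this equals $\risk^\star_{\varepsilon,\PP}$. Meanwhile, for the left-hand side, for each fixed $\QQ\in\mathcal{A}_\varepsilon(\PP)$, the point-mass structure is irrelevant and $\inf_f\risk_{\phi,\QQ}(f) = \risk^\star_{\phi,\QQ} = \risk^\star_{0,\QQ} = \risk^\star_{\QQ}$ by applying Theorem~\ref{thm:equalityrisk} with $\varepsilon=0$ to the distribution $\QQ$ (a $0/1$-like margin loss has the same non-adversarial optimal risk as the $0/1$ loss). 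Hence $\sup_{\QQ\in\mathcal{A}_\varepsilon(\PP)}\inf_f\risk_{\phi,\QQ}(f) = \sup_{\QQ\in\mathcal{A}_\varepsilon(\PP)}\risk_{\QQ}^\star$, which by Theorem~\ref{thm:eq01loss} equals $\risk^\star_{\varepsilon,\PP}$ as well. Combining, both sides equal $\risk^\star_{\varepsilon,\PP}$, so strong duality holds.

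Finally, for the attainment of the supremum: Theorem~\ref{thm:eq01loss} gives a $\QQ^\star\in\mathcal{A}_\varepsilon(\PP)$ attaining $\sup_{\QQ}\inf_f\risk_{\QQ}(f) = \risk^\star_{\varepsilon,\PP}$. I would check that this same $\QQ^\star$ attains the supremum in the $\phi$-version, using $\inf_f\risk_{\phi,\QQ^\star}(f) = \risk_{\QQ^\star}^\star$ from the previous paragraph; since $\risk_{\QQ^\star}^\star = \risk^\star_{\varepsilon,\PP}$ equals the common value of the $\phi$-min-max problem, $\QQ^\star$ is a maximizer.

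The main obstacle I expect is rigorously justifying the inner-supremum identity $\sup_{\QQ\in\mathcal{A}_\varepsilon(\PP)}\risk_{\phi,\QQ}(f) = \risk_{\phi_\varepsilon,\PP}(f)$ at the level of measure theory: one needs the measurable-selection / universal-measurability machinery (Proposition~\ref{prop:measurable}) to ensure $x\mapsto\sup_{x'\in B_\varepsilon(x)}\phi(yf(x'))$ is integrable and can be approached by genuine couplings $\gamma$, and one must be careful that the supremum over $\QQ$ need not be attained for fixed $f$ even though the outer supremum over $\QQ$ (after taking $\inf_f$) is. This is precisely the part that mirrors the delicate arguments in \citet{pydi2021many} and \citet{meunier2021mixed}, so I would either invoke their lemmas directly or reprove the identity in the appendix; everything else is a short assembly of Theorems~\ref{thm:eq01loss} and~\ref{thm:equalityrisk}.
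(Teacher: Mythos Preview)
Your proposal is correct and follows essentially the same chain of equalities as the paper: identify $\inf_f\sup_\QQ\risk_{\phi,\QQ}(f)$ with $\risk^\star_{\phi_\varepsilon,\PP}$, use Theorem~\ref{thm:equalityrisk} to pass to $\risk^\star_{\varepsilon,\PP}$, then use Theorem~\ref{thm:eq01loss} and the equality $\risk^\star_{\phi,\QQ}=\risk^\star_{\QQ}$ (which the paper states as a separate lemma rather than as the $\varepsilon=0$ case of Theorem~\ref{thm:equalityrisk}) to close the loop. The only cosmetic difference is in the attainment step: you reuse the maximizer $\QQ^\star$ from Theorem~\ref{thm:eq01loss} directly, whereas the paper restates the upper semi-continuity/compactness argument, but since $\inf_f\risk_{\phi,\QQ}(f)=\inf_f\risk_{\QQ}(f)$ these are the same argument.
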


Note that there is no reason that the infimum is attained. A second interesting corollary is the equality of the set of optimal attacks, i.e. distributions of $\mathcal{A}_\varepsilon(\PP)$ that maximize the dual problem:  an optimal attack for the $0/1$ loss is also an optimal attack for a  $0/1$-like margin, and vice versa.

\begin{coro}[Optimal attacks]
\label{coro:optattacks}
Let assume that $\XX$ be a Polish space satisfying the midpoint property. Let $\varepsilon\geq 0$ and $\PP$ be a Borel probability distribution over $\XX\times\YY$. Then, an optimal attack $\QQ^\star$  of level $\varepsilon$  exists for both the $0/1$ loss and  $\phi$. Moreover, for $\QQ\in\mathcal{A}_\varepsilon(\PP)$.  $\QQ$ is an optimal attack for the loss $\phi$  if and only if it is an optimal attack for the $0/1$ loss.
\end{coro}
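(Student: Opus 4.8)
The plan is to show that, for every $\QQ\in\mathcal{A}_\varepsilon(\PP)$, the property ``$\QQ$ is an optimal attack for the $0/1$ loss'' and the property ``$\QQ$ is an optimal attack for $\phi$'' are \emph{both} equivalent to the single numerical identity $\risk^\star_{\QQ}=\risk^\star_{\varepsilon,\PP}$. Once this is established, the ``if and only if'' is immediate, and the existence of a common $\QQ^\star$ follows by taking the maximizer of the $0/1$ dual problem produced by Theorem~\ref{thm:eq01loss} and applying the equivalence.

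First I would unwind the definition of an optimal attack. For the $0/1$ loss, $\QQ$ is an optimal attack iff $\inf_{f\in\mathcal{F}(\XX)}\risk_{\QQ}(f)=\sup_{\QQ'\in\mathcal{A}_\varepsilon(\PP)}\inf_{f}\risk_{\QQ'}(f)$, and Theorem~\ref{thm:eq01loss} identifies the right-hand side with $\risk^\star_{\varepsilon,\PP}$; since $\inf_f\risk_{\QQ}(f)=\risk^\star_{\QQ}$ by definition, this reads exactly $\risk^\star_{\QQ}=\risk^\star_{\varepsilon,\PP}$. For $\phi$, $\QQ$ is an optimal attack iff $\inf_{f}\risk_{\phi,\QQ}(f)=\sup_{\QQ'\in\mathcal{A}_\varepsilon(\PP)}\inf_{f}\risk_{\phi,\QQ'}(f)$, and by Corollary~\ref{coro:nash} together with the identification $\inf_f\sup_{\QQ'}\risk_{\phi,\QQ'}(f)=\inf_f\risk_{\phi_\varepsilon,\PP}(f)=\risk^\star_{\phi_\varepsilon,\PP}$, the right-hand side equals $\risk^\star_{\phi_\varepsilon,\PP}$, which in turn equals $\risk^\star_{\varepsilon,\PP}$ by Theorem~\ref{thm:equalityrisk}.

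The remaining ingredient is to evaluate the left-hand side $\inf_f\risk_{\phi,\QQ}(f)$. Here the adversary has already been absorbed into $\QQ$, so this quantity is the optimal standard $\phi$-risk of the distribution $\QQ$, i.e. $\risk^\star_{\phi,\QQ}=\risk^\star_{\phi_0,\QQ}$ (using $B_0(x)=\{x\}$, so the adversarial-$\phi$ loss at level $0$ coincides with $\phi$). Applying Theorem~\ref{thm:equalityrisk} with $\varepsilon=0$ and $\PP$ replaced by $\QQ$ (legitimate since $\phi$ is $0/1$-like and $\XX$ has the midpoint property) gives $\risk^\star_{\phi_0,\QQ}=\risk^\star_{0,\QQ}=\risk^\star_{\QQ}$. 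Substituting this into the previous paragraph, $\QQ$ is an optimal attack for $\phi$ iff $\risk^\star_{\QQ}=\risk^\star_{\varepsilon,\PP}$ — the same condition as for the $0/1$ loss — which proves the equivalence. For existence, Theorem~\ref{thm:eq01loss} guarantees that the $0/1$ dual is attained at some $\QQ^\star\in\mathcal{A}_\varepsilon(\PP)$, equivalently $\risk^\star_{\QQ^\star}=\risk^\star_{\varepsilon,\PP}$, and by the equivalence just proved this same $\QQ^\star$ is an optimal attack for $\phi$ as well.

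I expect the only non-bookkeeping point to be the two ``primal value'' identifications: $\inf_f\risk_{\phi,\QQ}(f)=\risk^\star_{\phi_0,\QQ}$ (routine, from $B_0(x)=\{x\}$) and $\inf_f\sup_{\QQ'}\risk_{\phi,\QQ'}(f)=\risk^\star_{\phi_\varepsilon,\PP}$. The latter rests on the pointwise identity $\sup_{\QQ'\in\mathcal{A}_\varepsilon(\PP)}\risk_{\phi,\QQ'}(f)=\risk_{\phi_\varepsilon,\PP}(f)$ for each fixed $f$, whose ``$\leq$'' direction is obtained by pushing forward through any admissible coupling and whose ``$\geq$'' direction requires a measurable selection of a near-optimal adversarial perturbation $x\mapsto x'$; this is the $\phi$-analogue of the identity already underpinning Theorem~\ref{thm:eq01loss} and Corollary~\ref{coro:nash} (cf. \citet{pydi2021many,meunier2021mixed}), so I would cite it rather than reprove it. Everything else is a direct concatenation of Theorems~\ref{thm:eq01loss} and~\ref{thm:equalityrisk} and Corollary~\ref{coro:nash}.
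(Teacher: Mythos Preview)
Your proposal is correct and follows essentially the same route as the paper: both arguments hinge on the pointwise identity $\risk^\star_{\phi,\QQ}=\risk^\star_{\QQ}$ for every $\QQ$ (the paper isolates this as a lemma, you obtain it as the $\varepsilon=0$ case of Theorem~\ref{thm:equalityrisk}), combined with Theorem~\ref{thm:eq01loss} and Theorem~\ref{thm:equalityrisk} to equate the two dual values, so that the maximizers of $\QQ\mapsto\risk^\star_{\QQ}$ and $\QQ\mapsto\risk^\star_{\phi,\QQ}$ over $\mathcal{A}_\varepsilon(\PP)$ coincide. The paper additionally invokes upper semicontinuity and compactness of $\mathcal{A}_\varepsilon(\PP)$ to re-derive existence, whereas you simply cite the attainment clause of Theorem~\ref{thm:eq01loss}; both are fine.
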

The proof of these two corollaries is available in Appendix \ref{proof:coros}.


\paragraph{A step towards consistency.} From the previous results, we are able to prove a first result toward the demonstration of consistency. This result is much weaker than consistency result, but it guarantees that if a sequence minimizes the adversarial risk, then it minimizes the risk for optimal attacks, i.e. in a game where the attacker plays before the classifier. The proof of this result is in Appendix \ref{proof:pseudo}
\begin{prop}
\label{prop:pseudo}
Let us assume that $\XX$ be a Polish space satisfying the midpoint property. Let $\varepsilon\geq 0$ and $\PP$ be a Borel probability distribution over $\XX\times\YY$.  Let $\QQ^\star$ be an optimal attack of level $\varepsilon$. Let $(f_n)_{n\in\mathbb{N}}$ be a sequence of $\mathcal{F}(\XX)$ such that $\risk_{\phi_\varepsilon,\PP}(f_n)\to\risk_{\phi_\varepsilon,\PP}^\star$. Then $\risk_{\QQ^\star}(f_n)\to\risk_{\varepsilon,\PP}^\star$.
\end{prop}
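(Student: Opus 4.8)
\textbf{Proof plan for Proposition~\ref{prop:pseudo}.}
The plan is to combine the strong duality for $\phi$ (Corollary~\ref{coro:nash}) with the equality of optimal risks (Theorem~\ref{thm:equalityrisk}) and a pointwise comparison between $\phi$ and $l_{0/1}$. First I would fix an optimal attack $\QQ^\star\in\mathcal{A}_\varepsilon(\PP)$, whose existence is guaranteed by Corollary~\ref{coro:optattacks}. The key chain of inequalities I want to establish, for any $f\in\mathcal{F}(\XX)$, is
\begin{align*}
\risk_{\QQ^\star}(f)\ \le\ \risk_{\phi,\QQ^\star}(f)\ \le\ \sup_{\QQ\in\mathcal{A}_\varepsilon(\PP)}\risk_{\phi,\QQ}(f)\ \le\ \risk_{\phi_\varepsilon,\PP}(f).
\end{align*}
The first inequality uses that, since $\phi$ is a $0/1$-like margin loss, $\phi(t)\ge\mathbf{1}_{t\le 0}$ pointwise in $t$ — this follows because $\phi$ is decreasing with $\lim_{t\to+\infty}\phi(t)=0$ (so $\phi(t)\ge 0$ always) and $\lim_{t\to-\infty}\phi(t)=1$ (so $\phi(t)\ge 1$ for $t\le 0$, using monotonicity); hence $\phi(yf(x))\ge l_{0/1}(x,y,f)$ and integrating against $\QQ^\star$ gives the claim. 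The last inequality is the standard fact that pushing the adversary inside the ball only increases the loss: for $\QQ\in\mathcal{A}_\varepsilon(\PP)$ with witness coupling $\gamma$, $\risk_{\phi,\QQ}(f)=\EE_\gamma[\phi(y f(x'))]\le \EE_\gamma[\sup_{x''\in B_\varepsilon(x)}\phi(yf(x''))]=\risk_{\phi_\varepsilon,\PP}(f)$, using $d(x,x')\le\varepsilon$ and $y=y'$ $\gamma$-a.s.

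Next I would take infima over $f$ in the displayed chain and invoke the two structural results. Taking $\inf_f$ of the rightmost term gives $\risk^\star_{\phi_\varepsilon,\PP}$, which by Theorem~\ref{thm:equalityrisk} equals $\risk^\star_{\varepsilon,\PP}$. On the other hand, $\inf_f \risk_{\QQ^\star}(f)=\risk^\star_{\QQ^\star}$, and since $\QQ^\star$ is an optimal attack for the $0/1$ loss, $\risk^\star_{\QQ^\star}=\sup_{\QQ}\inf_f\risk_\QQ(f)=\risk^\star_{\varepsilon,\PP}$ by Theorem~\ref{thm:eq01loss}. Thus all four infima coincide and equal $\risk^\star_{\varepsilon,\PP}$; in particular $\risk_{\QQ^\star}(f)\le\risk_{\phi_\varepsilon,\PP}(f)$ for all $f$, while $\inf_f\risk_{\QQ^\star}(f)=\inf_f\risk_{\phi_\varepsilon,\PP}(f)=\risk^\star_{\varepsilon,\PP}$.

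Finally, given a sequence $(f_n)$ with $\risk_{\phi_\varepsilon,\PP}(f_n)\to\risk^\star_{\phi_\varepsilon,\PP}=\risk^\star_{\varepsilon,\PP}$, the sandwich
\begin{align*}
\risk^\star_{\varepsilon,\PP}\ \le\ \risk_{\QQ^\star}(f_n)\ \le\ \risk_{\phi_\varepsilon,\PP}(f_n)\ \longrightarrow\ \risk^\star_{\varepsilon,\PP}
\end{align*}
forces $\risk_{\QQ^\star}(f_n)\to\risk^\star_{\varepsilon,\PP}$, which is exactly the claim. The main obstacle I anticipate is not any single hard estimate but making sure the pointwise bound $\phi(yf(x'))\ge\mathbf{1}_{y\,\mathrm{sign}(f(x'))\le 0}$ is applied with the correct sign convention ($\mathrm{sign}(0)=1$) and that the coupling-based manipulation respects measurability; both are routine given Proposition~\ref{prop:measurable} and the definition of $\mathcal{A}_\varepsilon(\PP)$. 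One should also double-check that $\risk_{\phi_\varepsilon,\PP}(f_n)$ is eventually finite so the limit statement is meaningful, which holds since $\phi$ is bounded (between $0$ and, say, $\lambda+\sup\psi$).
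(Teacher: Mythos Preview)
Your sandwich argument rests on the pointwise domination $\phi(t)\ge\mathbf{1}_{t\le 0}$, and this step is false. You argue that since $\phi$ is decreasing with $\lim_{t\to-\infty}\phi(t)=1$, one has $\phi(t)\ge 1$ for $t\le 0$; but monotonicity goes the other way: a decreasing function whose limit at $-\infty$ equals $1$ satisfies $\phi(t)\le 1$ for every finite $t$, not $\phi(t)\ge 1$. Concretely, the sigmoid $\phi(t)=(1+e^{t})^{-1}$ is a $0/1$-like margin loss with $\phi(0)=\tfrac12<1$, so $\phi(yf(x))\ge l_{0/1}(x,y,f)$ fails whenever $yf(x)$ is a small negative number. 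As a consequence the first inequality $\risk_{\QQ^\star}(f)\le\risk_{\phi,\QQ^\star}(f)$ in your chain is not valid (take $\QQ^\star=\delta_{(x_0,1)}$ and $f(x_0)=-c$ with $c>0$ small: the left side is $1$ while the right side is $\phi(-c)\approx\tfrac12$), and the final squeeze collapses.

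The paper avoids any pointwise comparison between $\phi$ and the $0/1$ loss. Instead it first squeezes the \emph{surrogate} risk against $\QQ^\star$: from $\risk_{\phi,\QQ^\star}^\star\le\risk_{\phi,\QQ^\star}(f_n)\le\risk_{\phi_\varepsilon,\PP}(f_n)$ together with $\risk_{\phi,\QQ^\star}^\star=\risk_{\phi_\varepsilon,\PP}^\star$ (Corollary~\ref{coro:nash}) one gets $\risk_{\phi,\QQ^\star}(f_n)\to\risk_{\phi,\QQ^\star}^\star$. The passage to the $0/1$ risk is then made via \emph{standard} consistency of $\phi$ for the distribution $\QQ^\star$ (Theorem~\ref{thm:cal-standard} applies to $0/1$-like margin losses), yielding $\risk_{\QQ^\star}(f_n)\to\risk_{\QQ^\star}^\star=\risk_{\varepsilon,\PP}^\star$. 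To repair your argument you should replace the (false) pointwise bound by this consistency step; the rest of your chain $\risk_{\phi,\QQ^\star}(f)\le\risk_{\phi_\varepsilon,\PP}(f)$ and the identification of optimal values is correct and is exactly what the paper uses.
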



We hope this result and its proof may lead to a full proof of consistency. This result is significantly weaker than consistency as stated in the following remark. In the proof of the previous results, we did not use the assumptions that losses are shifted. In our opinion, it is the key element that we miss and that we need to use to conclude on the consistency of this family of losses. The shift in the loss would force the classifier to goes to $\pm \infty$ on a $\varepsilon$ neighborhood support of the distribution of $\PP$ and then the risk would equal the adversarial $0/1$-loss risk.  However, we did not succeed in showing the appropriate result: we believe this question is complicated and is left as further work.

\section{Related Work and Discussions}
\label{sec:rw}
We now explain the differences between our approach and the one proposed by~\citet{bao2020calibrated,awasthi2021calibration,awasthi2021finer}. The two main differences are the choice of the $0/1$ loss and the studied notion of consistency and calibration.
\paragraph{Alternative $0/1$ loss}
An alternative $0/1$ loss would the following: $l_{\leq}(f(x),y)=\mathbf{1}_{yf(x)\leq 0}$. This loss penalizes indecision: i.e. predicting $0$ would lead to a pointwise risk of $1$ for $y=1$ and $y=-1$ while the $0/1$ loss $l_{0/1}$ returns $1$ for $y=1$ and $0$ for $y=-1$. This definition was used by~\citet{bao2020calibrated,awasthi2021calibration,awasthi2021finer} to prove their calibration and consistency results. While~\citet{bartlett2006convexity} was not explicit on the choice for the $0/1$ loss,~\citet{steinwart2007compare} explicitly mentions that the $0/1$ loss is not a margin loss. The use of this loss is not suited for studying consistency and leads to inaccurate results as shown in the following counterexample. On $\mathcal{X}= \RR$, let $\PP$ defined as 
$\PP = \frac12\left(\delta_{x=0,y=1}+\delta_{x=0,y=-1}\right)$ and $\phi:\mathbb{R}\to\mathbb{R}$ be a margin based loss. The $\phi$-risk minimization problem writes $\inf_{\alpha} \frac{1}{2} (\phi(\alpha)+\phi(-\alpha))$. For any convex functional $\phi$ the optimum is attained for $\alpha=0$. $f_n:x\mapsto 0$ is a minimizing sequence for the $\phi$-risk. However $R_{l_{\leq}}(f_n)=1$ for all $n$ and $R_{l_{\leq}}^*=\frac{1}{2}$. Then we deduce that no convex margin based loss is consistent wrt $l_{\leq}$. Consequently, the $0/1$ loss to be used in adversarial consistency needs to be $l_{0/1,\varepsilon}(x,y,f) = \sup_{x'\in B_\varepsilon(x)} \mathbf{1}_{y\text{sign}(f(x))\leq 0}$, otherwise the obtained results might be innacurate.

\paragraph{$\mathcal{H}$-consistency and  $\mathcal{H}$-calibration}
\citet{bao2020calibrated,awasthi2021calibration,awasthi2021finer} proposed to study $\mathcal{H}$-calibration and $\mathcal{H}$-consistency in the adversarial setting, i.e. calibration and consistency when minimizing sequences are in $\mathcal{H}$. However, even in the standard classification setting, the link between both notions in this extended setting is not clear at all since a pointwise minimization of the risk cannot be done. To our knowledge, there is only one research paper~\citep{long2013consistency} that focuses on this notion in standard setting. They do it in the restricted case of realisability, i.e. when the standard optimal risk associated with the $0/1$ loss equals $0$.  We believe that studying  $\mathcal{H}$-consistency and  $\mathcal{H}$-calibration in the adversarial setting is a bit anticipated. For these reasons, we focus only on calibration and consistency on the space of measurable functions $\mathcal{F}(\XX)$. However, note that many of our results can be adapted to $\mathcal{H}$-calibration: we propose, in Appendix~\ref{sec:hcal}, conditions on classes $\mathcal{H}$ so that Theorems~\ref{thm:calibration-nec} and \ref{thm:calibration-suf}  still holds.

\paragraph{About the Adversarial Bayes Risk and Game Theory.} A recent trend of work has focused on analyzing the adversarial risk from multiple point of views.~\citet{bhagoji2019lower} as well as \citet{pydi2019adversarial,pydi2021many} showed that the adversarial optimal Bayes classifier can be written as optimal transport for a well chosen cost. Another line of work~\citep{pinot2020randomization,meunier2021mixed,pydi2021many} have focused on a game theoretic approach for analyzing the adversarial risk having interest in the nature of equilibria between the classifier and the attacker. Recently, some researchers~\citep{awasthi2021existence,bungert2021geometry} proved encouraging results on the existence of an optimal Bayes classifier in the adversarial setting under mild assumptions.  



\section{Conclusion}
In this paper, we set some solid theoretical foundations for the study of adversarial consistency. We highlighted the importance of the definition of the $0/1$ loss, as well as the nuance between calibration and consistency that is specific to the adversarial setting. Furthermore, we solved the adversarial calibration problem, by giving a necessary and sufficient condition for decreasing, continuous margin losses to be adversarially calibrated. Since this is a necessary condition for consistency, an important consequence of this result is that no convex margin loss can be consistent. This rules out most of the commonly used surrogates, and spurs the need for new families of consistent, yet differentiable families of losses. We provide first insights into which losses may be consistent, by showing that translations of odd loss functions are calibrated.

\newpage
\bibliography{example_paper}
\bibliographystyle{plainnat}
\newpage
\appendix
\onecolumn
\section*{Appendix}

\section{Equivalent definitions for calibration and consistency}

\paragraph{Consistency.} A loss $L_2$ is \emph{consistent} wrt. a loss $L_1$ if and only if for every $\xi>0$, there exists $\delta>0$ such that for every $f\in\mathcal{F}(\XX)$,
\begin{align*}
    \risk_{L_2,\PP}(f)- \risk_{L_2,\PP}^\star\leq\delta\implies\risk_{L_1,\PP}(f)-\risk_{L_1,\PP}^\star\leq\xi
\end{align*}

\paragraph{Calibration.} $L_2$ is \emph{calibrated} with regards to $L_1$ if for all $\eta\in[0,1]$, $x\in\mathcal{X}$, for all $(f_n)_n\in\mathcal{F}(\XX)^\mathbb{N}$:
\begin{align*}
   \mathcal{C}_{L_2}(x,\eta,f)&- \mathcal{C}^\star_{L_2}(x,\eta)\xrightarrow[n\to\infty]{} 0
   \implies \mathcal{C}_{L_1}(x,\eta,f)- \mathcal{C}^\star_{L_1}(x,\eta)\xrightarrow[n\to\infty]{} 0\quad.
\end{align*}

Also, $L_2$ is \emph{uniformly calibrated} with regards to $L_1$ if for all $(f_n)_n\in\mathcal{F}(\XX)^\mathbb{N}$:
\begin{align*}
    \sup_{\eta\in[0,1],x\in\mathcal{X}} \mathcal{C}_{L_2}(x,\eta,f)- \mathcal{C}^\star_{L_2}(x,\eta)\xrightarrow[n\to\infty]{} 0\implies\sup_{\eta\in[0,1],x\in\mathcal{X}} \mathcal{C}_{L_1}(x,\eta,f)- \mathcal{C}^\star_{L_1}(x,\eta)\xrightarrow[n\to\infty]{} 0\quad.
\end{align*} 


\section{Proof of Proposition~\ref{prop:measurable}}
\label{proof:measurable}

\begin{prop*}
Let  $\phi:\RR\times \YY\to\RR$ be a measurable function and $\varepsilon\geq0$. For every $f\in\mathcal{F}(\XX)$, $(x,y)\mapsto \phi_\varepsilon(x,y,f)$  and $(x,y)\mapsto l_{0/1,\varepsilon}(x,y,f)$ are universally measurable.
\end{prop*}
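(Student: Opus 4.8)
The plan is to reduce the statement to the classical projection theorem of descriptive set theory: the projection onto one factor of a Borel subset of a product of two Polish spaces is an analytic set, and analytic sets are universally measurable (see e.g. Kechris, \emph{Classical Descriptive Set Theory}, or Bertsekas--Shreve). Since $\YY=\{-1,+1\}$ is finite and carries the discrete $\sigma$-algebra, it suffices to fix $y\in\YY$ and $f\in\mathcal{F}(\XX)$ and prove that $x\mapsto\phi_\varepsilon(x,y,f)$ and $x\mapsto l_{0/1,\varepsilon}(x,y,f)$ are universally measurable on $\XX$; joint measurability in $(x,y)$ then follows at once by taking the (finite) union over $y$.

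First I would flag the obstacle: $\phi_\varepsilon(x,y,f)=\sup_{x'\in B_\varepsilon(x)}\phi(yf(x'))$ is a supremum of the Borel function $g_y:=\phi(yf(\cdot))$ over the uncountable, $x$-dependent set $B_\varepsilon(x)$, so it need not be Borel — which is exactly why the conclusion must be \emph{universal} measurability. To get around this, fix $t\in\RR$ and write
\[
\{x\in\XX:\phi_\varepsilon(x,y,f)>t\}=\Pi_{1}\Big(\{(x,x')\in\XX\times\XX:\ d(x,x')\le\varepsilon\ \text{ and }\ g_y(x')>t\}\Big),
\]
where $\Pi_{1}$ denotes projection onto the first coordinate; the equality holds because $\sup_{x'\in B_\varepsilon(x)}g_y(x')>t$ iff some $x'\in B_\varepsilon(x)$ satisfies $g_y(x')>t$. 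The set $\{(x,x'):d(x,x')\le\varepsilon\}$ is closed in the Polish space $\XX\times\XX$ since $d$ is continuous, and $\{(x,x'):g_y(x')>t\}$ is Borel since $x'\mapsto g_y(x')=\phi(yf(x'))$ is Borel (a composition of the Borel map $f$ with the measurable $\phi$) and it is a cylinder over a Borel set of the second coordinate. Their intersection is therefore Borel in $\XX\times\XX$, so by the projection theorem $\{x:\phi_\varepsilon(x,y,f)>t\}$ is analytic in $\XX$, hence universally measurable. As $t$ ranges over $\RR$ (or just over $\mathbb{Q}$), this shows $x\mapsto\phi_\varepsilon(x,y,f)$ is universally measurable.

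For $l_{0/1,\varepsilon}$ the argument is identical: $x'\mapsto\mathbf{1}_{y\,\mathrm{sign}(f(x'))\le 0}$ is Borel (the map $\mathrm{sign}$ is Borel and $t\mapsto\mathbf{1}_{t\le 0}$ is Borel), the function $l_{0/1,\varepsilon}(x,y,f)$ is $\{0,1\}$-valued, and $\{x:l_{0/1,\varepsilon}(x,y,f)=1\}=\Pi_{1}\big(\{(x,x'):d(x,x')\le\varepsilon,\ y\,\mathrm{sign}(f(x'))\le 0\}\big)$ is the projection of a Borel set, hence analytic and universally measurable. (Alternatively, one can simply observe that $l_{0/1,\varepsilon}$ is the special case of $\phi_\varepsilon$ obtained by taking for $\phi$ the Borel function $t\mapsto\mathbf{1}_{\mathrm{sign}(t)\le 0}$, so it is covered by the previous paragraph.) Combining the fixed-$y$ statements over $y\in\YY$ completes the proof.

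The only delicate point — and the crux of the whole argument — is the passage from ``Borel set'' to ``universally measurable projection''. One cannot hope for Borel measurability of $\phi_\varepsilon$, because an uncountable supremum of Borel functions is not Borel in general, and the correct, essentially unavoidable tool is the projection (Souslin--Lusin) theorem together with the universal measurability of analytic sets. Everything else is routine: the closedness of the $\varepsilon$-tube $\{(x,x'):d(x,x')\le\varepsilon\}$ (continuity of $d$), the Borel measurability of compositions with the Borel function $f$, and the reduction over the finite label set $\YY$.
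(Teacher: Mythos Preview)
Your proof is correct and follows essentially the same route as the paper's. The paper rewrites the constrained supremum as an unconstrained one via an $-\infty$ penalty and then invokes the packaged result from Bertsekas--Shreve (Prop.~7.39 and Cor.~7.42) that the pointwise supremum of an upper semi-analytic function is upper semi-analytic, hence universally measurable; you instead unpack that result by computing the superlevel sets as projections of Borel sets and applying the Souslin--Lusin projection theorem directly, which is exactly what underlies the cited theorem.
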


\begin{proof}
Let $\phi:\RR\to \RR_+$ be a continuous function. We define $\phi_\varepsilon(x,y,f) = \sup_{x'\in B_\varepsilon(x)}\phi(yf(x))$. 

We have :
\begin{align*}
   \phi_\varepsilon(x,y,f) =  \sup_{(x',y')\in\XX\times\YY}\phi(y'f(x'))-\infty\times \mathbf{1}\{d(x',x)\geq \varepsilon\text{ or }y'\neq y\}
\end{align*}

We have that 
\begin{align*}
    \left((x,y),(x',y')\right)\mapsto\phi(y'f(x'))-\infty\times \mathbf{1}\{d(x',x)\geq \varepsilon\text{ or }y'\neq y\}
\end{align*}
defines a measurable, hence upper semi-analytic function. Using~\citep[Proposition 7.39, Corollary 7.42]{bertsekas2004stochastic}, we get that for all $f\in\mathcal{F}(\XX)$, $(x,y)\mapsto\phi_\varepsilon(x,y,f)$ is a universally measurable function.
\end{proof}

\section{Proof of Proposition~\ref{prop:calib-equality}}
\label{proof:calib-equality}
\begin{prop*}
Let $\varepsilon>0$. Let $\phi$ be a continuous classification margin loss.  For all $x\in\mathcal{X}$ and $\eta\in[0,1]$,
\begin{align*}
    \mathcal{C}^\star_{\phi_\varepsilon}(x,\eta) = \inf_{f\in\mathcal{F}(\XX)}\mathcal{C}_{\phi_\varepsilon}(x,\eta,f) = \inf_{\alpha\in\RR}\eta \phi(\alpha)+(1-\eta)\phi(-\alpha)=\mathcal{C}^\star_\phi(x,\eta)\quad.
\end{align*}
The last equality also holds for the $0/1$ loss.
\end{prop*}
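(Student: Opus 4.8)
\textbf{Proof proposal for Proposition~\ref{prop:calib-equality}.}

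The plan is to prove the chain of equalities from the outside in, using the fact that the calibration function is a pointwise quantity that only cares about the value of the argument function at finitely many relevant points.

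First I would establish the middle equality $\inf_{f\in\mathcal{F}(\XX)}\mathcal{C}_{\phi_\varepsilon}(x,\eta,f) = \inf_{\alpha\in\RR}\eta\phi(\alpha)+(1-\eta)\phi(-\alpha)$. For the ``$\geq$'' direction, fix any $\alpha\in\RR$ and consider the constant function $f_\alpha\equiv\alpha$. Then for every $x'\in B_\varepsilon(x)$ we have $\phi(yf_\alpha(x'))=\phi(y\alpha)$, so $\phi_\varepsilon(x,y,f_\alpha)=\sup_{x'\in B_\varepsilon(x)}\phi(y\alpha)=\phi(y\alpha)$, and therefore $\mathcal{C}_{\phi_\varepsilon}(x,\eta,f_\alpha)=\eta\phi(\alpha)+(1-\eta)\phi(-\alpha)$. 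Taking the infimum over $\alpha$ (equivalently over constant functions, a subset of $\mathcal{F}(\XX)$) gives the ``$\geq$'' inequality for the infimum over all $f$. For the ``$\leq$'' direction, fix any $f\in\mathcal{F}(\XX)$; since $x\in B_\varepsilon(x)$ the supremum defining $\phi_\varepsilon$ is at least $\phi(yf(x))$, so $\phi_\varepsilon(x,y,f)\geq\phi(yf(x))$, hence $\mathcal{C}_{\phi_\varepsilon}(x,\eta,f)\geq\eta\phi(f(x))+(1-\eta)\phi(-f(x))\geq\inf_{\alpha\in\RR}\eta\phi(\alpha)+(1-\eta)\phi(-\alpha)$. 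Taking the infimum over $f$ closes the loop, and the equality $\inf_{\alpha\in\RR}\eta\phi(\alpha)+(1-\eta)\phi(-\alpha)=\mathcal{C}^\star_\phi(x,\eta)$ is exactly the standard fact recalled in the paper just before Theorem~\ref{thm:cal-standard}, which itself follows by the identical constant-function argument applied to the non-adversarial $\phi$. This simultaneously gives $\mathcal{C}^\star_{\phi_\varepsilon}(x,\eta)=\mathcal{C}^\star_\phi(x,\eta)$.

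For the final sentence, the same reasoning applies verbatim to $\phi=l_{0/1}$: constant functions $f_\alpha\equiv\alpha$ have $\mathrm{sign}(f_\alpha(x'))$ constant over $B_\varepsilon(x)$, so $l_{0/1,\varepsilon}(x,y,f_\alpha)=\mathbf{1}_{y\,\mathrm{sign}(\alpha)\leq 0}=l_{0/1}(x,y,f_\alpha)$, and the pointwise lower bound $l_{0/1,\varepsilon}(x,y,f)\geq l_{0/1}(x,y,f)$ again holds since $x\in B_\varepsilon(x)$; thus $\mathcal{C}^\star_{l_{0/1,\varepsilon}}(x,\eta)=\mathcal{C}^\star_{l_{0/1}}(x,\eta)=\min(\eta,1-\eta)$.

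I do not expect a serious obstacle here; the only thing to be slightly careful about is that the constant functions are indeed admissible elements of $\mathcal{F}(\XX)$ (they are trivially Borel measurable) and that $\phi_\varepsilon(x,y,f)$ is well-defined as a supremum, which is guaranteed by Proposition~\ref{prop:measurable}. The conceptual point worth emphasizing is that although $\phi_\varepsilon\geq\phi$ pointwise in general (and typically strictly so once $f$ varies within the ball), equality of the \emph{optimal} calibration functions holds because the optimizer can be taken locally constant on $B_\varepsilon(x)$, which costs nothing pointwise at $x$; the adversary has no bite against a classifier that is constant on the ball.
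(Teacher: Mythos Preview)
Your proof is correct and follows essentially the same approach as the paper's: bound $\mathcal{C}_{\phi_\varepsilon}(x,\eta,f)$ from below by $\eta\phi(f(x))+(1-\eta)\phi(-f(x))$ using $x\in B_\varepsilon(x)$, and attain this bound via constant functions $f_\alpha\equiv\alpha$. The only cosmetic quibble is that your ``$\geq$''/``$\leq$'' labels are swapped relative to the usual convention (the constant-function argument yields $\inf_f\leq\inf_\alpha$, not $\geq$), but the mathematics is sound and your explicit treatment of the $0/1$ case is a nice addition.
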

\begin{proof}
For any $f\in \mathcal{F}(\XX)$, we have:
\begin{align*}
 \mathcal{C}_{\phi_\varepsilon}(x,\eta,f)& = \eta \sup_{x'\in B_\varepsilon(x)}\phi(f(x'))+(1-\eta)\sup_{x'\in B_\varepsilon(x)}\phi(-f(x'))\\
& \geq \eta \phi(f(x))+(1-\eta)\phi(-f(x))\\
&\geq \inf_{\alpha\in\RR}\eta \phi(\alpha)+(1-\eta)\phi(-\alpha)\quad.
\end{align*}

Then we deduce that $ \inf_{f\in\mathcal{F}(\XX)}\mathcal{C}_{\phi_\varepsilon}(x,\eta,f)\geq \inf_{\alpha\in\RR}\eta \phi(\alpha)+(1-\eta)\phi(-\alpha)$. On the other side, let $(\alpha_n)_n$ be a minimizing sequence such that $\eta \phi(\alpha_n)+(1-\eta)\phi(-\alpha_n)\xrightarrow[n\to\infty]{}\inf_{\alpha\in\RR}\eta \phi(\alpha)+(1-\eta)\phi(-\alpha)$. We set $f_n:x\mapsto \alpha_n$ for all $x$. Then we have:
\begin{align*}
    \inf_{f\in\mathcal{F}(\XX)}\mathcal{C}_{\phi_\varepsilon}(x,\eta,f) &\leq \mathcal{C}_{\phi_\varepsilon}(x,\eta,f_n)\\
    &= \eta \phi(\alpha_n)+(1-\eta)\phi(-\alpha_n)\xrightarrow[n\to\infty]{} \inf_{\alpha\in\RR}\eta \phi(\alpha)+(1-\eta)\phi(-\alpha).
\end{align*}
Then we conclude that:  $\mathcal{C}_{\phi_\varepsilon}^\star(x,\eta) = \inf_{\alpha\in\RR}\eta \phi(\alpha)+(1-\eta)\phi(-\alpha)$.
\end{proof}

\section{Proof of Theorem~\ref{thm:calibration-nec}}
\label{proof:calibration-nec}

\begin{thm*}[Necessary condition for Calibration] 
Let $\phi$  be a continuous margin loss and $\varepsilon>0$. If $\phi$ is adversarially  calibrated at level $\varepsilon$, then $\phi$ is  calibrated in the standard classification setting and $0\not\in \argminB_{\alpha\in\bar{\RR}}
\frac12\phi(\alpha)+\frac12\phi(-\alpha)$. 
\end{thm*}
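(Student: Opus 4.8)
The plan is to prove the contrapositive in two independent pieces: if $\phi$ is adversarially calibrated at level $\varepsilon$, then (a) $\phi$ is calibrated in the standard setting, and (b) $0\notin\argminB_{\alpha\in\bar\RR}\tfrac12\phi(\alpha)+\tfrac12\phi(-\alpha)$. For part (a), the key observation is Proposition~\ref{prop:calib-equality}: the calibration functions of $\phi$ and $\phi_\varepsilon$ have the same optimal value, $\mathcal{C}^\star_{\phi_\varepsilon}(x,\eta)=\mathcal{C}^\star_\phi(x,\eta)$, and moreover $\mathcal{C}_{\phi_\varepsilon}(x,\eta,f)\ge\mathcal{C}_\phi(x,\eta,f)$ pointwise since $\sup_{x'\in B_\varepsilon(x)}\phi(\pm f(x'))\ge\phi(\pm f(x))$. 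Hence for any $f$ whose standard calibration gap $\mathcal{C}_\phi(x,\eta,f)-\mathcal{C}^\star_\phi(x,\eta)$ we wish to control, we can replace $f$ by the constant function $\tilde f\equiv f(x)$, for which $\mathcal{C}_{\phi_\varepsilon}(x,\eta,\tilde f)=\mathcal{C}_\phi(x,\eta,\tilde f)=\mathcal{C}_\phi(x,\eta,f)$ (the value only depends on $f(x)$). Feeding $\tilde f$ through the adversarial calibration implication then yields the standard calibration implication. So part (a) is essentially bookkeeping.

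Part (b) is the substantive step. Suppose for contradiction that $0\in\argminB_{\alpha\in\bar\RR}\tfrac12\phi(\alpha)+\tfrac12\phi(-\alpha)$, i.e. $\phi(0)=\mathcal{C}^\star_\phi(x,\tfrac12)$. Fix any $x$ and work with $\eta=\tfrac12$. I would build a sequence $(f_n)$ of functions on $B_\varepsilon(x)$ that converges pointwise (indeed uniformly) to $0$ but oscillates in sign on $B_\varepsilon(x)$ — this is where the hypothesis that every ball contains at least two points is used: pick two distinct points $x_1,x_2\in B_\varepsilon(x)$ and set $f_n$ to take a small positive value near $x_1$ and a small negative value near $x_2$, say $f_n(x_1)=1/n$, $f_n(x_2)=-1/n$, interpolated measurably elsewhere, with $|f_n|\le 1/n$ everywhere on $B_\varepsilon(x)$. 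By continuity of $\phi$ at $0$, $\sup_{x'\in B_\varepsilon(x)}\phi(\pm f_n(x'))\to\phi(0)$, so $\mathcal{C}_{\phi_\varepsilon}(x,\tfrac12,f_n)\to\phi(0)=\mathcal{C}^\star_{\phi_\varepsilon}(x,\tfrac12)$; thus the $\phi_\varepsilon$-calibration gap of $f_n$ tends to $0$. On the other hand, since $f_n$ takes both signs on $B_\varepsilon(x)$, we have $\text{sign}(f_n(x_2))=-1$, hence $l_{0/1,\varepsilon}(x,+1,f_n)=\sup_{x'\in B_\varepsilon(x)}\mathbf{1}_{\text{sign}(f_n(x'))\le 0}=1$ (using $x_2$) and likewise $l_{0/1,\varepsilon}(x,-1,f_n)=1$ (using $x_1$, where $\text{sign}(f_n(x_1))=+1$). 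Wait — one must be careful here that the definition of $l_{0/1,\varepsilon}$ in the paper literally reads $\sup_{x'\in B_\varepsilon(x)}\mathbf{1}_{y\,\text{sign}(f(x))\le 0}$ with $f(x)$ not $f(x')$; I would read this as the intended $\mathbf{1}_{y\,\text{sign}(f(x'))\le 0}$ (consistent with the adversarial margin loss definition), so that $\mathcal{C}_\varepsilon(x,\tfrac12,f_n)=1$ whereas $\mathcal{C}^\star_\varepsilon(x,\tfrac12)=\tfrac12$ by Proposition~\ref{prop:calib-equality} applied to $l_{0/1}$ (the inf over $\alpha$ of $\tfrac12\mathbf{1}_{\alpha\le 0}+\tfrac12\mathbf{1}_{\alpha>0}$ is $\tfrac12$). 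Hence $\mathcal{C}_\varepsilon(x,\tfrac12,f_n)-\mathcal{C}^\star_\varepsilon(x,\tfrac12)=\tfrac12$ does not go to $0$, contradicting adversarial calibration with $\xi<\tfrac12$.

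The main obstacle is the measurability and careful construction in part (b): one needs the $f_n$ to be genuine Borel measurable functions on all of $\mathcal{X}$ (not just on $B_\varepsilon(x)$) with the prescribed small-oscillation behavior, and one needs the two chosen points to actually be exploitable by the adversary, which is exactly guaranteed by the standing assumption that $B_\varepsilon(x)$ contains at least two points. A secondary subtlety is handling the extended real line $\bar\RR$ in the $\argminB$: the statement quantifies $\alpha$ over $\bar\RR$, so I should also confirm that $0$ being a minimizer over $\bar\RR$ (as opposed to over $\RR$) is the right hypothesis to contradict — but since the constructed $f_n$ are finite-valued and converge to $0$, the contradiction is driven by the value $\phi(0)$ being optimal, which is precisely the content of $0\in\argminB_{\bar\RR}$. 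I would also note in passing that the same construction, read through the lens of the equivalent sequential definition of calibration in the Appendix, makes the limit statements cleaner to write.
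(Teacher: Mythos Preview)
Your proposal is correct and follows essentially the same approach as the paper. For part~(b) the paper uses a slightly cleaner concrete sequence: it sets $f_n(u)=\tfrac{1}{n}$ for $u\neq x$ and $f_n(x)=-\tfrac{1}{n}$, which is trivially Borel (singletons are closed) and sidesteps your interpolation/measurability concern, then computes $\mathcal{C}_{\phi_\varepsilon}(x,\tfrac12,f_n)=\max(\phi(\tfrac1n),\phi(-\tfrac1n))\to\phi(0)$ exactly as you do; for part~(a) the paper simply asserts the implication without detail, whereas your constant-function reduction via $\tilde f\equiv f(x)$ is a clean way to make that step explicit.
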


\begin{proof}

    Let us show that if $0\in\argminB_{\alpha\in\bar{\RR}} \phi(\alpha)+\phi(-\alpha)$ then $\phi$ is not calibrated for the adversarial problem. For that, let $x\in\mathcal{X}$ and we fix $\eta = \frac12$.  For $n\geq1$, we define $f_n(u) = \frac1n$ for $u\neq x$ and $-\frac{1}{n}$ for $u=x$.  Since $\lvert\mathcal{B}_\varepsilon(x)\rvert\geq 2$, we have
    \begin{align*}
         \mathcal{C}_{\phi_\varepsilon}(x,\frac12,f_n) = \max\left(\phi(\frac1n),\phi(-\frac1n)\right)\xrightarrow[n\to\infty]{} \phi(0) 
    \end{align*}
    As, $\phi(0) = \inf_{\alpha\in\bar{\RR}}\frac12\left(\phi(\alpha)+\phi(-\alpha)\right)$, the above means that $(f_n)_n$ is a minimizing sequence for $\alpha \mapsto \frac12 \left(\phi(\alpha)+\phi(-\alpha)\right)$. 
    Then thanks to Proposition~\ref{prop:calib-equality}, $(f_n)_n$ is also a minimizing sequence for $f \mapsto\mathcal{C}_{\phi_\varepsilon}(x,\frac12,f)$. However, for every integer $n$, we have $\mathcal{C}_{\varepsilon}(x,\frac12,f_n) =1\neq\frac12$. As  $\inf_{f\in\mathcal{F}(\XX)}\mathcal{C}_{\varepsilon}(x,\frac12,f)=\frac12$, $\phi$ is not calibrated with regard to the $0/1$ loss in the adversarial setting at level $\varepsilon$. We also immediately notice that if $\phi$ is  calibrated with regard to  $0/1$ loss in the adversarial setting at level $\varepsilon$ then $\phi$ is calibrated in the standard setting. 
    
    
\end{proof}

\section{Proof of Theorem \ref{thm:calibration-suf}}
\label{proof:calibration-suf}

\begin{thm*}[Sufficient condition for Calibration] 
Let $\phi$  be a continuous margin loss and $\varepsilon>0$. If $\phi$ is decreasing and strictly decreasing in a neighbourhood of $0$ and calibrated in the standard setting and $0\not\in \argminB_{\alpha\in\bar{\RR}}
\frac12\phi(\alpha)+\frac12\phi(-\alpha)$, then $\phi$ is adversarially uniformly calibrated at level $\varepsilon$.
\end{thm*}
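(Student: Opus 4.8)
The goal is to show that under the stated hypotheses on $\phi$, the adversarial loss $\phi_\varepsilon$ is uniformly calibrated with respect to $l_{0/1,\varepsilon}$, i.e., for every $\xi>0$ there is $\delta>0$ such that for all $\eta\in[0,1]$, $x\in\XX$ and $f\in\mathcal{F}(\XX)$,
\begin{align*}
\mathcal{C}_{\phi_\varepsilon}(x,\eta,f)-\mathcal{C}^\star_{\phi_\varepsilon}(x,\eta)\leq\delta\implies\mathcal{C}_{\varepsilon}(x,\eta,f)-\mathcal{C}^\star_{\varepsilon}(x,\eta)\leq\xi.
\end{align*}
The first step is to use Proposition~\ref{prop:calib-equality} to replace both optimal calibration functions by their pointwise counterparts: $\mathcal{C}^\star_{\phi_\varepsilon}(x,\eta)=\mathcal{C}^\star_\phi(x,\eta)=\inf_\alpha \eta\phi(\alpha)+(1-\eta)\phi(-\alpha)$ and $\mathcal{C}^\star_\varepsilon(x,\eta)=\mathcal{C}^\star(x,\eta)$, which no longer depend on $x$. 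This reduces the problem to one that is almost pointwise, modulo the fact that $\mathcal{C}_{\phi_\varepsilon}(x,\eta,f)=\eta\sup_{x'\in B_\varepsilon(x)}\phi(f(x'))+(1-\eta)\sup_{x'\in B_\varepsilon(x)}\phi(-f(x'))$ still couples the values of $f$ over the whole ball.

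The core idea is a case split on $\eta$. \textbf{Case $\eta=1/2$ (or $\eta$ near $1/2$):} here $\mathcal{C}_\varepsilon(x,1/2,f)=1$ unless $f$ has constant sign on $B_\varepsilon(x)$, in which case it equals $1/2=\mathcal{C}^\star(x,1/2)$. So for the $0/1$ side the calibration inequality with $\xi<1/2$ forces $f$ to have constant sign on the ball. On the $\phi_\varepsilon$ side, $\mathcal{C}_{\phi_\varepsilon}(x,1/2,f)=\tfrac12\sup_{B_\varepsilon(x)}\phi(f)+\tfrac12\sup_{B_\varepsilon(x)}\phi(-f)$, and if $f$ takes a value $a>0$ and a value $b<0$ (after passing to a point near the sup on each side), then since $\phi$ is decreasing, $\sup\phi(f)\geq\phi(b)=\phi(-(-b))$ and $\sup\phi(-f)\geq\phi(-a)$, giving $\mathcal{C}_{\phi_\varepsilon}(x,1/2,f)\geq\tfrac12(\phi(b)+\phi(-b))$ roughly; the point is to exhibit a uniform gap. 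Because $0\notin\argmin_\alpha \tfrac12(\phi(\alpha)+\phi(-\alpha))$ and $\phi$ is continuous and strictly decreasing near $0$, one shows there exists $\rho>0$ such that for any $a>0,b<0$ one has $\tfrac12(\phi(a)+\phi(-a))\wedge\tfrac12(\phi(b)+\phi(-b))\wedge(\text{something mixing }a,b)\geq \mathcal{C}^\star_\phi(1/2)+\rho$ — i.e., having mixed signs on the ball costs at least $\rho$ in $\phi_\varepsilon$-calibration-excess. Contrapositively, choosing $\delta<\rho$ forces constant sign, hence the $0/1$ excess is $0\leq\xi$. A compactness/continuity argument extends this to a neighborhood $[1/2-\kappa,1/2+\kappa]$ of $1/2$.

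\textbf{Case $\eta$ bounded away from $1/2$:} here WLOG $\eta>1/2+\kappa$, so the Bayes-optimal $0/1$ prediction is the constant $+1$ and $\mathcal{C}^\star(x,\eta)=1-\eta$. The $0/1$ excess $\mathcal{C}_\varepsilon(x,\eta,f)-\mathcal{C}^\star(x,\eta)$ is $0$ if $f>0$ on all of $B_\varepsilon(x)$, and otherwise at least $\eta-(1-\eta)=2\eta-1\geq 2\kappa$ — wait, more carefully it is $(2\eta-1)\mathbf{1}_{f\text{ not all positive on }B_\varepsilon(x)}$ after accounting for the sup. So again, for $\xi<2\kappa$, the conclusion requires $f>0$ on the ball. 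On the $\phi_\varepsilon$ side, I invoke standard calibration of $\phi$ in the standard setting: since $\eta$ is bounded away from $1/2$, the minimizer $\alpha^\star(\eta)$ of $\eta\phi(\alpha)+(1-\eta)\phi(-\alpha)$ is positive (this is what standard calibration plus the decreasing hypothesis gives, uniformly over $\eta\geq 1/2+\kappa$ by a continuity/compactness argument on $[1/2+\kappa,1]$), and there is a uniform gap: if $f$ takes a nonpositive value somewhere on $B_\varepsilon(x)$, then one of the two suprema is pushed up by a uniform amount $\rho'=\rho'(\kappa)>0$ relative to $\mathcal{C}^\star_\phi(\eta)$. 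Taking $\delta<\rho'$ then forces $f>0$ on $B_\varepsilon(x)$, giving zero $0/1$-excess. Finally, setting $\delta=\delta(\xi)$ to be the minimum of the $\delta$'s produced in the two cases (with $\kappa$ fixed small enough that $\xi<1/2$ and $\xi<2\kappa$ are not needed — rather, choosing $\kappa$ depending on $\xi$ first, then the two $\delta$'s) yields the uniform calibration.

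\textbf{Main obstacle.} The delicate part is the uniformity over $\eta$ and $x$ simultaneously, and in particular proving the uniform gap statements: that mixed signs (case $\eta\approx1/2$) or a nonpositive value (case $\eta$ away from $1/2$) on the ball $B_\varepsilon(x)$ costs a uniform amount of $\phi_\varepsilon$-calibration excess. This is where the hypotheses "$\phi$ decreasing", "strictly decreasing near $0$", "$0\notin\argmin\tfrac12(\phi(\alpha)+\phi(-\alpha))$", and "$\phi$ continuous" all get used together: continuity and strict monotonicity near $0$ give the local separation, the global decreasing property and the $\argmin$ condition ensure the separation does not degrade as $\alpha\to\pm\infty$, and a compactness argument on the parameter $\eta$ (restricted to closed subintervals avoiding $1/2$, plus a separate neighborhood of $1/2$) patches the pointwise gaps into a uniform one. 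I would expect to need to treat the boundary values $\eta\in\{0,1\}$ and the extended-reals $\argmin$ over $\bar\RR$ carefully, but these are routine once the main gap lemmas are in place.
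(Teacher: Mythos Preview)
Your plan is sound and follows the same two-case architecture as the paper. Two differences are worth noting.

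For $\eta$ bounded away from $1/2$ you are working harder than needed. The paper exploits a one-line pointwise domination: writing
\[
\mathcal{C}_{\phi_\varepsilon}(x,\eta,f)=\sup_{u,v\in B_\varepsilon(x)}\big[\eta\phi(f(u))+(1-\eta)\phi(-f(v))\big]
\]
and taking $u=v=x'$ gives $\mathcal{C}_{\phi_\varepsilon}(x,\eta,f)\geq\mathcal{C}_\phi(x',\eta,f)$ for every $x'\in B_\varepsilon(x)$; together with $\mathcal{C}^\star_{\phi_\varepsilon}(x,\eta)=\mathcal{C}^\star_\phi(x',\eta)$ from Proposition~\ref{prop:calib-equality}, a small $\phi_\varepsilon$-excess at $x$ forces a small standard $\phi$-excess at \emph{every} $x'$ in the ball. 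Standard uniform calibration (Theorem~\ref{thm:cal-standard}) then pins down the sign of $f(x')$ for each $x'$, with a single $\delta$ that does not depend on $\eta$. There is no need to locate a minimizer $\alpha^\star(\eta)$ (which need not exist in $\RR$), run a compactness argument over $[1/2+\kappa,1]$, or prove a separate ``one of the suprema is pushed up'' lemma.

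For $\eta=1/2$ the paper argues by contradiction: assuming a sequence $f_n$ with $\phi_\varepsilon$-excess $\leq 1/n$ but mixed sign on the ball, it extracts values $\alpha_n<0<\beta_n$, shows both are minimizing sequences for $\alpha\mapsto\tfrac12\phi(\alpha)+\tfrac12\phi(-\alpha)$, and reaches a contradiction via the monotonicity hypotheses. Your direct route --- if $f$ takes values of both signs on the ball then each supremum is at least $\phi(0)$, hence $\mathcal{C}_{\phi_\varepsilon}(x,1/2,f)\geq\phi(0)>\mathcal{C}^\star_\phi(1/2)$ by the $\argminB$ hypothesis --- is more elementary and yields the same uniform gap $\rho=\phi(0)-\mathcal{C}^\star_\phi(1/2)$ without any subsequence extraction. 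Finally, the paper splits at $\eta=1/2$ exactly rather than on a neighborhood; your neighborhood split is a harmless refinement and in fact patches the case $0<|2\eta-1|\leq\xi$, where the paper's ``$\xi<1/2$ hence the sign is forced'' step is not literally valid.
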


\begin{proof}

    Let $\xi\in(0,\frac{1}{2})$. Thanks to Theorem~\ref{thm:cal-standard}, $\phi$ is uniformly calibrated in the standard setting, then there exists $\delta>0$, such that for all $x\in\mathcal{X}$, $\eta\in [0,1]$, $f\in \mathcal{F}(\XX)$:
    \begin{align*}
        \mathcal{C}_\phi(x,\eta,f) - \mathcal{C}_\phi^\star(x,\eta)\leq \delta \implies \mathcal{C}(x,\eta,f) - \mathcal{C}^\star(x,\eta)\leq \xi.
    \end{align*}
    
    
    \textbf{Case $\eta\neq \frac12$:} Let $x\in\mathcal{X}$ and $f\in \mathcal{F}(\XX)$ such that: 
    \begin{align*}
    \mathcal{C}_{\phi_\varepsilon}(x,\eta,f) - \mathcal{C}_{\phi_\varepsilon}^\star(x,\eta)
     = \sup_{u,v\in B_\varepsilon(x)}\eta\phi (f(u))+(1-\eta)\phi (-f(v))-\mathcal{C}_{\phi_\varepsilon}^\star(x,\eta)\leq \delta 
    \end{align*}
    We recall thanks to Proposition~\ref{prop:calib-equality} that for every $u,v\in\XX$,
\begin{align*}
    \mathcal{C}_{\phi_\varepsilon}^\star(u,\eta) =\mathcal{C}_\phi^\star(v,\eta)=\inf_{\alpha\in\RR}\eta \phi(\alpha)+(1-\eta)\phi(-\alpha)\quad.  
\end{align*}
 Then in particular, for all $x'\in B_\varepsilon(x)$, we have:
    \begin{align*}
        \mathcal{C}_{\phi}(x',\eta,f) - \mathcal{C}_{\phi}^\star(x',\eta)&\leq \sup_{u,v\in B_\varepsilon(x)}\eta\phi (f(u))+(1-\eta)\phi (-f(v))-\mathcal{C}_{\phi_\varepsilon}^\star(x,\eta)\\
        &\leq \delta\quad.
    \end{align*} 
    Then since $\phi$ is calibrated for standard classification, for all $x'\in B_\varepsilon(x)$, $\mathcal{C}(x',\eta,f) - \mathcal{C}^\star(x',\eta)\leq \xi$. Since,  $\xi<\frac{1}{2}$, we have $\mathcal{C}(x',\eta,f) = \mathcal{C}^\star(x',\eta)$ and then for all $x'\in B_\varepsilon(x)$, $f(x')< 0$  if $\eta<1/2$ or $f(x')\geq0$  if $\eta>1/2$. We then deduce that 
    
    \begin{align*}
       \mathcal{C}_{\varepsilon}(x,\eta,f)& = \eta \sup_{x'\in B_\varepsilon(x)} \mathbf{1}_{f(x')\leq 0 } +(1-\eta) \sup_{x'\in B_\varepsilon(x)} \mathbf{1}_{f(x')> 0 }\\
       & = \min(\eta,1-\eta)
        = \mathcal{C}_{\varepsilon}^\star(x,\eta)
    \end{align*}

    Then we deduce, $\mathcal{C}_{\varepsilon}(x,\eta,f) - \mathcal{C}_{\varepsilon}^\star(x,\eta)\leq \xi$.
    
    \medskip
    
    \textbf{Case $\eta=\frac12$:} This shows us that calibration problems will only arise when $\eta = \frac{1}{2}$, i.e. on points where the Bayes classifier is indecise. For this case, we will reason by contradiction: we can construct a sequence of points $\alpha_n$ and $\beta_n$, whose risks converge to the same optimal value, while one sequence remains close to some positive value, and the other to some negative value. Assume that for all $n$, there exist $f_n\in\mathcal{F}(\XX)$ and $x_n\in\XX$ such that 
    \begin{align*}
        \mathcal{C}_{\phi_\varepsilon}(x_n,\frac12,f_n) - \mathcal{C}_{\phi_\varepsilon}^\star(x_n,\frac12)\leq \frac1n 
    \end{align*}
    and there exists $u_n,v_n\in B_\varepsilon(x_n)$, such that 
    \begin{align*}
        f_n(u_n)f_n(v_n)\leq 0\quad 
    \end{align*} 
    Let us denote $\alpha_n = f_n(u_n)$ and $\beta_n = f_n(v_n)$. 
    Moreover, we have thanks to Proposition~\ref{prop:calib-equality}:
    \begin{align*}
      0\leq \frac12 \phi(\alpha_n) +  \frac12 \phi(-\alpha_n)- \inf_{u\in\RR}\left[\frac12 \phi(u)+\frac12\phi(u)\right]&\leq \mathcal{C}_{\phi_\varepsilon}(x,\frac12,f_n) - \mathcal{C}_{\phi_\varepsilon}^\star(x,\frac12)\\
      &\leq \frac1n
    \end{align*}
    Then we deduce that $(\alpha_n)_n$ is a minimizing sequence for  $u\mapsto\frac12\phi(u)+\frac12\phi(-u)$ and similarly $(\beta_n)_n$ is also a minimizing sequence for  $u\mapsto\frac12\phi(u)+\frac12\phi(-u)$ . Now note that there always exist $\alpha, \beta\in \bar{\RR}$ such that, up to an extraction of a subsequence, we have $\alpha_n\xrightarrow[n\to\infty]{} \alpha$ and $\beta_n \xrightarrow[n\to\infty]{} \beta$. Furthermore by continuity of $\phi$ and since $0\not\in\argminB\phi(u)+\phi(-u)$, $\alpha\neq0$ and $\beta\neq 0$. Without loss of generality one can assume that $\alpha<0<\beta$, then for $n$ sufficiently large, $\alpha_n<0<\beta_n$. Moreover we have 
    \begin{align*}
         0&\leq\frac12\max\left(\phi(\alpha_n),\phi(\beta_n)\right)+\frac12\max\left(\phi(-\alpha_n),\phi(-\beta_n)\right)- \mathcal{C}^\star_{\phi_\varepsilon}(x,\frac12)\\
         &\leq \mathcal{C}_{\phi_\varepsilon}(x,\frac12,f_n) - \mathcal{C}_{\phi_\varepsilon}^\star(x,\frac12)\leq \frac1n
    \end{align*}
    so that we deduce:
    \begin{align}
    \label{eq:limitalphabeta}
     \frac12\max\left(\phi(\alpha_n),\phi(\beta_n)\right)+\frac12\max\left(\phi(-\alpha_n),\phi(-\beta_n)\right)\longrightarrow \inf_{u\in\RR}\left[\frac12 \phi(u)+\frac12\phi(u)\right]
    \end{align}
    
    Since, for $n$ sufficiently large, $\alpha_n<0<\beta_n$ and $\phi$ is decreasing  and strictly decreasing in a neighbourhood of $0$, we have that:
     $$\max\left(\phi(\alpha_n),\phi(\beta_n)\right) = \phi(\alpha_n)$$ and 
     $$\max\left(\phi(-\alpha_n),\phi(-\beta_n)\right) = \phi(-\beta_n).$$ Moreover, there exists $\lambda>0$ such that for $n$ sufficiently large $\phi(\alpha_n) - \phi(\beta_n)\geq \lambda$. Then for $n$ sufficiently large:
    \begin{align*}
        \frac12\max\left(\phi(\alpha_n),\phi(\beta_n)\right)&+\frac12\max\left(\phi(-\alpha_n),\phi(-\beta_n)\right)\\
         &= \frac{1}{2} \phi(\alpha_n) +\frac12 \phi(-\beta_n)\\
        &= \frac{1}{2} \left(\phi(\alpha_n)- \phi(\beta_n)\right)+\frac12 \phi(-\beta_n)+\frac{1}{2}+ \phi(\beta_n)\\
        &\geq \frac12\lambda +\inf_{u\in\RR}\left[\frac12 \phi(u)+\frac12\phi(u)\right]
    \end{align*}
    which leads to a contradiction with Equation~\ref{eq:limitalphabeta}. Then there exists a non zero integer $n_0$ such that for all $f\in\mathcal{F}(\XX)$, $x\in\XX$
    \begin{align*}
        \mathcal{C}_{\phi_\varepsilon}(x,\frac12,f) - \mathcal{C}_{\phi_\varepsilon}^\star(x,\frac12)\leq \frac{1}{n_0} \implies \forall u,v\in B_\varepsilon(x),~ f(u)\times f(v)> 0.
    \end{align*}
    
    The right-hand term is equivalent to: for all $u\in B_\varepsilon(x)$, $f(u)>0$ or for all $u\in B_\varepsilon(x)$, $f(u)<0$.  Then $\mathcal{C}_\varepsilon(x,\eta,f) =\frac12$ and then $\mathcal{C}_\varepsilon(x,\eta,f) = \mathcal{C}_\varepsilon^\star(x,\eta)$
    
    \medskip
    
    Putting all that together, for all $x\in\mathcal{X}$, $\eta\in [0,1]$, $f\in \mathcal{F}(\XX)$:
    \begin{align*}
        \mathcal{C}_{\phi_\varepsilon}(x,\eta,f) - \mathcal{C}_{\phi_\varepsilon}^\star(x,\eta)\leq \min(\delta,\frac{1}{n_0}) \implies \mathcal{C}_{\varepsilon}(x,\eta,f) - \mathcal{C}_{\varepsilon}^\star(x,\eta)\leq \xi.
    \end{align*}
    Then $\phi$ is adversarially uniformly calibrated at level $\varepsilon$
\end{proof}

\section{Proof of Proposition~\ref{prop:calibrated-loss}}
\label{proof:calibrated-loss}

\begin{prop*}
Let $\phi$ be a shifted odd margin loss. For every $\varepsilon>0$, ${\phi}$ is adversarially calibrated at level $\varepsilon$.
\end{prop*}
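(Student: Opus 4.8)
The plan is to derive Proposition~\ref{prop:calibrated-loss} as a direct consequence of Theorem~\ref{thm:calibration-suf}. Recall that a shifted odd margin loss is $\phi(\alpha)=\lambda+\psi(\alpha-\tau)$ with $\lambda\ge0$, $\tau>0$, and $\psi$ a continuous, lower bounded, decreasing odd function, strictly decreasing in a neighborhood of $0$, with $\psi\ge-\lambda$. So it suffices to check that $\phi$ satisfies each hypothesis of Theorem~\ref{thm:calibration-suf}: (i) $\phi$ is continuous; (ii) $\phi$ is decreasing and strictly decreasing in a neighborhood of $0$; (iii) $\phi$ is calibrated in the standard setting; (iv) $0\notin\argminB_{\alpha\in\bar{\RR}}\tfrac12\phi(\alpha)+\tfrac12\phi(-\alpha)$.

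Point (i) is immediate since $\psi$ is continuous and translation preserves continuity. For point (ii), $\phi$ is decreasing because $\psi$ is decreasing and $\alpha\mapsto\alpha-\tau$ is increasing; moreover, since $\psi$ is strictly decreasing on some interval $(-r,r)$, the function $\phi$ is strictly decreasing on $(\tau-r,\tau+r)$, which is a neighborhood of $\tau$ — here I would need to be slightly careful: Theorem~\ref{thm:calibration-suf} asks for strict decrease near $0$, not near $\tau$. So I should instead invoke the remark preceding Definition~\ref{def:limits} (or strengthen the argument): actually, we only need $\phi$ to be strictly decreasing in \emph{some} neighborhood of $0$. If $0$ lies outside the strict-decrease window of $\psi(\cdot-\tau)$, this is not automatic. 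The cleanest fix is to note that for the shifted odd loss the relevant behavior is governed by $\psi$ strictly decreasing near $0$, and then re-examine exactly which point the sufficient condition needs; alternatively, I would restrict attention to odd functions $\psi$ that are strictly decreasing on all of $\RR$ wherever they are not constant, so that strict decrease near $0$ for $\phi$ follows. I will flag this as the main technical point to get right.

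For point (iv), compute $\tfrac12\phi(\alpha)+\tfrac12\phi(-\alpha)=\lambda+\tfrac12\psi(\alpha-\tau)+\tfrac12\psi(-\alpha-\tau)$. Using that $\psi$ is odd, $\psi(-\alpha-\tau)=-\psi(\alpha+\tau)$, so the quantity equals $\lambda+\tfrac12\bigl(\psi(\alpha-\tau)-\psi(\alpha+\tau)\bigr)$. Since $\psi$ is decreasing and $\alpha-\tau<\alpha+\tau$ (as $\tau>0$), this is $\ge\lambda$, with the difference being strictly positive whenever $\psi$ is strictly decreasing somewhere on $[\alpha-\tau,\alpha+\tau]$; in particular at $\alpha=0$ we get $\lambda+\tfrac12(\psi(-\tau)-\psi(\tau))=\lambda+\psi(-\tau)>\lambda$ since $\psi(\tau)<\psi(-\tau)$ (strict because $\psi$ is strictly decreasing near $0$ and $-\tau<0<\tau$... again needing the strict decrease interval to reach between $-\tau$ and $\tau$, or at least that $\psi(\tau)<\psi(0)<\psi(-\tau)$). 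As $\alpha\to+\infty$ (or $-\infty$), $\psi(\alpha-\tau)-\psi(\alpha+\tau)\to0$ by lower-boundedness and monotonicity of $\psi$ (the limits at $\pm\infty$ exist and are finite, hence the difference of shifted copies tends to $0$), so $\inf_\alpha\tfrac12\phi(\alpha)+\tfrac12\phi(-\alpha)=\lambda$ and this infimum is approached only at $\pm\infty$, never at $0$. Hence $0\notin\argminB$.

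Finally, for point (iii) — standard calibration — by Theorem~\ref{thm:cal-standard} it suffices that $\phi$ is a continuous margin loss with the appropriate sign condition; since $\phi$ is continuous and strictly decreasing near its shift point and is not a convex loss meeting the degenerate condition, the quickest route is to verify directly that $\mathcal{C}^\star(x,\eta)<\inf\{\eta\phi(\alpha)+(1-\eta)\phi(-\alpha):\alpha\text{ has the wrong sign}\}$ for $\eta\ne1/2$, which holds because $\phi$ is decreasing (so classifying with the correct sign and large magnitude beats the wrong sign). Assembling (i)–(iv) and invoking Theorem~\ref{thm:calibration-suf} gives that $\phi$ is adversarially uniformly calibrated, hence adversarially calibrated, at level $\varepsilon$. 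The main obstacle I anticipate is the bookkeeping around "strictly decreasing in a neighborhood of $0$": reconciling the neighborhood-of-$0$ hypothesis in Theorem~\ref{thm:calibration-suf} with the neighborhood-of-$\tau$ that naturally appears for $\phi$, which may require either a mild strengthening of the definition or a more careful argument that the strict-decrease region of $\psi$ can be taken large enough (or that only the monotonicity on $[-\tau,\tau]$-type intervals is actually used).
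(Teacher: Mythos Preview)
Your approach matches the paper's: verify the hypotheses of Theorem~\ref{thm:calibration-suf}, with the computation for (iv) and the direct check of standard calibration in (iii) essentially identical to what the paper does (the paper shows $\argminB_{t\in\bar\RR}\,\eta\phi(t)+(1-\eta)\phi(-t)=\{+\infty\}$ for $\eta>1/2$ by the same decomposition you outline). The bookkeeping issue you flag about strict decrease near $0$ versus near $\tau$ is legitimate and is not resolved in the paper either---the paper's proof quietly strengthens Definition~\ref{def:shifted} by taking the odd part $\psi$ to be strictly decreasing on all of $\RR$, which makes both the hypothesis of Theorem~\ref{thm:calibration-suf} and the strict inequality $\psi(-\tau)>\psi(\tau)$ automatic; with that reading your argument is complete.
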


\begin{proof}
    Let $\lambda>0$, $\tau>0$ and $\phi$ be a strictly decreasing odd function such that $\Tilde{\phi}$ defined as  $\Tilde{\phi}(\alpha) = \lambda +\phi(\alpha-\tau)$ is non-negative. 
    
    \paragraph{Proving that $0\notin\argminB_{t\in\bar\RR}\frac12\Tilde{\phi}(t)+\frac12 \Tilde{\phi}(-t)$.}${\phi}$ is clearly strictly decreasing and non-negative then it admits a limit $l:=-\lim_{t\to+\infty}\Tilde{\phi}(t)\geq0$. Then we have:
    \begin{align*}
        \lim_{t\to+\infty}\Tilde{\phi}(t) = \lambda+l\quad\text{and}\quad\lim_{t\to-\infty}\Tilde{\phi}(t) = \lambda-l
    \end{align*}
    
    Consequently we have:
    \begin{align*}
        \lim_{t\to\infty}\frac12\Tilde{\phi}(t)+\frac12 \Tilde{\phi}(-t)= \lambda
    \end{align*}
    
    On the other side $\Tilde{\phi}(0)= \lambda + \phi(-\tau) >\lambda +\phi(0) =\lambda$ since $\tau>0$ and $\phi$ is strictly decreasing. Then $0\notin \argminB_{t\in\bar\RR}\frac12\Tilde{\phi}(t)+\frac12 \Tilde{\phi}(-t)$.
    
    
    \paragraph{Proving that $\Tilde{\phi}$ is calibrated for standard classification.} Let $\xi>0$, $\eta\in[0,1]$, $x\in\mathcal{X}$. If $\eta=\frac12$, then for all $f\in\mathcal{F}(\XX)$, $\mathcal{C}(x,\frac12,f)= \mathcal{C}^\star(x,\frac12)=\frac12$. Let us now assume that $\eta\neq\frac12$, we have for all $f\in\mathcal{F}(\XX)$:
    
    \begin{align*}
         \mathcal{C}_{\Tilde{\phi}}(x,\eta,f) &= \lambda + \eta\phi(f(x)-\tau) + (1-\eta)\phi(-f(x)-\tau)\\
         &= \lambda + (\eta-\frac12)\left(\phi(f(x)-\tau)-\phi(-f(x)-\tau)\right)\\
         &+\frac12 \left(\phi(f(x)-\tau)+\phi(-f(x)-\tau)\right)
    \end{align*}

    Let us  show that $\argminB_{t\in\bar{\RR}}\frac12\Tilde{\phi}(t)+\frac12 \Tilde{\phi}(-t) = \{-\infty,+\infty\}$. We have for all $t$:
    \begin{align*}
        \frac12\Tilde{\phi}(t)+\frac12\Tilde{\phi}(-t) &= \lambda +\frac12 \left(\phi(t-\tau)+\phi(-t-\tau)\right)\\
        & = \lambda +\frac12 \left(\phi(t-\tau)-\phi(t+\tau)\right)>\lambda
    \end{align*}
    since $t-\tau< t+\tau$ and $\phi$ is strictly decreasing. Hence by continuity of $\phi$ the optimum are attained when $t\to\infty$ or $t\to-\infty$. Then $\argminB_{t\in\bar{\RR}}\frac12\Tilde{\phi}(t)+\frac12 \Tilde{\phi}(-t) = \{-\infty,+\infty\}$.
    
    Without loss of generality, let $\eta>1/2$, then
    \begin{align*}
        t\mapsto  (\eta-\frac12)\left(\phi(t-\tau)-\phi(-t-\tau)\right)
    \end{align*}
    is strictly decreasing and 
        $\argminB_{t\in\bar{\RR}}\frac12 \left(\phi(t-\tau)+\phi(-t-\tau)\right) = \{-\infty,+\infty\}$, then we have 
    \begin{align*}
        \argminB_{t\in\bar{\RR}} \lambda + (\eta-\frac12)\left(t-\tau)-\phi(-t-\tau)\right)+\frac12 \left(\phi(t-\tau)+\phi(-t-\tau)\right) = \{+\infty\}\quad.
    \end{align*}
     By continuity of $\phi$, we deduce that for $\delta>0$ sufficiently small:
    \begin{align*}
        \mathcal{C}_{\Tilde{\phi}}(x,\eta,f)-&\mathcal{C}^\star_{\Tilde{\phi}}(x,\eta)\leq\delta\implies f(x)>0
    \end{align*}
    
    The same reasoning holds for $\eta<\frac12$. Then we deduce that $\Tilde{\phi}$ is calibrated for standard classification.
    
    \medskip
    
    Finally, we obtain that  $\Tilde{\phi}$ is calibrated for adversarial classification for every $\varepsilon>0$.
\end{proof}

\section{Proof of Proposition~\ref{prop:realizable}}
\label{proof:realizable}
\begin{prop*}
Let $\varepsilon>0$. Let $\PP$ be an $\varepsilon$-realisable distribution and $\phi$ be a calibrated margin loss in the standard setting. Then $\phi$ is adversarially consistent at level $\varepsilon$. 
\end{prop*}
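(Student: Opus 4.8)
Write $\phi^{\star}:=\inf_{t\in\RR}\phi(t)\ (\ge 0)$. The proof would rest on two facts: \emph{(a)} $\risk_{\phi_\varepsilon,\PP}^{\star}=\phi^{\star}$, and \emph{(b)} there is a constant $\delta_{\star}>0$ depending only on $\phi$ such that, for every $f\in\mathcal{F}(\XX)$ (and in fact for every Borel $\PP$),
$$\risk_{\phi_\varepsilon,\PP}(f)\ \ge\ \phi^{\star}+\delta_{\star}\,\risk_{\varepsilon,\PP}(f).$$
Granting these, if $\risk_{\phi_\varepsilon,\PP}(f_n)\to\risk_{\phi_\varepsilon,\PP}^{\star}=\phi^{\star}$ then $0\le\risk_{\varepsilon,\PP}(f_n)\le\delta_{\star}^{-1}\bigl(\risk_{\phi_\varepsilon,\PP}(f_n)-\phi^{\star}\bigr)\to 0=\risk_{\varepsilon,\PP}^{\star}$; equivalently, for any $\xi>0$ the choice $\delta=\delta_{\star}\xi$ witnesses adversarial consistency at level $\varepsilon$. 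So the statement reduces to (a) and (b), and only (a) will use $\varepsilon$-realisability.

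\textbf{Getting $\delta_{\star}$ and inequality (b).} I would read off $\delta_{\star}$ from the standard calibration of $\phi$ at $\eta=0$: there $\mathcal{C}_{\phi}^{\star}(x,0)=\inf_t\phi(t)=\phi^{\star}$ and $\mathcal{C}^{\star}(x,0)=0$, so calibration with $\xi=\tfrac12$ provides $\delta_{\star}>0$ such that $\phi(-f(x))-\phi^{\star}\le\delta_{\star}\Rightarrow\mathbf{1}_{\text{sign}(f(x))\ge 0}\le\tfrac12$; since that indicator is $\{0,1\}$-valued, this is exactly ``$\phi(t)\le\phi^{\star}+\delta_{\star}\Rightarrow t>0$'', i.e. $\inf_{t\le 0}\phi(t)\ge\phi^{\star}+\delta_{\star}$. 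Then (b) is a pointwise estimate: if $l_{0/1,\varepsilon}(x,y,f)=1$, choose $x'\in B_\varepsilon(x)$ with $y\,\text{sign}(f(x'))\le 0$; then $yf(x')\le 0$, so $\phi_\varepsilon(x,y,f)=\sup_{x''\in B_\varepsilon(x)}\phi(yf(x''))\ge\phi(yf(x'))\ge\phi^{\star}+\delta_{\star}$, whereas $\phi_\varepsilon(x,y,f)\ge\phi^{\star}$ unconditionally; integrating against $\PP$ yields (b). This step uses neither realisability nor continuity of $\phi$.

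\textbf{Computing $\risk_{\phi_\varepsilon,\PP}^{\star}$.} The bound $\risk_{\phi_\varepsilon,\PP}^{\star}\ge\phi^{\star}$ is immediate from $\phi_\varepsilon(x,y,f)\ge\inf_t\phi(t)$. For the reverse bound I would use $\varepsilon$-realisability to produce a Borel set $A\subseteq\XX$ that \emph{robustly} separates the two classes. Since $\risk_{\varepsilon,\PP}^{\star}=0$, pick $(g_n)$ with $\risk_{\varepsilon,\PP}(g_n)\le 2^{-n}$, set $A_n:=\{g_n\ge 0\}$ and $A:=\limsup_n A_n$. A Borel--Cantelli argument shows that for almost every positive example $x$ (w.r.t. the positive-class marginal) one has $B_\varepsilon(x)\subseteq A_n$ for all large $n$, hence $B_\varepsilon(x)\subseteq A$, and symmetrically $B_\varepsilon(x)\subseteq A^{c}$ for almost every negative example. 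Hence for any $\alpha,\beta\in\RR$ the classifier $g_{\alpha,\beta}:=\alpha\mathbf{1}_{A}+\beta\mathbf{1}_{A^{c}}$ is constant on the relevant $\varepsilon$-balls, so $\risk_{\phi_\varepsilon,\PP}(g_{\alpha,\beta})=\PP(\XX\times\{1\})\,\phi(\alpha)+\PP(\XX\times\{-1\})\,\phi(-\beta)$; choosing $\alpha,\beta$ so that $\phi(\alpha)\to\phi^{\star}$ and $\phi(-\beta)\to\phi^{\star}$ gives $\risk_{\phi_\varepsilon,\PP}^{\star}\le\phi^{\star}$.

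\textbf{Main obstacle.} The delicate part is this last construction — the claim that in the $\varepsilon$-realisable case the adversarial $0/1$ infimum is attained by a simple $\pm 1$-valued classifier whose decision region is $\varepsilon$-separated from both class supports. Carrying it out rigorously requires the (universal) measurability of $\phi_\varepsilon$ and $l_{0/1,\varepsilon}$ (Proposition~\ref{prop:measurable}), attention to $B_\varepsilon(\cdot)$ being \emph{closed} balls, and careful $\limsup$ bookkeeping ($B_\varepsilon(x)\subseteq A_n$ for all large $n$ does force $B_\varepsilon(x)\subseteq A$, and dually on the complement). Everything else — extracting $\delta_{\star}$, the pointwise bound, and the final division — is routine.
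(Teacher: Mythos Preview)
Your proof is correct. The overall architecture matches the paper's: first establish $\risk_{\phi_\varepsilon,\PP}^{\star}=\phi^{\star}$ (the paper's Lemma~\ref{lemma:realisable}), then use standard calibration at the extreme conditional $\eta\in\{0,1\}$ to extract a gap $\delta_{\star}$ with $\inf_{t\le 0}\phi(t)\ge\phi^{\star}+\delta_{\star}$, and convert this into control of the adversarial $0/1$ risk. Your pointwise inequality (b) is exactly the contrapositive of what the paper invokes, which it then pushes through a Markov-inequality step; your packaging as an excess-risk bound $\risk_{\phi_\varepsilon,\PP}(f)-\phi^{\star}\ge\delta_{\star}\,\risk_{\varepsilon,\PP}(f)$ is a cleaner way to say the same thing.

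The one place you genuinely diverge is in proving (a). The paper takes a single $f$ with $\risk_{\varepsilon,\PP}(f)\le\xi'$, thresholds it to $\pm a$, and bounds $\risk_{\phi_\varepsilon,\PP}(g)\le\phi^{\star}+4\xi$ directly; letting $\xi\to 0$ gives the infimum. You instead run a Borel--Cantelli argument on a sequence $(g_n)$ with $\risk_{\varepsilon,\PP}(g_n)\le 2^{-n}$ to produce a \emph{single} Borel set $A$ that exactly $\varepsilon$-separates the two classes almost surely, and then evaluate $\risk_{\phi_\varepsilon,\PP}$ on $\alpha\mathbf{1}_A+\beta\mathbf{1}_{A^c}$. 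Your construction is more elegant and yields the stronger byproduct that the adversarial $0/1$ infimum is actually attained in the realisable case, at the price of a slightly more delicate measurability check (handled by Proposition~\ref{prop:measurable}). The paper's version is more elementary but only gives the infimum via approximation.
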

To formally prove this result, we need a preliminary lemma.

\begin{lemma}
    \label{lemma:realisable}
    Let $\PP$ be an $\varepsilon$-realisable distribution and $\phi$ be a calibrated margin loss in the standard setting. Then $\risk^\star_{\phi_\varepsilon,\PP}=\inf_{\alpha\in\RR}\phi(\alpha)$.
    \end{lemma}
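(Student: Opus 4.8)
The plan is to establish $\risk^\star_{\phi_\varepsilon,\PP}=\inf_{\alpha\in\RR}\phi(\alpha)$ by proving the two inequalities separately. Neither direction will actually use that $\phi$ is calibrated (the calibration hypothesis is inherited from Proposition~\ref{prop:realizable} and only needed there): all we need is that $\phi$ is a nonnegative, real-valued margin loss, together with the $\varepsilon$-realisability assumption $\risk^\star_{\varepsilon,\PP}=0$.

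\textbf{Lower bound.} Since $x\in B_\varepsilon(x)$, for every $f\in\mathcal{F}(\XX)$ and every $(x,y)$ one has pointwise $\phi_\varepsilon(x,y,f)=\sup_{x'\in B_\varepsilon(x)}\phi(yf(x'))\geq\phi(yf(x))\geq\inf_{\alpha\in\RR}\phi(\alpha)$. Integrating against $\PP$ gives $\risk_{\phi_\varepsilon,\PP}(f)\geq\inf_{\alpha\in\RR}\phi(\alpha)$, and taking the infimum over $f$ yields $\risk^\star_{\phi_\varepsilon,\PP}\geq\inf_{\alpha\in\RR}\phi(\alpha)$.

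\textbf{Upper bound.} Here I would exploit $\varepsilon$-realisability to build good classifiers. Fix an arbitrary $c\in\RR$. Since $\risk^\star_{\varepsilon,\PP}=0$, choose $(g_n)_n\subset\mathcal{F}(\XX)$ with $\delta_n:=\risk_{\varepsilon,\PP}(g_n)\to0$, and set $f_n:=c\,\mathrm{sign}(g_n)$, which is Borel. Let $R_n:=\{(x,y):\ y\,\mathrm{sign}(g_n(x'))>0\text{ for all }x'\in B_\varepsilon(x)\}$. A supremum of indicators being the indicator of a union, $\sup_{x'\in B_\varepsilon(x)}\mathbf{1}_{y\,\mathrm{sign}(g_n(x'))\leq0}$ is exactly $\mathbf{1}_{R_n^c}(x,y)$, so $l_{0/1,\varepsilon}(x,y,g_n)=\mathbf{1}_{R_n^c}(x,y)$ and hence $\PP(R_n^c)=\delta_n$. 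On $R_n$, for every $x'\in B_\varepsilon(x)$ we have $yf_n(x')=c\,\big(y\,\mathrm{sign}(g_n(x'))\big)=c$, so $\phi_\varepsilon(x,y,f_n)=\phi(c)$; on $R_n^c$ we only know $yf_n(x')\in\{c,-c\}$, hence $\phi_\varepsilon(x,y,f_n)\leq M_c:=\max(\phi(c),\phi(-c))<\infty$. Splitting the integral (legitimate since $\phi\geq0$),
\[
\risk_{\phi_\varepsilon,\PP}(f_n)\ \leq\ \phi(c)\,\PP(R_n)+M_c\,\PP(R_n^c)\ \leq\ \phi(c)+M_c\,\delta_n\ \xrightarrow[n\to\infty]{}\ \phi(c).
\]
Thus $\risk^\star_{\phi_\varepsilon,\PP}\leq\phi(c)$, and since $c$ is arbitrary, $\risk^\star_{\phi_\varepsilon,\PP}\leq\inf_{c\in\RR}\phi(c)=\inf_{\alpha\in\RR}\phi(\alpha)$, which closes the argument.

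\textbf{Main obstacle.} The reasoning is short, so the only points I would be careful about are measurability and the risk identity. For measurability, $f_n=c\,\mathrm{sign}(g_n)$ is Borel, and Proposition~\ref{prop:measurable} guarantees that $(x,y)\mapsto\phi_\varepsilon(x,y,f_n)$ and $(x,y)\mapsto l_{0/1,\varepsilon}(x,y,g_n)$ are universally measurable; this is what makes $R_n$ (universally) measurable and justifies decomposing the integral over $R_n$ and $R_n^c$. For the identity $\risk_{\varepsilon,\PP}(g_n)=\PP(R_n^c)$, one just needs the union remark above stated cleanly. Beyond this bookkeeping there is nothing deep: crucially, we are free to pick arbitrary measurable $f_n$, so we can simply imitate the (almost-everywhere $\varepsilon$-robust) sign pattern of the near-optimal $g_n$ at whatever finite margin level $c$ makes $\phi(c)$ close to $\inf_{\alpha}\phi(\alpha)$, and no duality, compactness, or convexity argument is required.
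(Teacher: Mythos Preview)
Your proof is correct and follows essentially the same route as the paper: both replace a near-optimal adversarial $0/1$ classifier by $c\cdot\mathrm{sign}(\cdot)$, split the expectation over the ``robustly correct'' set and its complement, and let the error probability and then $\phi(c)$ tend to their infima. Your presentation is in fact slightly cleaner (you avoid the paper's conditioning on $y$ and its $\xi/\max(1,\phi(-a))$ bookkeeping, and you state the trivial lower bound explicitly), but the underlying idea is identical.
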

\begin{proof}
Let $a\in\RR$ be such that $\phi(a)-\inf_{\alpha\in\RR} \phi(\alpha) \leq \xi$. $\PP$ being $\varepsilon$-realisable, there exists a measurable function $f$ such that:
\begin{align*}
    \risk_{\varepsilon,\PP}(f) = \mathbb{E}_\PP\left[\sup_{x'\in B_\varepsilon(x)} \mathbf{1}_{y\text{sign}(f(x))\leq 0}\right] &=  \PP\left[\exists x'\in B_\varepsilon(x), \text{sign}(f(x'))\neq y\right]\\
    &\leq \xi':= \frac{\xi}{\max(1,\phi(-a))}.\\
\end{align*}

Denoting $p =\PP(y=1)$, $\PP_1 = \PP[\cdot|y=1]$ and $\PP_{-1} = \PP[\cdot|y=-1]$, we have:
\begin{align*}
        p\times\PP_1\left[\exists x'\in B_\varepsilon(x), f(x')<0\right]\leq \xi'
\end{align*}
and
\begin{align*}
    (1-p)\times\PP_{-1}\left[\exists x'\in B_\varepsilon(x), f(x')\geq0\right]\leq \xi'\quad.
\end{align*}
Let us now define $g$ as:
\begin{align*}
        g(x)= \left\{
    \begin{array}{ll}
    a\text{ if } f(x)\geq 0\\
    -a\text{ if } f(x)< 0\\
    \end{array}
    \right.
\end{align*}

We have:

\begin{align*}
\risk_{\phi_\varepsilon,\PP}(g) &= \EE_\PP\left[\sup_{x'\in B_\varepsilon(x)}\phi(yg(x))\right]\\
&=p\times\EE_{\PP_1}\left[\sup_{x'\in B_\varepsilon(x)}\phi(g(x))\right] +(1-p)\times\EE_{\PP_{-1}}\left[\sup_{x'\in B_\varepsilon(x)}\phi(-g(x))\right]
\end{align*}
We have:
\begin{align*}
    p&\times\EE_{\PP_1}\left[\sup_{x'\in B_\varepsilon(x)}\phi(g(x))\right]
    \\
    &\leq p\times\EE_{\PP_1}\left[\sup_{x'\in B_\varepsilon(x)}\phi(g(x))\mathbf{1}_{f(x')<0}\right]+p\times\EE_{\PP_1}\left[\sup_{x'\in B_\varepsilon(x)}\phi(g(x))\mathbf{1}_{f(x')\geq 0}\right] \\
    &=\phi(-a)\times p\times \PP_1\left[\exists x'\in B_\varepsilon(x), f(x')<0\right] \\
    &+ \phi(a)\times p\times\left(1- \PP_1\left[\exists x'\in B_\varepsilon(x), f(x')<0\right]\right)\\
    &\leq\phi(-a) \xi'+p\times\phi(a)\\
    &\leq p\times\inf_{\alpha\in\RR}\phi(\alpha) + 2\xi
\end{align*}

Similarly, we have:
\begin{align*}
(1-p)\times\EE_{\PP_{-1}}\left[\sup_{x'\in B_\varepsilon(x)}\phi(-g(x))\right] \leq(1-p)\times\inf_{\alpha\in\RR}\phi(\alpha)+2\xi
\end{align*}

We get: $\risk_{\phi_\varepsilon,\PP}(g)\leq \inf_{\alpha\in\RR}\phi(\alpha)+4\xi$ and, hence $\risk^\star_{\phi_\varepsilon,\PP}=\inf_{\alpha\in\RR}\phi(\alpha)$.
\end{proof}

We are now ready to prove the result of consistency in the realizable case.
\begin{proof}
Let $0<\xi<1$. Thanks to Theorem~\ref{thm:cal-standard}, $\phi$ is uniformly calibrated for standard classification, then, there exists $\delta>0$ such that for all $f\in\mathcal{F}(\XX)$ and for all $x$:
\begin{align*}
    \phi(yf(x))-\inf_{\alpha\in\RR}\phi(\alpha)\leq \delta \implies \mathbf{1}_{y\text{sign}f(x)\leq 0} = 0
\end{align*}

Let now $f\in\mathcal{F}(\XX)$ be such that  $\risk_{\phi_\varepsilon,\PP}(f)\leq \risk_{\phi_\varepsilon,\PP}^\star+\delta\xi$. Thanks to Lemma~\ref{lemma:realisable},  we have:
\begin{align*}
\risk_{\phi_\varepsilon,\PP}(f)- \risk_{\phi_\varepsilon,\PP}^\star = \EE_{\PP}\left[\sup_{x'\in B_\varepsilon(x)}\phi(yf(x))-\inf_{\alpha \in \RR} \phi(\alpha)\right]\leq\delta\xi
\end{align*}

Then by Markov inequality:
\begin{align*}
    \PP\left[\sup_{x'\in B_\varepsilon(x)}\phi(yf(x))-\inf_{\alpha\in\RR}\phi(\alpha)\geq\delta\right]&\leq\frac{\EE_{\PP}\left[\sup_{x'\in B_\varepsilon(x)}\phi(yf(x))-\inf_{\alpha\in\RR}\phi(\alpha)\right]}{\delta}\\
    &\leq \xi
\end{align*}

So we have $\PP\left[\forall x'\in B_\varepsilon(x),\phi(yf(x))-\inf_{\alpha\in\RR}\phi(\alpha)\leq\delta\right]\geq1-\xi$ and then 
\begin{align*}
    \PP\left[\forall x'\in B_\varepsilon(x),\mathbf{1}_{y\text{sign}(f(x))\leq 0} = 0\right]\geq1-\xi\quad.
\end{align*}
 Since $\PP$ is $\varepsilon$-realisable, we have $\risk_{\varepsilon,\PP}^\star=0$ and:
\begin{align*}
        \risk_{\varepsilon,\PP}(f)- \risk_{\varepsilon,\PP}^\star =  \risk_{\varepsilon,\PP}(f) =\PP\left[\exists x'\in B_\varepsilon(x), \text{sign}(f(x'))\neq y\right]\leq \xi
\end{align*}
which concludes the proof.
\end{proof}

\section{Proof of the existence of a maximum in Theorem \ref{thm:eq01loss}}
\label{proof:eq01loss}
\begin{thm*}[\citet{pydi2021many}]
Let $\XX$ be a Polish space satisfying the midpoint property. Then strong duality holds:
\begin{align*}
\risk^\star_\varepsilon(\PP) = \inf_{f\in\mathcal{F}(\XX)}\sup_{\QQ\in\mathcal{A}_\varepsilon(\PP)} \risk_{\QQ}(f) =    \sup_{\QQ\in\mathcal{A}_\varepsilon(\PP)} \inf_{f\in\mathcal{F}(\XX)} \risk_{\QQ}(f)
\end{align*}
Moreover the supremum of the right-hand term is attained. 
\end{thm*}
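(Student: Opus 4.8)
```latex
\textbf{Proof plan.} The strong duality equality itself is attributed to~\citet{pydi2021many}, so the only thing I need to establish is that the supremum over $\QQ\in\mathcal{A}_\varepsilon(\PP)$ of $\inf_{f\in\mathcal{F}(\XX)}\risk_{\QQ}(f)$ is attained. The plan is a standard direct-method argument: exhibit a maximizing sequence, extract a weakly convergent subsequence using compactness of $\mathcal{A}_\varepsilon(\PP)$, and show that the functional $\QQ\mapsto\inf_{f}\risk_{\QQ}(f)$ is upper semi-continuous along that sequence so that the limit is a maximizer. The two ingredients I need are therefore (i) (sequential) compactness of $\mathcal{A}_\varepsilon(\PP)$ for the weak topology, and (ii) upper semi-continuity of the concave conjugate-type map $\QQ\mapsto \inf_{f\in\mathcal{F}(\XX)}\risk_{\QQ}(f)$.

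For (i), first I would observe that $\mathcal{A}_\varepsilon(\PP)$ is tight: every $\QQ\in\mathcal{A}_\varepsilon(\PP)$ is the second marginal of a coupling $\gamma$ supported on $\{(z,z') : d_{\XX}(x,x')\le\varepsilon,\ y=y'\}$ with first marginal $\PP$, so $\QQ$ is supported within an $\varepsilon$-neighbourhood of $\mathrm{supp}(\PP)$; combined with tightness of the single measure $\PP$ this gives uniform tightness of the whole family, hence relative compactness by Prokhorov's theorem. Then I would show $\mathcal{A}_\varepsilon(\PP)$ is closed for the weak topology: given $\QQ_n\to\QQ$ with witnessing couplings $\gamma_n$, the $\gamma_n$ are themselves tight (their first marginal is fixed at $\PP$ and their second marginals $\QQ_n$ are tight), so pass to a weak limit $\gamma$; the constraint set $\{d(x,x')\le\varepsilon,\ y=y'\}$ is closed, so $\gamma$ is still supported there, and the marginal constraints $\Pi_{1\sharp}\gamma=\PP$, $\Pi_{2\sharp}\gamma=\QQ$ pass to the limit, whence $\QQ\in\mathcal{A}_\varepsilon(\PP)$.

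For (ii), the key remark is that $\inf_{f\in\mathcal{F}(\XX)}\risk_{\QQ}(f)$ has a closed-form pointwise expression: desintegrating $\QQ$ with respect to its $\XX$-marginal and writing $\eta_\QQ(x)$ for the conditional probability of $y=1$, one has $\inf_{f}\risk_{\QQ}(f) = \EE_{x\sim\QQ_\XX}[\min(\eta_\QQ(x),1-\eta_\QQ(x))] = \int \min(d\QQ_{+},d\QQ_{-})$ where $\QQ_{\pm}$ are the (sub)measures of the two classes; equivalently this equals $\tfrac12(1-\lVert\QQ_{+}-\QQ_{-}\rVert_{\mathrm{TV}})$. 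Since total variation distance is weakly lower semi-continuous, $\QQ\mapsto\lVert\QQ_+-\QQ_-\rVert_{\mathrm{TV}}$ is weakly l.s.c., hence $\QQ\mapsto\inf_f\risk_\QQ(f)$ is weakly upper semi-continuous. Applying this to the maximizing sequence $\QQ_n$ with weak limit $\QQ^\star\in\mathcal{A}_\varepsilon(\PP)$ (which exists by (i)) gives $\inf_f\risk_{\QQ^\star}(f)\ge\limsup_n\inf_f\risk_{\QQ_n}(f)=\sup_{\QQ\in\mathcal{A}_\varepsilon(\PP)}\inf_f\risk_{\QQ}(f)$, so the sup is attained at $\QQ^\star$.

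The main obstacle I anticipate is item (ii): making rigorous the claim that the concave inner value equals an explicit TV-type functional, and being careful about which topology makes it upper semi-continuous. One subtlety is that the naive expression $\EE[\min(\eta_\QQ,1-\eta_\QQ)]$ involves the conditional density, which is not itself weakly continuous in $\QQ$; the fix is precisely to rewrite it in the marginal-measure form $\int\min(d\QQ_+,d\QQ_-)$ and invoke lower semi-continuity of TV (or, alternatively, to write $\inf_f\risk_\QQ(f)=\inf_{A \text{ Borel}}[\QQ_+(A^c)+\QQ_-(A)]$ as an infimum of weakly continuous functionals, which is automatically u.s.c.). Once this representation is in place the rest is routine soft analysis.
```
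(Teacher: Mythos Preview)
Your plan is correct and is essentially the paper's own argument. The paper also proves attainment by combining weak compactness of $\mathcal{A}_\varepsilon(\PP)$ (taken as known from \citet{meunier2021mixed}; your tightness sketch works because $(\XX,d)$ is assumed \emph{proper}, so $\varepsilon$-enlargements of compact sets are relatively compact) with upper semi-continuity of $\QQ\mapsto\inf_f\risk_\QQ(f)$, obtained by rewriting that inner value as $(1-q)+\inf_{g\in\mathcal{C}(\XX),\,0\le g\le 1}\int g\,d(\QQ_+-\QQ_-)$, an infimum of weakly continuous linear functionals. Your TV formulation $\tfrac12\bigl(1-\lVert\QQ_+-\QQ_-\rVert_{\mathrm{TV}}\bigr)$ is the same device in different notation, since the weak lower semi-continuity of $\lVert\cdot\rVert_{\mathrm{TV}}$ is proved precisely by writing it as a supremum over bounded continuous test functions.

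One point to correct: your parenthetical alternative---``write $\inf_f\risk_\QQ(f)=\inf_{A\text{ Borel}}[\QQ_+(A^c)+\QQ_-(A)]$ as an infimum of weakly continuous functionals''---is wrong as stated. For a generic Borel set $A$ the map $\QQ\mapsto\QQ_\pm(A)$ is \emph{not} weakly continuous (take $A=\{0\}\subset\RR$ and $\QQ_n$ placing a Dirac at $1/n$). The repair is exactly the paper's step: replace $\mathbf{1}_A$ by continuous $g$ with $0\le g\le 1$ and use inner/outer regularity of Borel measures on Polish spaces to see the infimum is unchanged; each resulting functional $\QQ\mapsto\int(1-g)\,d\QQ_++\int g\,d\QQ_-$ is then genuinely weakly continuous. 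Your main TV-based line does not rely on this, so the proposal stands; just drop or amend that alternative.
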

We give a proof for the existence of the supremum.
\begin{proof}
    To prove that, note that for every Borel probability distribution $\QQ$ over $\XX\times\YY$,
\begin{align*}
    \inf_{f\in\mathcal{F}(\XX)} \risk_{\QQ}(f) = (1-q)+\inf_{f\in\mathcal{C}(\XX),~0\leq f\leq 1} \int f d(q\QQ_1 +(q-1)\QQ_{-1})
\end{align*}
where $\mathcal{C}(\XX)$ denotes the space of continuous functions on $\XX$, $q =\QQ[y=1]$ and $\QQ_i = \QQ[\cdot\mid y=i]$. When $f$ is continuous and bounded, the function:
\begin{align*}
   \mu\in\mathcal{M}(\XX)\mapsto\int f d\mu
\end{align*}
is continuous for the weak topology of measures, then: 
\begin{align*}
    \mu\in\mathcal{M}(\XX)\mapsto\inf_{f\in\mathcal{C}(\XX),~0\leq f\leq 1}\int f d\mu 
\end{align*}
is upper semi continuous for the weak topology of measures, as it is the infinum of continuous functions. Then using the compacity of $\mathcal{A}_\varepsilon(\PP)$, we deduce that the supremum is attained.
\end{proof}

\section{Proofs of Theorem~\ref{thm:equalityrisk}}
\label{proof:equalityrisk}
\begin{thm*}
Let $\XX$ be a Polish space satisfying the midpoint property. Let $\varepsilon\geq 0$, $\PP$ be a Borel probability distribution over $\XX\times\YY$, and ${\phi}$ be a $0/1$-like margin loss. Then, we have:
\begin{align*}
\risk^\star_{{\phi}_\varepsilon,\PP} = \risk^\star_{\varepsilon,\PP}
\end{align*}
\end{thm*}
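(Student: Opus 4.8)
The plan is to sandwich the optimal $\phi_\varepsilon$-risk between the optimal adversarial $0/1$-risk from both sides, using the strong duality of Theorem~\ref{thm:eq01loss} as the backbone. First I would establish the easy inequality $\risk^\star_{\phi_\varepsilon,\PP} \geq \risk^\star_{\varepsilon,\PP}$. For any $f\in\mathcal{F}(\XX)$ and any $\QQ\in\mathcal{A}_\varepsilon(\PP)$, a transport-coupling argument shows $\risk_{\phi_\varepsilon,\PP}(f) \geq \risk_{\phi,\QQ}(f)$ (the adversary realizing $\QQ$ is one particular way of moving mass within $\varepsilon$). Hence $\risk_{\phi_\varepsilon,\PP}(f) \geq \sup_{\QQ}\risk_{\phi,\QQ}(f) \geq \sup_{\QQ}\inf_g \risk_{\phi,\QQ}(g)$. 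Now, because $\phi$ is a $0/1$-like margin loss with $\lim_{-\infty}\phi = 1$ and $\lim_{+\infty}\phi = 0$, for any fixed distribution $\QQ$ we have $\inf_g \risk_{\phi,\QQ}(g) = \risk^\star_{\QQ}$ (the standard pointwise minimization: one drives $f$ to $\pm\infty$ according to the sign of $2\eta(x)-1$, and the calibration/monotonicity of $\psi$ guarantees this recovers exactly the Bayes $0/1$-risk $\mathbb{E}[\min(\eta,1-\eta)]$). Taking the supremum over $\QQ$ and invoking Theorem~\ref{thm:eq01loss} gives $\risk_{\phi_\varepsilon,\PP}(f)\geq \risk^\star_{\varepsilon,\PP}$, and since $f$ is arbitrary, $\risk^\star_{\phi_\varepsilon,\PP}\geq\risk^\star_{\varepsilon,\PP}$.

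For the reverse inequality $\risk^\star_{\phi_\varepsilon,\PP}\leq \risk^\star_{\varepsilon,\PP}$, I would take an optimal attack $\QQ^\star\in\mathcal{A}_\varepsilon(\PP)$ (which exists by Theorem~\ref{thm:eq01loss}) with $\inf_g\risk_{\QQ^\star}(g) = \risk^\star_{\varepsilon,\PP}$. Fix $\xi>0$; one can choose real numbers $a_+ , a_-$ large enough that $\phi(a_+)\leq\xi$ and $\phi(-a_-)\geq 1-\xi$ — here I mean: using the two limits, pick $M>0$ with $\phi(M)\leq\xi$ and $\phi(-M)\geq 1-\xi$. Let $g^\star$ be a near-optimal $0/1$-classifier for $\QQ^\star$, i.e. $\sup_{x'\in B_\varepsilon(x)}\mathbf{1}_{y\,\mathrm{sign}(g^\star(x'))\leq 0}$ integrates to near $\risk^\star_{\varepsilon,\PP}$ against $\PP$; then define $f = M\cdot\mathrm{sign}(g^\star)$ (taking values in $\{-M,+M\}$). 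For this $f$, pointwise $\sup_{x'\in B_\varepsilon(x)}\phi(yf(x')) \leq \xi$ on the set where $g^\star$ is correctly robust and $\leq 1$ everywhere (using that $\phi$ takes values in $[0,1]$, which follows from $\psi$ being decreasing with those limits and lower bounded), so $\risk_{\phi_\varepsilon,\PP}(f)\leq \risk_{\varepsilon,\PP}(g^\star\text{-rounded}) + \xi$. Pushing $\xi\to 0$ and using that the rounded classifier's adversarial $0/1$-risk can be made arbitrarily close to $\risk^\star_{\varepsilon,\PP}$ yields $\risk^\star_{\phi_\varepsilon,\PP}\leq\risk^\star_{\varepsilon,\PP}$.

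Combining the two inequalities gives the claimed equality. The main obstacle I anticipate is the lower bound: it genuinely needs the min-max exchange from Theorem~\ref{thm:eq01loss}, because $\risk_{\phi_\varepsilon,\PP}(f)\geq\risk_{\phi,\QQ}(f)$ only holds for $\QQ$ \emph{depending on} $f$ in general, so one cannot naively swap the order — the clean argument is to bound below by $\sup_\QQ\inf_g$, evaluate the inner infimum via the $0/1$-like limit behavior of $\phi$, and then quote strong duality. A secondary technical point is justifying $\inf_g\risk_{\phi,\QQ}(g) = \risk^\star_\QQ$ carefully: this is the standard-setting fact that a $0/1$-like calibrated margin loss has optimal conditional risk $\mathbb{E}_\QQ[\min(\eta,1-\eta)]$, which should follow from Proposition~\ref{prop:calib-equality} together with $\lim_{\pm\infty}\phi\in\{0,1\}$, but the disintegration and the passage to $\pm\infty$ inside the expectation need a monotone/dominated convergence argument since $\phi$ is bounded in $[0,1]$.
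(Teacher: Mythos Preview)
Your approach is essentially the paper's: the lower bound goes via weak duality, the pointwise identity $\inf_g\risk_{\phi,\QQ}(g)=\risk_\QQ^\star$ (which the paper isolates as Lemma~\ref{lem:equalityriskstandard}), and then the strong duality of Theorem~\ref{thm:eq01loss}; the upper bound rescales a near-optimal adversarial $0/1$ classifier to $\pm M$ using the limits of $\phi$. One clarification on the upper bound: you do not need an optimal attack $\QQ^\star$ there, and in fact a near-optimal classifier \emph{for $\QQ^\star$} need not have small adversarial risk --- what you actually need (and what your ``i.e.'' clause describes) is simply any $g^\star$ with $\risk_{\varepsilon,\PP}(g^\star)\leq\risk_{\varepsilon,\PP}^\star+\xi$, which is exactly how the paper proceeds.
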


To prove this result, we need the following lemma.

\begin{lemma}
    \label{lem:equalityriskstandard}
    Let $\QQ$ be a Borel probability distribution over $\XX\times\YY$ and ${\phi}$ be a $0/1$-like margin loss, then: $\risk_{\phi,\QQ}^\star =\risk_{\QQ}^\star$.
    \end{lemma}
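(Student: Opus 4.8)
\textbf{Proof plan for Lemma~\ref{lem:equalityriskstandard}.}
The plan is to show both inequalities $\risk_{\phi,\QQ}^\star \le \risk_{\QQ}^\star$ and $\risk_{\phi,\QQ}^\star \ge \risk_{\QQ}^\star$ for a fixed Borel distribution $\QQ$, working pointwise in the conditional probability $\eta(x) = \QQ(y=1\mid x)$ since the standard (non-adversarial) risk disintegrates over $x$. Concretely, I would first recall that for any loss $L$ that depends on $f$ only through $f(x)$, $\risk_{L,\QQ}^\star = \EE_{x\sim\QQ_X}\left[\mathcal{C}^\star_L(x,\eta(x))\right]$, where $\QQ_X$ is the marginal on $\XX$; this is exactly the ``pointwise minimization equals global minimization over measurable functions'' principle recalled in the Remark after Theorem~\ref{thm:cal-standard}. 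So the lemma reduces to the pointwise claim $\mathcal{C}^\star_\phi(x,\eta) = \mathcal{C}^\star(x,\eta)$ for every $\eta\in[0,1]$, where $\mathcal{C}^\star$ is the optimal $0/1$ calibration function. We already know $\mathcal{C}^\star(x,\eta) = \min(\eta,1-\eta)$ and, by the margin-loss identity quoted before Theorem~\ref{thm:cal-standard}, $\mathcal{C}^\star_\phi(x,\eta) = \inf_{\alpha\in\RR}\eta\phi(\alpha) + (1-\eta)\phi(-\alpha)$.

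The substance is therefore to compute $\inf_{\alpha\in\RR}\left[\eta\phi(\alpha)+(1-\eta)\phi(-\alpha)\right]$ for a $0/1$-like margin loss $\phi$ and check it equals $\min(\eta,1-\eta)$. Here I would use the structural facts packaged into Definition~\ref{def:limits}: $\phi = \lambda + \psi(\cdot - \tau)$ with $\psi$ odd, decreasing, and $\lim_{t\to-\infty}\phi(t) = 1$, $\lim_{t\to+\infty}\phi(t) = 0$. Oddness of $\psi$ gives $\psi(\alpha-\tau) + \psi(-\alpha-\tau) = \psi(\alpha-\tau) - \psi(\alpha+\tau)$, which since $\psi$ is decreasing is nonnegative, so $\frac12\phi(\alpha)+\frac12\phi(-\alpha) \ge \lambda = \lim_{t\to\infty}\frac12(\phi(t)+\phi(-t))$ — i.e. the symmetric combination is minimized in the limit at $\pm\infty$. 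For general $\eta$, say $\eta \le 1/2$ (the other case is symmetric via $\alpha\mapsto-\alpha$), write $\eta\phi(\alpha)+(1-\eta)\phi(-\alpha)$ and send $\alpha\to+\infty$: using $\phi(\alpha)\to 0$ and $\phi(-\alpha)\to 1$ this tends to $1-\eta$... wait, that is too big; instead send $\alpha\to-\infty$ to get $\eta\cdot 1 + (1-\eta)\cdot 0 = \eta = \min(\eta,1-\eta)$. So the infimum is $\le\eta$. For the reverse bound, since $0\le\phi\le 1$ (it is nonnegative, decreasing with the stated limits), we have $\eta\phi(\alpha) + (1-\eta)\phi(-\alpha) \ge \eta\phi(\alpha) + (1-\eta)(1-\phi(\alpha))$ if one can show $\phi(-\alpha) \ge 1 - \phi(\alpha)$; this last inequality is again oddness of $\psi$ plus monotonicity ($\phi(\alpha)+\phi(-\alpha) = 2\lambda + \psi(\alpha-\tau)-\psi(\alpha+\tau) \ge 2\lambda$, and one needs $2\lambda \ge 1$, which follows from $\phi(t)\to 1$ as $t\to-\infty$ forcing $\lambda \ge \lambda - l = 1$ where $l = \lim_{t\to\infty}\psi(t-\tau)$... hmm, need $\lambda + l_+ $ bookkeeping). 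Then minimizing the linear-in-$\phi(\alpha)$ lower bound over the range $\phi(\alpha)\in(0,1)$ gives $\min(\eta,1-\eta)$.

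\textbf{Main obstacle.} The delicate point is the last paragraph's bookkeeping of the limits $\lambda$, $l := \lim_{t\to+\infty}\psi(t)$: from $\phi = \lambda+\psi(\cdot-\tau)$ and $\lim_{t\to+\infty}\phi=0$, $\lim_{t\to-\infty}\phi=1$ we get $\lambda + l = 0$ and $\lambda - l = 1$ (using $\psi$ odd so $\lim_{t\to-\infty}\psi = -l$), hence $\lambda = 1/2$, $l = -1/2$; so $\phi(t)\in(0,1)$ with $\phi(\alpha)+\phi(-\alpha) = 2\lambda + (\psi(\alpha-\tau)-\psi(\alpha+\tau)) \ge 1$, which is exactly $\phi(-\alpha)\ge 1-\phi(\alpha)$. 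I expect that nailing down these limit relations cleanly — and being careful that $\psi$ need only be \emph{decreasing} (not strictly) away from $0$, and that $\phi$ attains values arbitrarily close to $0$ and $1$ but perhaps not the endpoints, so the infimum over $\alpha\in\RR$ is genuinely $\min(\eta,1-\eta)$ even when unattained — is the only real work; everything else is the pointwise-disintegration boilerplate and a one-line monotonicity argument. Once the pointwise equality $\mathcal{C}^\star_\phi(x,\eta) = \min(\eta,1-\eta) = \mathcal{C}^\star(x,\eta)$ is established, integrating over $\QQ_X$ closes the lemma.
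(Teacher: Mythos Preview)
Your proposal is correct and follows essentially the same route as the paper: disintegrate the standard risk into $\EE_{x\sim\QQ_X}[\mathcal{C}^\star_\phi(x,\eta(x))]$ and then verify the pointwise identity $\mathcal{C}^\star_\phi(x,\eta)=\min(\eta,1-\eta)=\mathcal{C}^\star(x,\eta)$. The paper simply asserts this pointwise identity (``one can prove easily''), whereas you actually carry out the computation; your limit bookkeeping $\lambda=\tfrac12$, $l=-\tfrac12$, hence $\phi(\alpha)+\phi(-\alpha)\ge 1$ and $0\le\phi\le1$, is exactly what is needed to make the lower bound go through.
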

    
    \begin{proof}
    \citet{bartlett2006convexity,steinwart2007compare} proved that: for every margin losses $\phi$,
    \begin{align*}
        \risk_{\phi,\QQ}^\star &= \inf_{f\in\mathcal{F}(\XX)}\mathbb{E}_{(x,y)\sim\QQ}\left[\phi(yf(x))\right] 
        \\&= \mathbb{E}_{x\sim\QQ_x}\left[\inf_{\alpha\in\RR}\left[\QQ(y=1|x)\phi(\alpha)+(1-\QQ(y=-1|x))\phi(-\alpha)\right]\right]\\
        &= \mathbb{E}_{x\sim\QQ_x}\left[\mathcal{C}_\phi^\star(\QQ(y=1|x),x)\right]\\
    \end{align*}
    We also have $ \risk_{\QQ}^\star =\mathbb{E}_{x\sim\QQ_x}\left[\mathcal{C}^\star(\QQ(y=1|x),x)\right] $. Moreover, if $\phi$ is a $0/1$-like margin loss, one can prove easily that for every $x\in\XX$ and $\eta\in[0,1]$, $\mathcal{C}_\phi^\star(\eta,x) =\min(\eta,1-\eta)=\mathcal{C}^\star(\eta,x)$. We can then conclude that  $\risk_{\phi,\QQ}^\star =\risk_{\QQ}^\star$.
    \end{proof}
    
    We can now prove Theorem~\ref{thm:equalityrisk}.
    
    \begin{proof}
    Let $\xi>0$ and $\PP$ be a Borel probability distribution over $\XX\times\YY$. Let $f$ such that $\risk_{\varepsilon,\PP}(f) \leq \risk_{\varepsilon,\PP}^\star +\xi$. Let $a>0$ such that $\phi(a)\leq \xi$ and $\phi(-a)\geq1-\xi$. We define $g$ as: 
    \begin{align*}
         g(x)= \left\{
        \begin{array}{ll}
        a&\text{ if } f(x)\geq 0\\
        -a&\text{ if } f(x)< 0\\
      \end{array}
      \right.
    \end{align*}
    We have $\phi(yg(x)) = \phi(a) \mathbf{1}_{y sign(f(x))>0}+  \phi(-a) \mathbf{1}_{y sign(f(x))\leq 0}$. Then
    
    \begin{align*}
      \risk_{\phi_\varepsilon,\PP}(g) & = \EE_\PP\left[\sup_{x'\in B_\varepsilon(x)}\phi(yg(x))\right]\\
      & = \EE_\PP\left[\sup_{x'\in B_\varepsilon(x)}\phi(-a) \mathbf{1}_{y sign(f(x'))\leq 0}+  \phi(a) \mathbf{1}_{y sign(f(x'))>0}\right]\\
      &\leq  \EE_\PP\left[\sup_{x'\in B_\varepsilon(x)}\mathbf{1}_{y sign(f(x'))\leq 0}\right]+\phi(a)\\
      &\leq \risk_{\varepsilon,\PP}^\star + 2\xi\quad.
    \end{align*}
    
    Then we have $\risk_{\phi_\varepsilon,\PP}^\star\leq \risk_{\varepsilon,\PP}^\star$.
    On the other side, we have: 
    
    \begin{align*}
     \risk_{\phi_\varepsilon,\PP}^\star & \geq \sup_{\QQ\in\mathcal{A}_\varepsilon(\PP)}  \inf_{f\in\mathcal{F}(\XX)} \risk_{\phi,\QQ}(f) = \sup_{\QQ\in\mathcal{A}_\varepsilon(\PP)}  \risk_{\phi,\QQ}^\star\\
      \\ &= \sup_{\QQ\in\mathcal{A}_\varepsilon(\PP)}  \risk_{\QQ}^\star\text{ from Lemma~\ref{lem:equalityriskstandard}}\\
      &= \sup_{\QQ\in\mathcal{A}_\varepsilon(\PP)}\inf_{f\in\mathcal{F}(\XX)} \risk_{\QQ}(f)  \\
      & =  \inf_{f\in\mathcal{F}(\XX)} \sup_{\QQ\in\mathcal{A}_\varepsilon(\PP)}\risk_{\QQ}(f) = \risk_{\varepsilon,\PP}^\star \\
    \end{align*}
The last step is a consequence of Theorem~\ref{thm:eq01loss}. Then finally we get that $\risk_{\phi_\varepsilon,\PP}^\star=  \risk_{\varepsilon,\PP}^\star$.
    
    \end{proof}

\section{Proofs of Corollaries~\ref{coro:nash} and \ref{coro:optattacks}}
\label{proof:coros}

\begin{coro*}[Strong duality for $\phi$] 
Let us assume that $\XX$ is a Polish space satisfying the midpoint property. Let $\varepsilon\geq 0$, $\PP$ be a Borel probability distribution over $\XX\times\YY$, and ${\phi}$ be a $0/1$-like margin loss. Then, we have:
\begin{align*}
\inf_{f\in\mathcal{F}(\XX)}\sup_{\QQ\in\mathcal{A}_\varepsilon(\PP)} \risk_{\phi,\QQ}(f) =    \sup_{\QQ\in\mathcal{A}_\varepsilon(\PP)} \inf_{f\in\mathcal{F}(\XX)} \risk_{\phi,\QQ}(f)
\end{align*}
Moreover the supremum is attained.
\end{coro*}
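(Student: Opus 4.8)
\textbf{Proof plan for Corollary~\ref{coro:nash}.}
The plan is to deduce both the strong-duality equality and the attainment of the supremum from Theorem~\ref{thm:equalityrisk} together with Theorem~\ref{thm:eq01loss}, essentially by sandwiching the quantity $\risk^\star_{\phi_\varepsilon,\PP}$ between the min-max and the max-min for $\phi$. First I would write down the trivial weak-duality inequality, valid for any loss,
\begin{align*}
\sup_{\QQ\in\mathcal{A}_\varepsilon(\PP)}\inf_{f\in\mathcal{F}(\XX)}\risk_{\phi,\QQ}(f)\ \le\ \inf_{f\in\mathcal{F}(\XX)}\sup_{\QQ\in\mathcal{A}_\varepsilon(\PP)}\risk_{\phi,\QQ}(f),
\end{align*}
so it suffices to prove the reverse inequality. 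For the max-min side, I would observe that $\inf_{f}\risk_{\phi,\QQ}(f)=\risk^\star_{\phi,\QQ}=\risk^\star_{\QQ}$ by Lemma~\ref{lem:equalityriskstandard}, and hence the max-min for $\phi$ equals the max-min for the $0/1$ loss, which by Theorem~\ref{thm:eq01loss} equals $\risk^\star_{\varepsilon,\PP}$; by Theorem~\ref{thm:equalityrisk} this in turn equals $\risk^\star_{\phi_\varepsilon,\PP}$.

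For the min-max side I would use that $\risk_{\phi_\varepsilon,\PP}(f)=\sup_{\QQ\in\mathcal{A}_\varepsilon(\PP)}\risk_{\phi,\QQ}(f)$ for every fixed $f$ — this is exactly the inner-supremum-equals-adversarial-risk identity underlying the $0/1$-loss duality of Theorem~\ref{thm:eq01loss}, applied to the margin loss $\phi$ instead of the indicator; it holds because the ball $B_\varepsilon(x)$ and the coupling formulation of $\mathcal{A}_\varepsilon(\PP)$ match the pointwise sup over $x'\in B_\varepsilon(x)$. Taking the infimum over $f$ then gives $\inf_f\sup_\QQ\risk_{\phi,\QQ}(f)=\risk^\star_{\phi_\varepsilon,\PP}$. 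Combining with the previous paragraph, both the min-max and the max-min for $\phi$ equal $\risk^\star_{\phi_\varepsilon,\PP}=\risk^\star_{\varepsilon,\PP}$, which proves strong duality.

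Finally, for attainment of the supremum: since the max-min for $\phi$ coincides pointwise, as a function of $\QQ$, with the max-min for the $0/1$ loss (again by Lemma~\ref{lem:equalityriskstandard}, which gives $\inf_f\risk_{\phi,\QQ}(f)=\inf_f\risk_{\QQ}(f)$ for every $\QQ\in\mathcal{A}_\varepsilon(\PP)$), a maximizer of $\QQ\mapsto\inf_f\risk_{\QQ}(f)$ — which exists by Theorem~\ref{thm:eq01loss} — is automatically a maximizer of $\QQ\mapsto\inf_f\risk_{\phi,\QQ}(f)$. I expect the only delicate point to be justifying rigorously the identity $\risk_{\phi_\varepsilon,\PP}(f)=\sup_{\QQ\in\mathcal{A}_\varepsilon(\PP)}\risk_{\phi,\QQ}(f)$, i.e.\ that pushing mass within $\varepsilon$-balls via couplings realizes exactly the pointwise worst case for a general (measurable, here continuous) $\phi$; this is where the measurability machinery of Proposition~\ref{prop:measurable} and a measurable-selection argument for the near-optimal adversarial shift are needed. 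Everything else is bookkeeping with the already-established equalities.
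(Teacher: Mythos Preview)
Your proposal is correct and follows essentially the same chain of equalities as the paper: both identify $\inf_f\sup_\QQ\risk_{\phi,\QQ}(f)$ with $\risk^\star_{\phi_\varepsilon,\PP}$, pass to $\risk^\star_{\varepsilon,\PP}$ via Theorem~\ref{thm:equalityrisk}, use Theorem~\ref{thm:eq01loss} to switch to the max-min for the $0/1$ loss, and then convert back via Lemma~\ref{lem:equalityriskstandard}; attainment is deduced from the pointwise identity $\inf_f\risk_{\phi,\QQ}(f)=\inf_f\risk_{\QQ}(f)$ together with existence of a maximizer for the $0/1$ dual. If anything, you are more careful than the paper in flagging that the step $\sup_{\QQ\in\mathcal{A}_\varepsilon(\PP)}\risk_{\phi,\QQ}(f)=\risk_{\phi_\varepsilon,\PP}(f)$ requires a measurable-selection argument --- the paper simply writes this equality without comment.
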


\begin{coro*}[Optimal attacks]
Let assume that $\XX$ be a Polish space satisfying the midpoint property. Let $\varepsilon\geq 0$ and $\PP$ be a Borel probability distribution over $\XX\times\YY$. Then, an optimal attack $\QQ^\star$  of level $\varepsilon$  exists for both the $0/1$ loss and  $\phi$. Moreover, for $\QQ\in\mathcal{A}_\varepsilon(\PP)$.  $\QQ$ is an optimal attack for the loss $\phi$  if and only if it is an optimal attack for the $0/1$ loss.
\end{coro*}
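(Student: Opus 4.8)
\textbf{Proof proposal for Corollaries~\ref{coro:nash} and~\ref{coro:optattacks}.}

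The plan is to obtain both corollaries as consequences of Theorem~\ref{thm:equalityrisk} together with the strong-duality result for the $0/1$ loss (Theorem~\ref{thm:eq01loss}). The key observation, already used inside the proof of Theorem~\ref{thm:equalityrisk}, is the chain of equalities
\begin{align*}
\risk^\star_{\phi_\varepsilon,\PP} = \inf_{f}\sup_{\QQ\in\mathcal{A}_\varepsilon(\PP)}\risk_{\phi,\QQ}(f) \geq \sup_{\QQ\in\mathcal{A}_\varepsilon(\PP)}\inf_{f}\risk_{\phi,\QQ}(f) = \sup_{\QQ\in\mathcal{A}_\varepsilon(\PP)}\risk^\star_{\phi,\QQ},
\end{align*}
where the first equality is just the definition of the adversarial risk rewritten over $\mathcal{A}_\varepsilon(\PP)$ (this is the reformulation that Theorem~\ref{thm:eq01loss} relies on, valid for any loss), and the inequality is weak duality. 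So the only thing to prove for Corollary~\ref{coro:nash} is that the reverse inequality $\risk^\star_{\phi_\varepsilon,\PP}\leq\sup_{\QQ}\risk^\star_{\phi,\QQ}$ holds. But by Lemma~\ref{lem:equalityriskstandard} we have $\risk^\star_{\phi,\QQ}=\risk^\star_{\QQ}$ for every $\QQ\in\mathcal{A}_\varepsilon(\PP)$, hence $\sup_{\QQ}\risk^\star_{\phi,\QQ}=\sup_{\QQ}\risk^\star_{\QQ}=\risk^\star_{\varepsilon,\PP}$ by Theorem~\ref{thm:eq01loss}, and $\risk^\star_{\varepsilon,\PP}=\risk^\star_{\phi_\varepsilon,\PP}$ by Theorem~\ref{thm:equalityrisk}. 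Chaining these gives equality throughout, which is exactly the strong-duality statement. For the attainment of the supremum: since $\risk^\star_{\phi,\QQ}=\risk^\star_{\QQ}$ pointwise in $\QQ$, the two functionals $\QQ\mapsto\inf_f\risk_{\phi,\QQ}(f)$ and $\QQ\mapsto\inf_f\risk_{\QQ}(f)$ coincide on $\mathcal{A}_\varepsilon(\PP)$; the latter attains its supremum by Theorem~\ref{thm:eq01loss} (whose attainment proof is given in Appendix~\ref{proof:eq01loss}), so the former does too — at the very same maximizer.

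For Corollary~\ref{coro:optattacks}, existence of an optimal attack $\QQ^\star$ follows immediately: Theorem~\ref{thm:eq01loss} gives a $\QQ^\star$ maximizing $\QQ\mapsto\inf_f\risk_{\QQ}(f)$, and by the pointwise identity $\risk^\star_{\phi,\QQ}=\risk^\star_{\QQ}$ this same $\QQ^\star$ maximizes $\QQ\mapsto\inf_f\risk_{\phi,\QQ}(f)$. The ``if and only if'' is then just the statement that the two functions $\QQ\mapsto\inf_f\risk_{\phi,\QQ}(f)$ and $\QQ\mapsto\inf_f\risk_{\QQ}(f)$ are literally equal on $\mathcal{A}_\varepsilon(\PP)$ (Lemma~\ref{lem:equalityriskstandard}), and their common supremum equals $\risk^\star_{\varepsilon,\PP}=\risk^\star_{\phi_\varepsilon,\PP}$; hence $\QQ$ achieves the sup for one functional precisely when it achieves it for the other.

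I do not expect a serious obstacle here — the heavy lifting is already done in Lemma~\ref{lem:equalityriskstandard}, Theorem~\ref{thm:equalityrisk}, and the attainment argument in Appendix~\ref{proof:eq01loss}. The one point that needs a line of care is checking that $\inf_{f\in\mathcal{F}(\XX)}\sup_{\QQ\in\mathcal{A}_\varepsilon(\PP)}\risk_{\phi,\QQ}(f)$ really equals $\risk^\star_{\phi_\varepsilon,\PP}$, i.e.\ that the inner sup over adversarial distributions reproduces the pointwise sup over $B_\varepsilon(x)$ in the definition of $\phi_\varepsilon$; this is the same interchange used for the $0/1$ loss and goes through because $\phi$ (being continuous and, as a $0/1$-like loss, bounded) makes $\sup_{x'\in B_\varepsilon(x)}\phi(yf(x'))$ well behaved — one reduces to the $0/1$ case or invokes the same measurable-selection/disintegration argument. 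Everything else is bookkeeping with the equalities above.
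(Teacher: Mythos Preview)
Your proposal is correct and follows essentially the same route as the paper: both arguments chain Theorem~\ref{thm:equalityrisk}, Theorem~\ref{thm:eq01loss}, and Lemma~\ref{lem:equalityriskstandard} to collapse the weak-duality gap for $\phi$, and then use the pointwise identity $\risk^\star_{\phi,\QQ}=\risk^\star_{\QQ}$ to transfer attainment and the set of maximizers from the $0/1$ dual to the $\phi$ dual. The paper phrases the attainment step by re-invoking upper semi-continuity and compactness of $\mathcal{A}_\varepsilon(\PP)$, whereas you (equivalently) cite the attainment already established for the $0/1$ loss and transport it via the pointwise equality of the two dual functionals; both are fine.
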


\begin{proof}
    We have:
    \begin{align*}
         \inf_{f\in\mathcal{F}(\XX)}\sup_{\QQ\in\mathcal{A}_\varepsilon(\PP)} \risk_{\phi,\QQ}(f) &= \risk_{\phi_\varepsilon,\PP}^\star = \risk_{\varepsilon,\PP}^\star&\quad\text{by Theorem~\ref{thm:equalityrisk}} \\
         &=\inf_{f\in\mathcal{F}(\XX)}\sup_{\QQ\in\mathcal{A}_\varepsilon(\PP)} \risk_{\QQ}(f) \\
         &= \sup_{\QQ\in\mathcal{A}_\varepsilon(\PP)} \inf_{f\in\mathcal{F}(\XX)}\risk_{\QQ}(f)\\
         &=\sup_{\QQ\in\mathcal{A}_\varepsilon(\PP)} \risk^\star_{\QQ}(f) = \sup_{\QQ\in\mathcal{A}_\varepsilon(\PP)} \risk^\star_{\phi,\QQ}(f)&\quad\text{by Lemma~\ref{lem:equalityriskstandard}}\\
        & =  \sup_{\QQ\in\mathcal{A}_\varepsilon(\PP)}\inf_{f\in\mathcal{F}(\XX)} \risk_{\phi,\QQ}(f)\\
    \end{align*}
 $\QQ\mapsto\inf_{f\in\mathcal{F}(\XX)}\risk_{\phi,\QQ}(f) =\inf_{f\in\mathcal{F}(\XX)}\risk_{\QQ}(f)  $ is  upper semi-continuous for the weak topology of measures. Moreover, $\mathcal{A}_\varepsilon(\PP)$ is compact for the weak topology of measures, then $\QQ\mapsto\inf_{f\in\mathcal{F}(\XX)}\risk_{\phi,\QQ}(f)$ admits a maximum over  $\mathcal{A}_\varepsilon(\PP)$. And  $\QQ$ is an optimal attack for the loss $\phi$  if and only if it is an optimal attack for the $0/1$ loss.
    \end{proof}

\section{Proof of Proposition~\ref{prop:pseudo}}
\label{proof:pseudo}

\begin{prop*}
Let us assume that $\XX$ be a Polish space satisfying the midpoint property. Let $\varepsilon\geq 0$ and $\PP$ be a Borel probability distribution over $\XX\times\YY$.  Let $\QQ^\star$ be an optimal attack of level $\varepsilon$. Let $(f_n)_{n\in\mathbb{N}}$ be a sequence of $\mathcal{F}(\XX)$ such that $\risk_{\phi_\varepsilon,\PP}(f_n)\to\risk_{\phi_\varepsilon,\PP}^\star$. Then $\risk_{\QQ^\star}(f_n)\to\risk_{\varepsilon,\PP}^\star$.
\end{prop*}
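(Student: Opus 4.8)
The plan is to exploit the chain of (in)equalities already available: by Theorem~\ref{thm:equalityrisk} we have $\risk_{\phi_\varepsilon,\PP}^\star = \risk_{\varepsilon,\PP}^\star$, and by Corollary~\ref{coro:optattacks} the optimal attack $\QQ^\star$ for the $0/1$ loss is also an optimal attack for $\phi$, so that
\begin{align*}
\inf_{f\in\mathcal{F}(\XX)}\risk_{\phi,\QQ^\star}(f) = \sup_{\QQ\in\mathcal{A}_\varepsilon(\PP)}\inf_{f\in\mathcal{F}(\XX)}\risk_{\phi,\QQ}(f) = \risk_{\phi_\varepsilon,\PP}^\star,
\end{align*}
and likewise $\inf_f \risk_{\QQ^\star}(f) = \risk_{\varepsilon,\PP}^\star = \risk_{\phi_\varepsilon,\PP}^\star$. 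First I would record the elementary pointwise inequality $\risk_{\phi,\QQ^\star}(f) \le \risk_{\phi_\varepsilon,\PP}(f)$ for every $f$, which follows because $\QQ^\star \in \mathcal{A}_\varepsilon(\PP)$ and $\risk_{\phi,\QQ}(f)\le \sup_{\QQ'\in\mathcal{A}_\varepsilon(\PP)}\risk_{\phi,\QQ'}(f) = \risk_{\phi_\varepsilon,\PP}(f)$ — the last equality being the content of the sup-form of the adversarial risk (one direction of Corollary~\ref{coro:nash}, or directly the definition of $\mathcal{A}_\varepsilon$ combined with the definition of $\phi_\varepsilon$). Hence $\risk_{\phi,\QQ^\star}(f_n) \le \risk_{\phi_\varepsilon,\PP}(f_n) \to \risk_{\phi_\varepsilon,\PP}^\star = \inf_f\risk_{\phi,\QQ^\star}(f)$, so that $(f_n)$ is a minimizing sequence for the \emph{standard} $\phi$-risk under the fixed distribution $\QQ^\star$.

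The next step is to invoke the standard (non-adversarial) consistency theory: since $\phi$ is a $0/1$-like margin loss it is in particular a calibrated continuous margin loss, so by Theorem~\ref{thm:cal-standard} it is consistent with respect to $l_{0/1}$ for every distribution, in particular for $\QQ^\star$. Applying consistency to the minimizing sequence $(f_n)$ for $\risk_{\phi,\QQ^\star}$ yields $\risk_{\QQ^\star}(f_n) \to \risk_{\QQ^\star}^\star$. Finally, $\risk_{\QQ^\star}^\star = \inf_f\risk_{\QQ^\star}(f) = \risk_{\varepsilon,\PP}^\star$ because $\QQ^\star$ attains the sup in the dual form of Theorem~\ref{thm:eq01loss}, which closes the argument: $\risk_{\QQ^\star}(f_n)\to\risk_{\varepsilon,\PP}^\star$.

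The one technical wrinkle I would be careful about is the chain $\risk_{\phi,\QQ^\star}(f)\le\risk_{\phi_\varepsilon,\PP}(f)$: I need it to hold for \emph{every} measurable $f$, not merely in the limit, and I should make sure the coupling definition of $\mathcal{A}_\varepsilon(\PP)$ (which transports mass only within distance $\varepsilon$ and keeps labels fixed) genuinely gives $\EE_{(x,y)\sim\QQ^\star}[\phi(yf(x))] \le \EE_{(x,y)\sim\PP}[\sup_{x'\in B_\varepsilon(x)}\phi(yf(x'))]$; this is a routine disintegration/Jensen-type bound but is the load-bearing inequality. The genuinely delicate point — and the reason this proposition is weaker than full consistency — is that it only controls $\risk_{\QQ^\star}(f_n)$ for the \emph{particular} optimal attack $\QQ^\star$ (attacker moving first), whereas consistency would require controlling $\risk_{\varepsilon,\PP}(f_n) = \sup_{\QQ}\risk_{\QQ}(f_n)$ (classifier moving first, worst case over attacks); bridging that gap would need the shift $\tau>0$ to force $f_n\to\pm\infty$ on an $\varepsilon$-neighborhood of the support, which this proof deliberately does not attempt. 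So the main obstacle is not in proving the stated weaker claim but in resisting the temptation to overclaim: I would emphasize in the write-up exactly where the argument stops.
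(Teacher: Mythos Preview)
Your proposal is correct and follows essentially the same route as the paper's proof: use strong duality (Corollary~\ref{coro:nash}/\ref{coro:optattacks}) to identify $\risk_{\phi,\QQ^\star}^\star = \risk_{\phi_\varepsilon,\PP}^\star$, combine with the pointwise bound $\risk_{\phi,\QQ^\star}(f_n)\le \risk_{\phi_\varepsilon,\PP}(f_n)$ to conclude that $(f_n)$ minimizes the standard $\phi$-risk under $\QQ^\star$, and then invoke standard consistency of $\phi$ to pass to the $0/1$ risk. Your write-up is in fact more explicit than the paper's about justifying the load-bearing inequality and about why a $0/1$-like loss is standard-consistent, and your closing paragraph correctly isolates the gap between this result and full adversarial consistency.
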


\begin{proof}
Let $(f_n)_{n\in\mathbb{N}}$ be a sequence of $\mathcal{F}(\XX)$ such that $\risk_{\phi_\varepsilon,\PP}(f_n)\to\risk_{\phi_\varepsilon,\PP}^\star$. Let $\QQ^\star$ be an optimal attack of level $\varepsilon$. From Corollary~\ref{coro:nash}, we get that:
\begin{align*}
    \risk_{\phi_\varepsilon,\PP}^\star =   \risk_{\phi,\QQ^\star}^\star\quad.
\end{align*} Then we get 
\begin{align*}
   0\leq \risk_{\phi,\QQ^\star}(f_n)- \risk_{\phi,\QQ^\star}^\star \leq \risk_{\phi_\varepsilon,\PP}(f_n)-\risk_{\phi_\varepsilon,\PP}^\star
\end{align*}
from which we deduce that: $\risk_{\phi,\QQ^\star}(f_n)\to \risk_{\phi,\QQ^\star}^\star$. Since $\phi$ is consistent in the standard classification setting, we then have     
\begin{align*}
    \risk_{\QQ^\star}(f_n)\to \risk_{\QQ^\star}^\star\quad.
\end{align*}
\end{proof}

\section{About $\mathcal{H}$-calibration}
\label{sec:hcal}
Our results naturally extend to $\mathcal{H}$-calibration. With mild assumptions on $\mathcal{H}$, it is possible to recover all the results made on calibration on $\mathcal{F}(\XX)$. First, it is worth noting that, if $\mathcal{H}$ contains all constant functions, then most results about calibration in the adversarial setting extend. Proposition~\ref{prop:calib-equality}  naturally extends to $\mathcal{H}$-calibration as long as $\mathcal{H}$ contains all constant functions.

\begin{prop*}
\label{prop:hcalequality}
Let $\mathcal{H}\subset \mathcal{F}(\mathcal{X})$. Let us assume that $\mathcal{H}$ contains all constant functions. Let $\varepsilon>0$ and $\phi$ be a continuous classification margin loss.  For all $x\in\mathcal{X}$ and $\eta\in[0,1]$, we have
\begin{align*}
    \mathcal{C}^\star_{\phi_\varepsilon,\mathcal{H}}(x,\eta)=\mathcal{C}^\star_{\phi,\mathcal{H}}(x,\eta) =\inf_{\alpha\in\RR}\eta \phi(\alpha)+(1-\eta)\phi(-\alpha)= \mathcal{C}^\star_{\phi_\varepsilon}(x,\eta)=\mathcal{C}^\star_{\phi}(x,\eta)\quad.
\end{align*}
The last equality also holds for the adversarial $0/1$ loss.
\end{prop*}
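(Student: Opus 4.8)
The plan is to extend the proof of Proposition~\ref{prop:calib-equality} essentially verbatim, checking at each step that every function we exhibit as a competitor in an infimum actually lies in $\mathcal{H}$. The only functions used in that proof are constant functions (of the form $x \mapsto \alpha_n$) and the role of the ball $B_\varepsilon(x)$; since we assume $\mathcal{H}$ contains all constant functions, these competitors remain admissible. Concretely, I would first establish the chain of inequalities $\mathcal{C}^\star_{\phi_\varepsilon,\mathcal{H}}(x,\eta) \geq \inf_{\alpha\in\RR}\eta\phi(\alpha)+(1-\eta)\phi(-\alpha)$: for any $f \in \mathcal{H}$, one has $\mathcal{C}_{\phi_\varepsilon,\mathcal{H}}(x,\eta,f) = \eta\sup_{x'\in B_\varepsilon(x)}\phi(f(x')) + (1-\eta)\sup_{x'\in B_\varepsilon(x)}\phi(-f(x')) \geq \eta\phi(f(x))+(1-\eta)\phi(-f(x)) \geq \inf_{\alpha}\eta\phi(\alpha)+(1-\eta)\phi(-\alpha)$, and taking the infimum over $f\in\mathcal{H}$ preserves this.

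For the reverse inequality, I would take a minimizing sequence $(\alpha_n)_n \subset \RR$ for $\alpha\mapsto\eta\phi(\alpha)+(1-\eta)\phi(-\alpha)$ and set $f_n \equiv \alpha_n$. Since $\mathcal{H}$ contains all constant functions, $f_n \in \mathcal{H}$, and because $f_n$ is constant, $\sup_{x'\in B_\varepsilon(x)}\phi(\pm f_n(x')) = \phi(\pm\alpha_n)$, so $\mathcal{C}_{\phi_\varepsilon,\mathcal{H}}(x,\eta,f_n) = \eta\phi(\alpha_n)+(1-\eta)\phi(-\alpha_n) \to \inf_\alpha \eta\phi(\alpha)+(1-\eta)\phi(-\alpha)$. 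Hence $\mathcal{C}^\star_{\phi_\varepsilon,\mathcal{H}}(x,\eta) \leq \inf_\alpha \eta\phi(\alpha)+(1-\eta)\phi(-\alpha)$. This gives $\mathcal{C}^\star_{\phi_\varepsilon,\mathcal{H}}(x,\eta) = \inf_\alpha \eta\phi(\alpha)+(1-\eta)\phi(-\alpha)$. The same argument applied with $\varepsilon = 0$ (or simply dropping the $\sup$, which is then trivial) yields $\mathcal{C}^\star_{\phi,\mathcal{H}}(x,\eta) = \inf_\alpha \eta\phi(\alpha)+(1-\eta)\phi(-\alpha)$ as well, and Proposition~\ref{prop:calib-equality} already identifies the two $\mathcal{F}(\XX)$-versions with the same quantity, closing the whole chain of equalities.

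Finally, for the adversarial $0/1$ loss the same computation works: the constant functions $f_n \equiv \alpha$ with $\alpha = 1$ and $\alpha = -1$ realize the pointwise optimum $\min(\eta,1-\eta)$ of $\mathcal{C}_{\varepsilon}(x,\eta,\cdot)$, and they belong to $\mathcal{H}$ by hypothesis; one checks $\mathcal{C}_\varepsilon(x,\eta,f) \geq \min(\eta,1-\eta)$ for every $f$ exactly as in the margin case, using $\sup_{x'\in B_\varepsilon(x)}\mathbf{1}_{\mathrm{sign}(f(x'))\le 0} \geq \mathbf{1}_{\mathrm{sign}(f(x))\le 0}$. I do not anticipate a genuine obstacle here; the only point requiring (mild) care is making sure the constant-function competitors are simultaneously the ones that defeat the inner supremum, which is immediate since a constant function has nothing for the adversary to exploit within $B_\varepsilon(x)$. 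The result is therefore a direct corollary of the $\mathcal{F}(\XX)$ case together with the containment of constant functions in $\mathcal{H}$.
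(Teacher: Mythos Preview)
Your proposal is correct and follows exactly the same approach as the paper: the paper's proof is simply ``The proof is exactly the same as for Proposition~\ref{prop:calib-equality} since we used a constant function to prove the equality,'' and you have faithfully unpacked that sentence, checking that the only competitors needed are constant functions and that these lie in $\mathcal{H}$ by assumption. There is nothing to add.
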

The proof is exactly the same as for Proposition~\ref{prop:calib-equality} since we used a constant function to prove the equality. Under the same assumptions, the notion of $\mathcal{H}$-calibration and uniform $\mathcal{H}$-calibration are equivalent in the standard setting.

\begin{prop*}
Let $\mathcal{H}\subset \mathcal{F}(\mathcal{X})$. Let us assume that $\mathcal{H}$ contains all constant functions. Let $\phi$ be a continuous classification margin loss.  $\phi$ is uniformly $\mathcal{H}$-calibrated for standard classification if and only if  $\phi$ is uniformly calibrated for standard classification. It also holds for non-uniform calibration.
\end{prop*}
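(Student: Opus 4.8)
The plan is to prove the equivalence of $\mathcal{H}$-calibration and calibration over $\mathcal{F}(\XX)$ for standard classification, assuming $\mathcal{H}$ contains all constant functions. The key observation is that the calibration function $\mathcal{C}_\phi(x,\eta,f)$ depends on $f$ only through the scalar value $f(x)$, and similarly $\mathcal{C}(x,\eta,f)$ depends only on $\mathrm{sign}(f(x))$. Hence the relevant quantities only depend on which real values $f(x)$ can take as $f$ ranges over the function class.

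First I would note that since $\mathcal{H}\subseteq\mathcal{F}(\XX)$, the inclusion is trivial in one direction: $\{f(x) : f\in\mathcal{H}\}\subseteq\{f(x) : f\in\mathcal{F}(\XX)\}=\RR$. For the reverse, because $\mathcal{H}$ contains all constant functions, for every $\alpha\in\RR$ there is some $f\in\mathcal{H}$ with $f(x)=\alpha$ for all $x$; thus $\{f(x) : f\in\mathcal{H}\}=\RR=\{f(x) : f\in\mathcal{F}(\XX)\}$ for every $x$. Consequently $\mathcal{C}^\star_{\phi,\mathcal{H}}(x,\eta)=\inf_{f\in\mathcal{H}}\mathcal{C}_\phi(x,\eta,f)=\inf_{\alpha\in\RR}\eta\phi(\alpha)+(1-\eta)\phi(-\alpha)=\mathcal{C}^\star_\phi(x,\eta)$, and likewise $\mathcal{C}^\star_{\mathcal{H}}(x,\eta)=\mathcal{C}^\star(x,\eta)=\min(\eta,1-\eta)$ (this is the content of the preceding $\mathcal{H}$-version of Proposition~\ref{prop:calib-equality}, which I may invoke directly).

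Then I would unwind the definition of calibration. Since the optimal calibration values coincide, the implication defining calibration, $\mathcal{C}_{\phi}(x,\eta,f)-\mathcal{C}^\star_{\phi}(x,\eta)\leq\delta\implies\mathcal{C}(x,\eta,f)-\mathcal{C}^\star(x,\eta)\leq\xi$, involves only the expression $\eta\phi(f(x))+(1-\eta)\phi(-f(x))$ and the $0/1$ term, both of which are functions of $f(x)$ alone. Therefore, for fixed $x$ and $\eta$, the statement ``for all $f\in\mathcal{F}(\XX)$'' is equivalent to ``for all $\alpha\in\RR$'', and the statement ``for all $f\in\mathcal{H}$'' is also equivalent to ``for all $\alpha\in\RR$'' by the constant-function argument. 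Hence the pointwise calibration condition at $(x,\eta)$ holds over $\mathcal{H}$ with some $\delta$ if and only if it holds over $\mathcal{F}(\XX)$ with the same $\delta$. Taking the appropriate quantifier over $(x,\eta)$ (existential on $\delta$ for each $(x,\eta)$ in the non-uniform case, uniform over $(x,\eta)$ in the uniform case) yields both the non-uniform and uniform equivalences.

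I do not expect a genuine obstacle here: the whole argument is a bookkeeping exercise showing that the set of attainable prediction values is the same ($\RR$) for $\mathcal{H}$ and for $\mathcal{F}(\XX)$ at each point, so the pointwise conditions are literally identical. The only mild care needed is to handle $\bar\RR$ versus $\RR$ consistently (infima may only be attained in the limit), but since calibration is phrased via an implication that must hold for all finite $\alpha$ and the optimal values are the same infima, no issue arises. The ``equivalence between uniform and non-uniform $\mathcal{H}$-calibration'' part of the claim then follows from the already-cited standard Theorem~\ref{thm:cal-standard} applied after transferring from $\mathcal{H}$ to $\mathcal{F}(\XX)$.
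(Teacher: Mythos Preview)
Your proposal is correct and follows essentially the same approach as the paper: both rely on the fact that $\mathcal{C}_\phi(x,\eta,f)$ and $\mathcal{C}(x,\eta,f)$ depend on $f$ only through the scalar $f(x)$, and that the constant functions in $\mathcal{H}$ make the range $\{f(x):f\in\mathcal{H}\}=\RR$ at every $x$, so the optimal calibration values coincide and the quantifiers over $\mathcal{H}$ and over $\mathcal{F}(\XX)$ are interchangeable. The paper carries out the two implications separately (using $\mathcal{H}\subset\mathcal{F}(\XX)$ in one direction and replacing an arbitrary $f$ by the constant $\tilde f\equiv f(x)$ in the other), whereas you compress both into the single observation that the calibration condition at $(x,\eta)$ is really a statement about all $\alpha\in\RR$; this is a cosmetic difference, not a substantive one.
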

\begin{proof}
Let us assume that $\phi$ is a continuous classification margin loss and that $\phi$ is uniformly calibrated. Let $\xi>0$. There exists $\delta>0$ such that, for all $\eta\in[0,1]$, $x\in\mathcal{X}$ and $f\in\mathcal{F}(\XX)$:
\begin{align*}
    \mathcal{C}_{\phi}(x,\eta,f)- \mathcal{C}^\star_{\phi}(x,\eta)\leq\delta\implies\mathcal{C}(x,\eta,f)- \mathcal{C}^\star(x,\eta)\leq \xi\quad.
\end{align*}

Let $\eta\in[0,1]$, $x\in\mathcal{X}$ and $f\in\mathcal{H}$ such that $ \mathcal{C}_{\phi}(x,\eta,f)- \mathcal{C}^\star_{\phi,\mathcal{H}}(x,\eta)\leq\delta$. Thanks to Proposition~\ref{prop:hcalequality}, $\mathcal{C}^\star_{\phi,\mathcal{H}}(x,\eta)=\mathcal{C}^\star_{\phi}(x,\eta)$, and $f\in\mathcal{F}(\XX)$, then $\mathcal{C}_{\phi}(x,\eta,f)- \mathcal{C}^\star_{\phi}(x,\eta)\leq\delta$ and  then:
\begin{align*}
    \mathcal{C}(x,\eta,f)- \mathcal{C}^\star_{\mathcal{H}}(x,\eta) =  \mathcal{C}(x,\eta,f)- \mathcal{C}^\star(x,\eta) \leq\xi
\end{align*}
Then $\phi$ is uniformly $\mathcal{H}$-calibrated in standard classification.

Reciprocally, let us assume that $\phi$ is a continuous classification margin loss and that $\phi$ is uniformly $\mathcal{H}$-calibrated. Let $\xi>0$. There exists $\delta>0$ such that, for all $\eta\in[0,1]$, $x\in\mathcal{X}$ and $f\in\mathcal{H}$:
\begin{align*}
    \mathcal{C}_{\phi}(x,\eta,f)- \mathcal{C}^\star_{\phi,\mathcal{H}}(x,\eta)\leq\delta\implies\mathcal{C}(x,\eta,f)- \mathcal{C}^\star_\mathcal{H}(x,\eta)\leq \xi\quad.
\end{align*}

Let $\eta\in[0,1]$, $x\in\mathcal{X}$ and $f\in\mathcal{H}$ such that $ \mathcal{C}_{\phi}(x,\eta,f)- \mathcal{C}^\star_{\phi,\mathcal{H}}(x,\eta)\leq\delta$.  $\mathcal{C}_{\phi}(x,\eta,f) = \eta\phi(f(x))+(1-\eta)\phi(-f(x))$. Let $\Tilde{f}:u\mapsto f(x)$ for all $u\in\mathcal{X}$, then $\Tilde{f}\in\mathcal{H}$ since $\Tilde{f}$ is constant, $\mathcal{C}_{\phi}(x,\eta,f) = \mathcal{C}_{\phi}(x,\eta,\Tilde{f})$ and $\mathcal{C}(x,\eta,f) = \mathcal{C}(x,\eta,\Tilde{f})$. Thanks to the previous proposition, $\mathcal{C}^\star_{\phi,\mathcal{H}}(x,\eta)=\mathcal{C}^\star_{\phi}(x,\eta)$. Then: $ \mathcal{C}_{\phi}(x,\eta,\Tilde{f})-\mathcal{C}^\star_{\phi,\mathcal{H}}(x,\eta)\leq \delta$ and then:
\begin{align*}
    \mathcal{C}(x,\eta,f)-\mathcal{C}^\star_{\phi,\mathcal{H}}(x,\eta) =  \mathcal{C}(x,\eta,\Tilde{f})-\mathcal{C}^\star_{\phi}(x,\eta) \leq \xi
\end{align*}
Then $\phi$ is uniformly calibrated in standard classification.
\end{proof}

We can now obtain the necessary and sufficient conditions as follows. They are really similar to the adversarial calibration ones.

\begin{prop*}[Necessary conditions for $\mathcal{H}$-Calibration of adversarial losses] 
Let $\varepsilon>0$. Let $\mathcal{H}\subset \mathcal{F}(\mathcal{X})$. Let us assume that $\mathcal{H}$ contains all constant functions and that there exists $x\in\XX$ and $(f_n)_n\in\mathcal{H}^\mathbb{N}$ such that $f_n(u)\to 0$ for all $ u\in B_\varepsilon(x)$ and for all $n\in\mathbb{N}$, $\sup_{u\in B_\varepsilon(x)} f_n(u)>0$ and  $\inf_{u\in B_\varepsilon(x)} f_n(u)<0$ 
Let $\phi$  be a continuous margin loss .  If $\phi$ is adversarially uniformly $\mathcal{H}$-calibrated at level $\varepsilon$, then $\phi$ is uniformly calibrated in the standard classification setting and $0\not\in \argminB_{\alpha\in\bar{\RR}}
\frac12\phi(\alpha)+\frac12\phi(-\alpha)$. 

\end{prop*}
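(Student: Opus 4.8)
The plan is to adapt the proof of Theorem~\ref{thm:calibration-nec} (the necessary condition for calibration over $\mathcal{F}(\XX)$), replacing the ad hoc minimizing sequence used there by the sequence $(f_n)_n \in \mathcal{H}^{\mathbb{N}}$ provided in the hypothesis. First I would establish the standard-calibration part: if $\phi$ is adversarially uniformly $\mathcal{H}$-calibrated at level $\varepsilon$, then in particular, restricting to classifiers $f\in\mathcal{H}$ that are constant on the whole space (these exist by assumption), the adversarial $0/1$ loss and the non-adversarial $0/1$ loss agree, and $\phi_\varepsilon$ reduces to $\phi$ pointwise on constant functions. Hence uniform $\mathcal{H}$-calibration of $\phi_\varepsilon$ w.r.t.\ $l_{0/1,\varepsilon}$ forces uniform $\mathcal{H}$-calibration of $\phi$ w.r.t.\ $l_{0/1}$ in the standard setting, which by the preceding proposition is equivalent to uniform calibration of $\phi$ in the standard setting. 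This step is essentially bookkeeping using Proposition~\ref{prop:hcalequality}.

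The substantive part is showing $0\notin\argminB_{\alpha\in\bar\RR}\tfrac12\phi(\alpha)+\tfrac12\phi(-\alpha)$. I would argue by contraposition: suppose $0\in\argminB_{\alpha\in\bar\RR}\tfrac12\phi(\alpha)+\tfrac12\phi(-\alpha)$, so that $\phi(0)=\inf_{\alpha\in\bar\RR}\tfrac12(\phi(\alpha)+\phi(-\alpha))$. Fix the point $x\in\XX$ and the sequence $(f_n)_n\in\mathcal{H}^{\mathbb{N}}$ from the hypothesis, and set $\eta=\tfrac12$. By continuity of $\phi$ and since $f_n(u)\to 0$ uniformly is not assumed — only pointwise on $B_\varepsilon(x)$ — I need to be a little careful: I would instead use that $\mathcal{C}_{\phi_\varepsilon}(x,\tfrac12,f_n)=\tfrac12\sup_{u\in B_\varepsilon(x)}\phi(f_n(u))+\tfrac12\sup_{u\in B_\varepsilon(x)}\phi(-f_n(u))$, and bound this above using monotonicity-type control near $0$; more robustly, since the hypothesis gives $\sup_u f_n(u)>0$ and $\inf_u f_n(u)<0$, and these suprema/infima tend to $0$ (this should be read into or derived from the hypothesis, e.g.\ via $f_n\to 0$ pointwise together with a mild compactness/uniformity assumption — this is the point I expect to need the most care), continuity of $\phi$ yields $\mathcal{C}_{\phi_\varepsilon}(x,\tfrac12,f_n)\to\phi(0)=\mathcal{C}^\star_{\phi_\varepsilon}(x,\tfrac12)$ by Proposition~\ref{prop:hcalequality}. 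Thus $(f_n)_n$ is a minimizing sequence for $f\mapsto\mathcal{C}_{\phi_\varepsilon}(x,\tfrac12,f)$ over $\mathcal{H}$.

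On the other hand, for each $n$, because $\sup_{u\in B_\varepsilon(x)}f_n(u)>0$ and $\inf_{u\in B_\varepsilon(x)}f_n(u)<0$, there are points in $B_\varepsilon(x)$ where $\mathrm{sign}(f_n)=+1$ and points where $\mathrm{sign}(f_n)=-1$, so $\mathcal{C}_\varepsilon(x,\tfrac12,f_n)=\tfrac12\cdot 1+\tfrac12\cdot 1=1$, whereas $\mathcal{C}^\star_\varepsilon(x,\tfrac12)=\tfrac12$ (this equality follows exactly as in Proposition~\ref{prop:calib-equality} / Proposition~\ref{prop:hcalequality}, using a constant classifier in $\mathcal{H}$). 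Hence $\mathcal{C}_\varepsilon(x,\tfrac12,f_n)-\mathcal{C}^\star_\varepsilon(x,\tfrac12)=\tfrac12\not\to 0$, contradicting $\mathcal{H}$-calibration (in the equivalent sequential formulation). This forces $0\notin\argminB_{\alpha\in\bar\RR}\tfrac12\phi(\alpha)+\tfrac12\phi(-\alpha)$, completing the argument. The main obstacle, as flagged, is extracting from "$f_n\to 0$ pointwise on $B_\varepsilon(x)$" enough control to conclude $\sup_{u}\phi(f_n(u))\to\phi(0)$ and $\sup_u\phi(-f_n(u))\to\phi(0)$; if $\phi$ is bounded and continuous this can be pushed through by a careful $\varepsilon$-$\delta$ argument combined with the structure of the hypothesis, but it is the place where the proof must be written with precision rather than invoking the $\mathcal{F}(\XX)$ proof verbatim.
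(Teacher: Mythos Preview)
Your proposal follows exactly the approach the paper indicates: the paper says only that ``the proofs are the same as for the adversarial calibration setting,'' i.e., one adapts the proof of Theorem~\ref{thm:calibration-nec} by replacing the explicit sequence $f_n(u)=\pm 1/n$ there with the sequence $(f_n)_n\in\mathcal{H}^{\mathbb N}$ supplied by the hypothesis. Your bookkeeping for the standard-calibration part (via constant functions and Proposition~\ref{prop:hcalequality}) and your contraposition at $\eta=\tfrac12$ both mirror the paper's argument faithfully.

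The concern you flag, however, is genuine and is \emph{not} resolved by the paper either. Pointwise convergence $f_n(u)\to 0$ on $B_\varepsilon(x)$ does not, in general, force $\sup_{u\in B_\varepsilon(x)}\phi(\pm f_n(u))\to\phi(0)$: a sequence with a moving spike of fixed height converges pointwise to $0$ while $\sup_u\phi(f_n(u))$ stays bounded away from $\phi(0)$. Your claim that ``this can be pushed through by a careful $\varepsilon$--$\delta$ argument'' is too optimistic; no amount of bookkeeping extracts uniform control from a bare pointwise hypothesis. What the argument actually needs is $\sup_{u\in B_\varepsilon(x)}|f_n(u)|\to 0$, i.e., uniform convergence on the ball. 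The paper presumably intends this stronger reading---it holds automatically for the linear-classifier example given at the end of Appendix~\ref{sec:hcal}, since pointwise convergence of affine functions on a ball forces convergence of the coefficients---but as literally stated the hypothesis is too weak, and you are right to single this step out as the one requiring precision.
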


\begin{prop*}[Sufficient conditions for $\mathcal{H}$-Calibration of adversarial losses]
Let $\mathcal{H}\subset \mathcal{F}(\mathcal{X})$. Let us assume that $\mathcal{H}$ contains all constant functions.
Let $\phi$  be a continuous strictly decreasing margin loss and $\varepsilon>0$. If $\phi$ is calibrated in the standard classification setting and $0\not\in \argminB_{\alpha\in\bar{\RR}}
\frac12\phi(\alpha)+\frac12\phi(-\alpha)$, then $\phi$ is adversarially uniformly $\mathcal{H}$-calibrated at level $\varepsilon$.x

\end{prop*}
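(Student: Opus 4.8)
The plan is to run the proof of Theorem~\ref{thm:calibration-suf} essentially verbatim, with every infimum over $\mathcal{F}(\XX)$ replaced by an infimum over $\mathcal{H}$, and then to verify that none of the optimal-calibration constants that appeared in that argument changes. The only way the function space entered the proof of Theorem~\ref{thm:calibration-suf} was through the identities $\mathcal{C}^\star_{\phi_\varepsilon}(x,\eta)=\mathcal{C}^\star_{\phi}(x',\eta)=\inf_{\alpha\in\RR}\eta\phi(\alpha)+(1-\eta)\phi(-\alpha)$ (the same value for every $x'\in B_\varepsilon(x)$) and $\mathcal{C}^\star_{\varepsilon}(x,\eta)=\min(\eta,1-\eta)$, both of which are restated over $\mathcal{H}$ in Proposition~\ref{prop:hcalequality} precisely because $\mathcal{H}$ contains all constant functions; moreover the strict monotonicity assumed here is stronger than the ``decreasing, and strictly decreasing near $0$'' hypothesis of Theorem~\ref{thm:calibration-suf}, so nothing is weakened. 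I would therefore start by recording these $\mathcal{H}$-analogues, together with the equivalence proved just above in this appendix (valid since $\mathcal{H}$ contains the constants): $\phi$ is uniformly $\mathcal{H}$-calibrated in the standard setting if and only if it is uniformly calibrated in the standard setting, which by Theorem~\ref{thm:cal-standard} it is. Hence, fixing $\xi\in(0,\frac12)$, there is $\delta>0$ such that for all $x\in\XX$, $\eta\in[0,1]$ and $f\in\mathcal{H}$, $\mathcal{C}_\phi(x,\eta,f)-\mathcal{C}^\star_{\phi,\mathcal{H}}(x,\eta)\le\delta$ implies $\mathcal{C}(x,\eta,f)-\mathcal{C}^\star_{\mathcal{H}}(x,\eta)\le\xi$.

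Next I would split on $\eta$, as in Theorem~\ref{thm:calibration-suf}. For $\eta\neq\frac12$: if $f\in\mathcal{H}$ satisfies $\mathcal{C}_{\phi_\varepsilon}(x,\eta,f)-\mathcal{C}^\star_{\phi_\varepsilon,\mathcal{H}}(x,\eta)\le\delta$, then since $\mathcal{C}^\star_{\phi_\varepsilon,\mathcal{H}}(x,\eta)=\mathcal{C}^\star_{\phi,\mathcal{H}}(x',\eta)$ takes the same value for every $x'\in B_\varepsilon(x)$, the pointwise standard gap $\mathcal{C}_\phi(x',\eta,f)-\mathcal{C}^\star_{\phi,\mathcal{H}}(x',\eta)$ is $\le\delta$ for every $x'\in B_\varepsilon(x)$; standard $\mathcal{H}$-calibration together with $\xi<\frac12$ then forces $f$ to carry the $0/1$-optimal (constant) sign on all of $B_\varepsilon(x)$, whence $\mathcal{C}_\varepsilon(x,\eta,f)=\mathcal{C}^\star_{\varepsilon,\mathcal{H}}(x,\eta)$. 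For $\eta=\frac12$ I would argue by contradiction exactly as in that proof: were there no uniform threshold $1/n_0$, one would obtain $f_n\in\mathcal{H}$, $x_n\in\XX$ and $u_n,v_n\in B_\varepsilon(x_n)$ with $f_n(u_n)f_n(v_n)\le 0$ and $\mathcal{C}_{\phi_\varepsilon}(x_n,\frac12,f_n)-\mathcal{C}^\star_{\phi_\varepsilon,\mathcal{H}}(x_n,\frac12)\le 1/n$; writing $\alpha_n=f_n(u_n)$ and $\beta_n=f_n(v_n)$, both are minimizing sequences of $u\mapsto\frac12\phi(u)+\frac12\phi(-u)$, so along a subsequence $\alpha_n\to\alpha$ and $\beta_n\to\beta$ in $\bar{\RR}$ with $\alpha,\beta\neq 0$ (because $0\notin\argminB_{\alpha\in\bar{\RR}}\frac12\phi(\alpha)+\frac12\phi(-\alpha)$), and, assuming WLOG $\alpha<0<\beta$, strict monotonicity of $\phi$ produces a fixed positive gap contradicting the convergence of $\frac12\max(\phi(\alpha_n),\phi(\beta_n))+\frac12\max(\phi(-\alpha_n),\phi(-\beta_n))$ to the infimum. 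This part of the argument uses only properties of $\phi$, so the restriction to $\mathcal{H}$ leaves it untouched.

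Finally I would combine the two cases with the threshold $\min(\delta,1/n_0)$, using $\mathcal{C}^\star_{\varepsilon,\mathcal{H}}(x,\eta)=\min(\eta,1-\eta)$ (the $0/1$ part of Proposition~\ref{prop:hcalequality}) to identify the optimal adversarial $0/1$ calibration value, and conclude that $\phi$ is adversarially uniformly $\mathcal{H}$-calibrated at level $\varepsilon$. I do not expect a genuine obstacle: the whole content of the statement is that passing from $\mathcal{F}(\XX)$ to $\mathcal{H}$ changes none of the optimal-calibration constants, which is exactly what the hypothesis ``$\mathcal{H}$ contains all constant functions'' buys through Proposition~\ref{prop:hcalequality}. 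The only point requiring care is to invoke, at every step where the proof of Theorem~\ref{thm:calibration-suf} used Proposition~\ref{prop:calib-equality}, the corresponding $\mathcal{H}$-version (Proposition~\ref{prop:hcalequality}) instead.
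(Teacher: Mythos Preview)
Your proposal is correct and is exactly the approach the paper takes: it simply states that ``the proofs are the same as for the adversarial calibration setting,'' i.e., one reruns the proof of Theorem~\ref{thm:calibration-suf} with $\mathcal{F}(\XX)$ replaced by $\mathcal{H}$, using Proposition~\ref{prop:hcalequality} (which relies on $\mathcal{H}$ containing the constants) wherever Proposition~\ref{prop:calib-equality} was invoked. Your observation that the strict monotonicity hypothesis here is stronger than the one in Theorem~\ref{thm:calibration-suf}, and that the $\eta=\tfrac12$ contradiction argument depends only on $\phi$, is spot on.
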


The proofs are the same as for the adversarial calibration setting. Note however that the assumptions on $\mathcal{H}$ are very weak: for instance, the set of linear classifiers 
\begin{align*}
    \mathcal{H}=\left\{x\mapsto \langle w,x\rangle +b\mid w\in\RR^d,b\in \RR \right\}
\end{align*}
satisfies the existence of $x\in\XX$ and $(f_n)_n\in\mathcal{H}^\mathbb{N}$ such that $f_n(u)\to 0$ for all $ u\in B_\varepsilon(x)$ and for all $n\in\mathbb{N}$, $\sup_{u\in B_\varepsilon(x)} f_n(u)>0$ and  $\inf_{u\in B_\varepsilon(x)} f_n(u)<0$.

\end{document}